\documentclass[11pt]{article}
\usepackage{graphicx,amsmath,amsfonts,amssymb,bm,cite}
\usepackage{algorithm,algorithmic}
\usepackage[tight]{subfigure}
\usepackage[mathscr]{eucal}
\usepackage{fullpage}
\usepackage[small,bf]{caption}
\usepackage{multirow}
\usepackage{url}
\usepackage{mathabx}
\setlength{\captionmargin}{30pt}

\makeatletter
\renewcommand{\@thesubfigure}{\hskip\subfiglabelskip}
\makeatother

\newcommand{\R}{\mathbb{R}}
\newcommand{\C}{\mathbb{C}}

\newcommand{\PP}{\mathbb{P}}
\newcommand{\E}{\mathbb{E}}
\newcommand{\<}{\langle}
\renewcommand{\>}{\rangle}

\newcommand{\ten}[1]{\boldsymbol{\mathscr{#1}}}
\newcommand{\vct}[1]{\boldsymbol{#1}}
\newcommand{\mtx}[1]{\boldsymbol{#1}}
\newcommand{\tub}[1]{\mathring{\vct{#1}}}
\newcommand{\tc}[1]{\vec{\vct{#1}}}

\newcommand{\rank}{\operatorname{rank}}
\newcommand{\tr}{\operatorname{Tr}}
\newcommand{\bdiag}{\operatorname{blockdiag}}
\newcommand{\vv}{\operatorname{vec}}
\newcommand{\sgn}{\operatorname{sgn}}

\newcommand{\eijk}{\tc{e}_i \ast \tub{e}_k \ast \tc{e}^{H}_j}
\newcommand{\eabc}{\tc{e}_a \ast \tub{e}_b \ast \tc{e}^{H}_c}

\newcommand{\Pg}{{\cal P}_{\Gamma}}
\newcommand{\Pgp}{{\cal P}_{\Gamma'}}
\newcommand{\Pgc}{{\cal P}_{\Gamma^\perp}}
\newcommand{\Pgs}[1]{{\cal P}_{\Gamma_{#1}}}
\newcommand{\PT}{{\cal P}_T}
\newcommand{\PTc}{{\cal P}_{T^\perp}}
\newcommand{\PO}{{\cal P}_{\Omega}}
\newcommand{\POc}{{\cal P}_{\Omega^\perp}}
\newcommand{\Pl}{{\cal P}_{\Lambda}}
\newcommand{\OpId}{\mathcal{I}}

\newcommand{\fft}{ \mbox{\tt fft} }
\newcommand{\ifft}{ \mbox{\tt ifft} }
\newcommand{\svd}{ \mbox{\tt svd} }

\newtheorem{definition}{Definition}[section]
\newtheorem{lemma}{Lemma}[section]
\newtheorem{theorem}{Theorem}[section]
\newtheorem{remark}{Remark}[section]
\newtheorem{corollary}[theorem]{Corollary}

\def \endprf{\hfill {\vrule height6pt width6pt depth0pt}\medskip}
\newenvironment{proof}{\noindent {\bf Proof} }{\endprf\par}

\pagestyle{plain}

\title{Exact Tensor Completion from Sparsely Corrupted Observations via Convex Optimization}

\author{Jonathan Q. Jiang and Michael K. Ng\\
 \vspace{-.1cm}\\
  Department of Mathematics, Hong Kong Baptist University
}

\date{\today}

\begin{document}

\maketitle

\vspace{-0.3in}

\begin{abstract}
This paper conducts a rigorous analysis for provable estimation of multidimensional arrays, in particular third-order tensors, from a random subset of its corrupted entries. Our study rests heavily on a recently proposed tensor algebraic framework in which we can obtain tensor singular value decomposition (t-SVD) that is similar to the SVD for matrices, and define a new notion of tensor rank referred to as the tubal rank. We prove that by simply solving a convex program, which minimizes a weighted combination of tubal nuclear norm, a convex surrogate for the tubal rank, and the $\ell_1$-norm, one can recover an incoherent tensor exactly with overwhelming probability, provided that its tubal rank is not too large and that the corruptions are reasonably sparse. Interestingly, our result includes the recovery guarantees for the problems of tensor completion (TC) and tensor principal component analysis (TRPCA) under the same algebraic setup as special cases. An alternating direction method of multipliers (ADMM) algorithm is presented to solve this optimization problem. Numerical experiments verify our theory and real-world applications demonstrate the effectiveness of our algorithm.
\end{abstract}

{\bf Keywords.}  Low-rank tensors, tensor completion, tensor robust PCA, convex optimization, tubal nuclear norm minimization, noncommutative Bernstein inequality, golfing scheme.

\section{Introduction}
\label{sec1}

The last decade has witnessed an explosion of academic interest in robust recovery of low-rank matrices from severely compressive, incomplete, or even corrupted measurements. The interest has been mainly aroused by the striking fact that data in science, engineering, and society, such as images, videos, texts and microarrays,  all lie on or near some low-dimensional subspaces~\cite{Eckart1936,Tenenbaum2000,BelkinN03}. This discovery says that if we stack all the data points as column vectors of a matrix, the matrix should be low-rank, or approximately so. Surprisingly, it has been shown that under some mild assumptions, efficient techniques based on convex programming to minimize the nuclear norm, as an approximation for the matrix rank, can accurately recover the low-rank matrices~\cite{Cai2010,Candes2011,Wright2013,Shang2014}, as long as their left and right singular vectors are incoherent with the matrix standard basis~\cite{Candes2011,Wright2013,Chen2013,Candes2009,Recht2011,Li2013}.

As modern information technology keeps developing rapidly, multidimensional data is becoming prevalent in many application domains, ranging from image processing~\cite{Plataniotis2000} and computer vision~\cite{Kim2009,Lui2012} to neuroscience~\cite{Miwakeichi2004} and bioinformatics~\cite{Omberg2007}. Conventional methods that rearrange the multidimensional data into matrices by some specific \lq\lq unfolding\rq\rq~or~\lq\lq flattening\rq\rq~strategies, may cause the problem of \lq\lq curse of dimensionality\rq\rq~and also damage the inherent structure, like spatial correlation, within original data. Tensor-based modeling, which can take full advantage of their multilinear structures to provide better understanding and higher precision, is a natural choice in these situations.

Mimicking their low-dimensional predecessors, tensor-based completion~\cite{Liu2013,Tomioka2010,Gandy2011} and robust principal component analysis formulations~\cite{Li2010,Goldfarb2013} have been applied to real applications with promising empirical performance. The recovery theory for low-rank tensor estimation problems, however, is far from being well-established. This is mainly attributed to that tensor rank has different definitions in the literature, each with its own drawback. The CANDECOMP/PARAFAC (CP) decomposition~\cite{Harshman1970,Carroll70} approximates a tensor as sum of rank-one outer products and the minimal number of such decomposition is defined as the CP rank. However, computing the CP rank of a specific tensor is NP-hard in general~\cite{Kolda2009}. Other kinds of decompositions, such as Tucker~\cite{Tucker66} and Tensor Train (TT)~\cite{Oseledets2011}, reveal the algebraic structure in the data with the notion of rank extended to \textit{multi-rank}, expressed as a vector of ranks of the factors. Clearly, such decompositions can not offer the best rank-$k$ approximation\footnote{Such a problem is known as the Eckart-Young-Mirsky approximation for matrix case.} of a tensor.

Unlike the existing models, the t-product and associated algebraic constructs introduced for tensors of order three~\cite{Kilmer2011} and higher~\cite{Martin2013}, provide a new framework in which we can obtain a SVD-like factorization named the tensor-SVD (t-SVD)~\cite{Kilmer2011,Martin2013}, and derive a notion of tensor rank referred to as the tubal rank~\cite{Kilmer2013}. Compared with other tensor decompositions, t-SVD has been shown to be superior in capturing the \lq\lq spatial-shifting\rq\rq~correlation that is ubiquitous in real-world data~\cite{Kilmer2011,Martin2013,Kilmer2013,Zhang2014}. Using this algebraic framework, two recent papers~\cite{Zhang2017} and~\cite{Lu2016} gives sufficient conditions for convex programming to succeed in exact recovery of low-rank tensors from incomplete (tensor completion) and grossly corrupted (tensor robust principal component analysis) observations respectively.

This paper considers a more challenging problem of learning a low-rank tensor from undersampled and possibly arbitrarily corrupted measurements. This problem arises in a wide range of important applications in which the data contain missing values and gross errors simultaneously, due to various factors such as information loss, sensor failures and software malfunctions. The reader might jump to Section~\ref{sec7} to see some practical examples. Actually, this problem is the tensor-based generalization of robust matrix completion (RMC)~\cite{Shang2014,Li2013} and therefore we call it robust tensor completion (RTC) hereafter. Leveraging on the t-SVD algebraic framework, we show that one can obtain an exact recovery of the target tensor with high probability by simply solving a convex program whose object is a weighted combination of tubal nuclear norm~\cite{Zhang2014,Lu2016,Zhang2017}, serving as a convex surrogate for the tubal rank, and the $\ell_1$-norm. The conditions under which our result holds, similar to the regular matrix incoherence conditions~\cite{Candes2011,Candes2009,Li2013}, coincide very well with and are much weaker than the couterparts given by~\cite{Zhang2017} and~\cite{Lu2016} respectively.

We are aware that the RTC problem has been rigorously examined in~\cite{Huang2014}, which proposes a strongly convex program that can be proved to guarantee exact recovery under certain conditions as well. Despite considering the same problem, our study departs from it on several fronts. First, the t-SVD algebraic framework, in which third-order tensors are treated as linear operators over matrices oriented laterally~\cite{Kilmer2013,Braman2010}, is quite different from the classic multilinear algebraic setup for Tucker decomposition used in that work. Besides, the tubal rank and tubal nuclear norm defined in the Fourier domain (see Definition~\ref{def8} and~\ref{def9}), differ seriously from the multi-rank and its convex relaxation \textit{sum-of-nuclear-norms (SNN)}~\cite{Liu2013}. Hence, the recovery theory established in~\cite{Huang2014} is not directly comparable to our result. Our analysis has one additional advantage that is of significant practical importance. It identifies a simple, non-adaptive choice of the regularization parameter in our model. In contrast, the heuristic rule for parameter setting suggested by~\cite{Huang2014} usually suffers a failure in real-world applications, as shown in~\cite{Goldfarb2013,Lu2016} and our experiments.

For convenience, we concentrate on the analysis for third-order tensors in this paper. But the results given here can be easily extended to the case of $N$th-order tensors with $N \geq 3$, by exploiting the higher-order t-SVD framework~\cite{Martin2013}.

The rest of this paper is organized as follows. In Section~\ref{sec2}, we begin with a brief review of related work. The notation and some preliminaries of tensors are introduced in Section~\ref{sec3}, where we outline the t-SVD algebraic framework for third-order tensors. Section~\ref{sec4} describes our main results and discusses the key similarities and differences between our theory and some prior works. We then provide the full proof of Theorem~\ref{the1} in Section~\ref{sec5} and introduce the ADMM algorithm to solve the optimization problem in Section~\ref{sec6}. Finally, we report the numerical and empirical results in Section~\ref{sec7} and draw the conclusions in Section~\ref{sec8}.

\section{Related Work}
\label{sec2}

In this section, we go over related work on low-rank tensor recovery based on different tensor factorizations and associated algebraic frameworks, which can be coarsely spit into two branches: tensor completion (TC) and tensor robust principal component analysis (TRPCA).

\subsection{Tensor Completion}
\label{sec2:sub1}

In TC problem, we would like to recover a low-rank tensor when a limited number of its entries are observed. Jain and Oh~\cite{Jain2014} show that an $n \times n \times n$ symmetric tensor with CP-rank $r$ can be accurately estimated from $O(n^{3/2}r^5\log^4n)$ randomly sampled entries under standard incoherence conditions on the tensor factors. In~\cite{Karlsson2016}, highly scalable algorithms have been proposed for the tasks of filling the missing entries in multidimensional data by the integration of CP decomposition and block coordinate descent (BCD) methods. This optimization problem is non-convex and hence only local minimum can be arrived at. As we all know, it is often computationally intractable to determine the CP rank or its best convex approximation of a tensor, which makes it very difficult to recover tensors with low CP rank, particularly via convex programming.

Inspired by the relation between matrix rank and nuclear norm, Liu \textit{et al.}~\cite{Liu2013} propose a convex surrogate for multi-rank (also known as the Tucker rank or tensor $n$-rank), which is referred to as the SNN. Soon after, this tractable measure of the tensor rank has been successfully applied to various practical problems (see, e.g.,~\cite{Tomioka2010,Gandy2011} and reference therein). Besides the empirical studies, some progress on recovery theory has been achieved at the same time. Tomioka \textit{et al.}~\cite{Tomioka2011} conduct a statistical analysis for tensor decomposition and provide the first theoretical guarantee for SNN minimization. This result was significantly enhanced in a later study~\cite{Mu2013}, which not only proves that the complexity bound obtained in~\cite{Tomioka2011} is tight when employing the SNN as the convex surrogate, but also proposes a simple improvement that works much better for high-order tensors. Unfortunately, all the researches assume Gaussian measurements, while
in practice the problem settings are more often similar to matrix completion problems~\cite{Chen2013,Candes2009,Recht2011}. To fill the gap, Zhang and Aeron~\cite{Zhang2017} derive theoretical performance bounds for the algorithm proposed in~\cite{Zhang2014} for third-order tensor recovery from limited sampling using the t-SVD algebraic framework. They prove that by solving a convex optimization problem, which minimizes tubal nuclear norm as a convex approximation of the tubal rank, one can exactly recover a $n_1 \times n_2 \times n_3$ tensor with tubal rank $r$, given $O(rn_1n_3\log((n_1+n_2)n_3))$ random samples when certain tensor incoherence conditions are satisfied.

\subsection{Tensor Robust Principal Component Analysis}
\label{sec2:sub2}

The goal of TRPCA problem is to learn a target tensor that is a superposition of the low-rank component and a sparse corruption component from observations. This problem, after first being proposed in~\cite{Li2010}, has been extensively investigated theoretically in~\cite{Shah2015,Gu2014} and algorithmically in~\cite{Goldfarb2013,Huang2014,Tan2013}. Shah \textit{et al.}~\cite{Shah2015} consider robust CP decomposition based on a randomized convex relaxation formulation. Under their random sparsity model, the proposed algorithm provides guaranteed recovery as long as the number of non-zero entries per fiber is $O(\sqrt{n})$. Using the SNN as a convex relaxation for the multi-rank, Gu \textit{et al.}~\cite{Gu2014} provide perfect recovery of both components (with respective  nonasymptotic Frobenius-norm estimation error bound)  under restricted eigenvalue conditions. But these conditions are opaque and it is not clear regarding the level of sparsity that can be handled.

The rank sparsity tensor decomposition (RSTD) algorithm~\cite{Li2010} applies variable-splitting to both components, and utilizes a classic BCD algorithm to solve an unconstrained problem obtained by relaxing all the constraints as quadratic penalty terms. This method has many parameters to tune and does not have a iteration complexity guarantee. The Multi-linear Augmented Lagrange Multiplier (MALM) method~\cite{Tan2013} 
divides the original TRPCA problem into independent robust principal component analysis (RPCA) problems~\cite{Candes2011}. This reformulation makes the final solution hard to be optimal since consistency among the auxiliary variables is not considered. In~\cite{Goldfarb2013}, convex and non-convex approaches derived from the ADMM algorithm, are introduced, but there are no guarantees on their recovery performance. Lu \textit{et al.}~\cite{Lu2016} propose a convex optimization, which is indeed a simple and elegant tensor extension of RPCA. They show that under certain incoherence conditions, the solution to the convex optimization perfectly recovers the low-rank and the sparse components, provided that the tubal rank of target tensor is not too large, and that corruption term is reasonably sparse.

\section{The t-SVD Algebraic Framework}
\label{sec3}

Throughout this paper, tensors are denoted by boldface Euler letters and matrices by boldface capital letters. Vectors are represented by boldface lowercase letters and scalars by lowercase letters. The field of real number and complex number are denoted as $\R$ and $\C$, respectively. For a third-order tensor $\ten{A} \in \R^{n_1 \times n_2 \times n_3}$, we denote its $(i, j, k)$-th entry as $\ten{A}_{ijk}$ and use the \textsc{Matlab} notation $\ten{A}(i,:,:)$, $\ten{A}(:,i,:)$ and $\ten{A}(:,:,i)$ to denote the $i$-th horizontal, lateral and frontal slice, respectively. Specifically, the front slice $\ten{A}(:,:,i)$ is denoted compactly as $\ten{A}^{(i)}$. $\ten{A}(i, j, :)$ denotes a tubal fiber oriented into the board obtained by fixing the first two indices and varying the third. Moreover, a tensor tube of size $1 \times 1 \times n_3$ is denoted as $\tub{a}$ and a tensor column of size $n_1 \times 1 \times n_3$ is denoted as $\tc{b}$.

The inner product of $\mtx{A}$ and $\mtx{B}$ in $\C^{n_1 \times n_2}$ is given by $\<\mtx{A}, \mtx{B}\> = \tr(\mtx{A}^{H}\mtx{B})$, where $\mtx{A}^{H}$ denotes the conjugate transpose of $\mtx{A}$ and $\tr(\cdot)$ denotes the matrix trace. The inner product of $\ten{A}$ and $\ten{B}$ in $\C^{n_1 \times n_2 \times n_3}$ is defined as $\<\ten{A}, \ten{B}\> = \sum_{i=1}^{n_3}\<\mtx{A}^{(i)}, \mtx{B}^{(i)}\>$.

Some norms of vector, matrix and tensor are used. For a vector $\vct{v} \in \C^{n}$, the $\ell_2$-norm is $\|\vct{v}\|_2 = \sqrt{\sum_{i} |v_i|^2}$. The spectral norm of a matrix $\mtx{A} \in \C^{n_1 \times n_2}$ is denoted as $\|\mtx{A}\| = \max_{i}\sigma_{i}(\mtx{A})$, where $\sigma_{i}(\mtx{A})$'s are the singular values of $\mtx{A}$. The matrix nuclear norm is $\|\mtx{A}\|_{\ast} = \sum_{i}\sigma_{i}(\mtx{A})$. For a tensor $\ten{A}$, we denote the $\ell_1$-norm as $\|\ten{A}\|_1 = \sum_{ijk}|\ten{A}_{ijk}|$, the infinity norm as $\|\ten{A}\|_{\infty} = \max_{ijk}|\ten{A}_{ijk}|$ and the Frobenius norm as $\|\ten{A}\|_F = \sqrt{\sum_{ijk}|\ten{A}_{ijk}|^2}$. It is easy to verify that these norms reduce to the corresponding vector or matrix norms if $\ten{A}$ is a vector or a matrix.

$\widehat{\ten{A}}$ represents a third-order tensor obtained by taking the Discrete Fourier Transform (DFT) of all the tubes along the third dimension of $\ten{A}$, i.e.,
\begin{equation}
\vv(\widehat{\ten{A}}(i, j, :)) = \mathcal{F}(\vv(\ten{A}(i, j, :))),
\label{eq1}
\end{equation}
where $\vv$ is the vectorization operator that maps the tensor tube to a vector, and $\mathcal{F}$ stands for the DFT. For compactness, we will denote the Fast Fourier Transform (FFT) along the third dimension by $\widehat{\ten{A}} = \fft(\ten{A}, [], 3)$. In the same fashion, one can also compute $\ten{A}$ from $\widehat{\ten{A}}$ via $\ifft(\widehat{\ten{A}}, [], 3)$ using the inverse FFT operation along the third-dimension. For sake of brevity, we direct the interested readers to~\cite{Kilmer2011,Kilmer2013}.

After introducing the tensor notation and terminology, we give the basic definitions on t-SVD and outline the associated algebraic framework from~\cite{Kilmer2011,Kilmer2013,Zhang2014,Lu2016,Zhang2017}, which serve as the foundation for our analysis in next section.
\begin{definition}[t-product\cite{Kilmer2011}]
The t-product $\ten{A} \ast \ten{B}$ of $\ten{A} \in \R^{n_1 \times n_2 \times n_3}$ and $\ten{B} \in \R^{n_2 \times n_4 \times n_3}$ is a tensor $\ten{C} \in \R^{n_1 \times n_4 \times n_3}$ whose $(i,j)$th tube $\tub{c}_{ij}$ is given by
\begin{equation}
\tub{c}_{ij} = \ten{C}(i,j,:) = \sum_{k=1}^{n_2} \ten{A}(i, k, :) \ast \ten{B}(k, j, :),
\label{eq2}
\end{equation}
where $\ast$ denotes the circular convolution between two tubes of same size.
\label{def1}
\end{definition}
Note that a third-order tensor of size $n_1 \times n_2 \times n_3$ can be regarded as an $n_1 \times n_2$ matrix with each entry as a tube lies in the third dimension. Hence, the t-product of two tensors is analogous to matrix-matrix multiplication, expect that the multiplication operation between the scalars is replaced by circular convolution between the tubes. This new perspective has endowed multidimensional data arrays with an advantageous representation in real-world applications~\cite{Kilmer2011,Martin2013,Kilmer2013,Zhang2014}.

\begin{definition}[Tensor conjugate transpose\cite{Kilmer2011}]
The conjugate transpose of a tensor $\ten{A} \in \R^{n_1 \times n_2 \times n_3}$ is the tensor $\ten{A}^{H} \in \R^{n_2 \times n_1 \times n_3}$ obtained by conjugate transposing each of the frontal slice and then reversing the order of transposed frontal slices 2 through $n_3$, i.e.,
\begin{eqnarray}
\left(\ten{A}^H\right)^{(1)} & = & \left(\ten{A}^{(1)}\right)^H, \nonumber\\
\left(\ten{A}^H\right)^{(i)} & = & \left(\ten{A}^{(n_3+2-i)}\right)^H, \,\,\, i = 2, \dots, n_3. \nonumber
\end{eqnarray}
\label{def2}
\end{definition}

\begin{definition}[Block diagonal form of third-order tensor\cite{Kilmer2011}]
Let $\widebar{\mtx{A}}$ to be the block diagonal matrix of the tensor $\ten{A} \in \R^{n_1 \times n_2 \times n_3}$ in the Fourier domain, namely,
\begin{equation}
\widebar{\mtx{A}}  = \bdiag(\widehat{\ten{A}}) = \left[
\begin{array}{llll}
\widehat{\ten{A}}^{(1)} & & & \\
 & \widehat{\ten{A}}^{(2)} & & \\
 & & \ddots &\\
 & & & \widehat{\ten{A}}^{(n_3)}
\end{array}
\right] \in \C^{n_1 n_3 \times n_2 n_3}.
\label{eq3}
\end{equation}
\label{def3}
\end{definition}
It is easy to discover that the block diagonal matrix of $\ten{A}^{H}$ is equal to the conjugate transpose of the block diagonal matrix of $\ten{A}$,
\begin{equation}
\widebar{\mtx{A}^{H}} = {\widebar{\mtx{A}}}^{H}.
\label{eq4}
\end{equation}
The following facts will be used through out the paper. For any tensor $\ten{A} \in \R^{n_1 \times n_2 \times n_3}$ and $\ten{B} \in \R^{n_2 \times n_4 \times n_3}$, we have
\begin{equation}
\ten{A} \ast \ten{B} = \ten{C} \Leftrightarrow \widebar{\mtx{A}}\,\,\widebar{\mtx{B}} = \widebar{\mtx{C}},
\label{eq5}
\end{equation}
and the inner product of two tensor has the following property
\begin{equation}
\<\ten{A}, \ten{B}\> = \frac{1}{n_3}\<\widebar{\mtx{A}}, \widebar{\mtx{B}}\> \in \R,
\label{eq6}
\end{equation}
where $1/n_3$ comes from the normalization constant of the FFT. The inner product produces a real-valued scalar due to the conjugate symmetric property of the FFT.

\begin{definition}[Identity tensor\cite{Kilmer2011}]
The identity tensor $\ten{I} \in \R^{n \times n \times n_3}$ is defined to be a tensor whose first frontal slice $\ten{I}^{(1)}$ is the $n \times n$ identity matrix and whose other frontal slices $\ten{I}^{(i)}, i = 2, \dots, n_3$ are zero matrices.
\label{def4}
\end{definition}

\begin{definition}[Orthogonal tensor\cite{Kilmer2011}]
A tensor $\ten{Q} \in \R^{n \times n \times n_3}$ is orthogonal if it satisfies
\begin{equation}
\ten{Q}^{H} \ast \ten{Q} = \ten{Q} \ast \ten{Q}^{H} =\ten{I},
\label{eq7}
\end{equation}
where $\ten{I}$ is the identity tensor of size $n \times n \times n_3$.
\label{def5}
\end{definition}

\begin{definition}[f-diagonal tensor\cite{Kilmer2011}]
A tensor $\ten{A}$ is called f-diagonal if each frontal slice $\ten{A}^{(i)}$ is a diagonal matrix.
\label{def6}
\end{definition}

\begin{figure}[!t]
\centering
\includegraphics[width=0.6\textwidth]{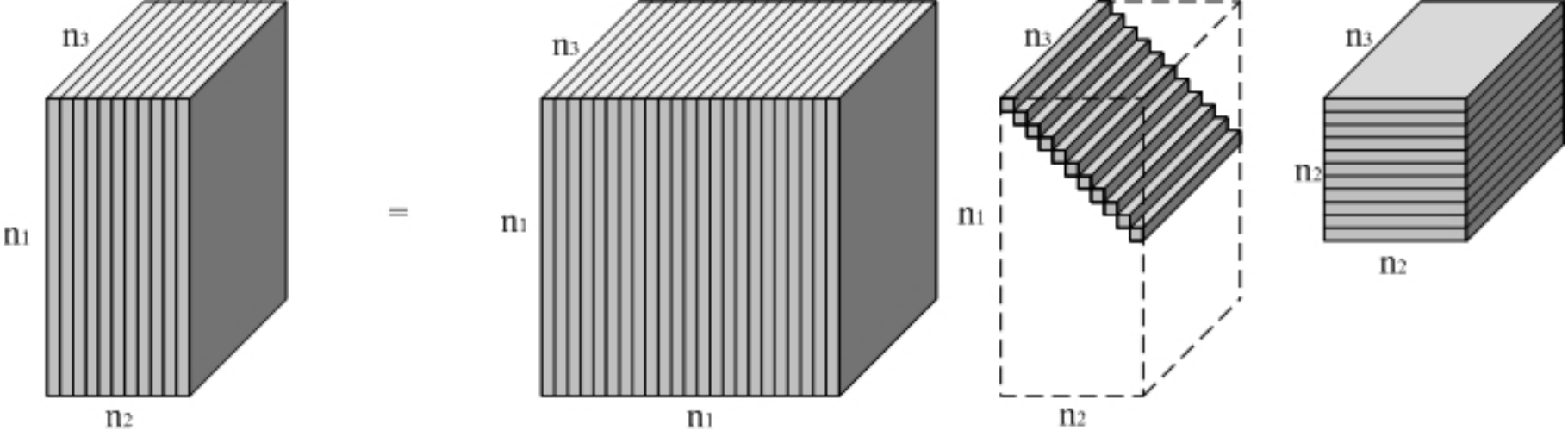}
\caption{Illustration of the t-SVD for a $n_1 \times n_2 \times n_3$ tensor~\cite{Zhang2017}.}
\label{fig1}
\end{figure}

\begin{algorithm}[!t]
\caption{t-SVD for Third-Order Tensors~\cite{Kilmer2011}}
\label{alg1}
\textbf{Input:} $\ten{A} \in \R^{n_1\times n_2 \times n_3}$.\\
\textbf{Output:} $\ten{U} \in \R^{n_1\times n_1 \times n_3}$, $\ten{S} \in \R^{n_1\times n_2 \times n_3}$, \\
$\ten{V} \in \R^{n_2 \times n_2 \times n_3}$.
\\\vspace{-0.4cm}
\begin{algorithmic}[1]
\STATE {$\hat{\ten{A}} = \fft(\ten{A}, [], 3)$;}
\FOR{$i = 1, \dots, n_3$}
\STATE{$[\mtx{U}, \mtx{S}, \mtx{V}] = \svd(\hat{\ten{A}}^{(i)})$};
\STATE{$\hat{\ten{U}}^{(i)} = \mtx{U}$, $\hat{\ten{S}}^{(i)} = \mtx{S}$, $\hat{\ten{V}}^{(i)} = \mtx{V}$};
\ENDFOR
\STATE{$\ten{U} = \ifft(\hat{\ten{U}}, [], 3)$, $\ten{S} = \ifft(\hat{\ten{S}}, [], 3)$,\\ $\ten{V} = \ifft(\hat{\ten{V}}, [], 3)$}
\end{algorithmic}
\end{algorithm}

The aforementioned notions allow us to propose the following tensor factorization.
\begin{definition}[Tensor Singular Value Decomposition: t-SVD\cite{Kilmer2011}]
For $\ten{A} \in \R^{n_1 \times n_2 \times n_3}$, the t-SVD of $\ten{A}$ is given by
\begin{equation}
\ten{A} = \ten{U} \ast \ten{S} \ast \ten{V}^{H},
\label{eq8}
\end{equation}
where $\ten{U} \in \R^{n_1 \times n_1 \times n_3}$ and $\ten{V} \in \R^{n_2 \times n_2 \times n_3}$ are orthogonal tensors, and $\ten{S} \in \R^{n_1 \times n_2 \times n_3}$ is a f-diagonal tensor, respectively. The entries in $\ten{S}$ are called the singular tubes of $\ten{A}$.
\label{def7}
\end{definition}
Figure~\ref{fig1} illustrates the t-SVD for a $n_1 \times n_2 \times n_3$ tensor, which can be obtained by computing matrix SVDs in the Fourier domain as shown in Algorithm~\ref{alg1}. Based on the t-SVD, we can derive the following notion of tensor rank.
\begin{definition}[Tubal multi-rank and tubal rank\cite{Kilmer2013}]
The \textbf{tubal multi-rank} of a tensor $\ten{A} \in \R^{n_1 \times n_2 \times n_3}$ is a vector $\vct{r} \in \R^{n_3}$ with its $i$-th entry as the rank of the $i$-th frontal slice, i.e., $r_i = \rank(\hat{\ten{A}}^{(i)})$. The tensor \textbf{tubal rank}, denoted as $\rank_t(\ten{A})$, is defined as the number of nonzero singular tubes of $\ten{S}$, where $\ten{S}$ comes from the t-SVD of $\ten{A} = \ten{U} \ast \ten{S} \ast \ten{V}^{H}$. That is
\begin{equation}
\rank_t(\ten{A}) = \#\{i: \ten{S}(i, i, :) \neq \vct{0}\} = \max_{i} r_i.
\label{eq9}
\end{equation}
\label{def8}
\vspace{-2em}
\end{definition}

\begin{remark}
The tubal rank has some interesting properties that are similar to the matrix rank, that is, $\rank_t(\ten{A}) \leq \min(n_1, n_2)$ for $\ten{A} \in \R^{n_1 \times n_2 \times n_3}$, and $\rank_t(\ten{A} \ast \ten{B}) \leq \min(\rank_t(\ten{A}), \rank_t(\ten{B}))$.
\label{remark1}
\end{remark}
It is usually sufficient to compute the skinny version of t-SVD using the tensor tubal rank. In details, suppose $\ten{A} \in \R^{n_1 \times n_2 \times n_3}$ has tensor tubal rank $r$, then the skinny t-SVD of $\ten{A}$ is given by
\begin{equation}
\ten{A} = \ten{U} \ast \ten{S} \ast \ten{V}^{H}
\label{eq10}
\end{equation}
where $\ten{U} \in \R^{n_1 \times r \times n_3}$ and $\ten{V} \in \R^{n_2 \times r \times n_3}$ satisfying $\ten{U}^{H} \ast \ten{U} = \ten{I}$, $\ten{V}^{H} \ast \ten{V} = \ten{I}$, and $\ten{S} \in \R^{r \times r \times n_3}$ is a f-diagonal tensor. This skinny t-SVD will be used throughout the paper unless otherwise stated.

\begin{definition}[Tubal nuclear norm\cite{Lu2016}]
The tubal nuclear norm of a tensor $\ten{A} \in \R^{n_1 \times n_2 \times n_3}$, denoted as $\|\ten{A}\|_{\textup{TNN}}$, is the average of the nuclear norm of all the frontal slices of $\widehat{\ten{A}}$, i.e., $\|\ten{A}\|_{\textup{TNN}} = \frac{1}{n_3} \sum_{i=1}^{n_3} \|{\widehat{\ten{A}}}^{(i)}\|_{\ast}$.
\label{def9}
\end{definition}

\begin{remark}
The norm defined above is also named tensor nuclear norm in~\cite{Zhang2014,Lu2016}. But there is another norm with the same name proposed in~\cite{Liu2013,Tomioka2010}. To differentiate these two tensor norms, we refer to Definition~\ref{def8} as tubal nuclear norm in this paper. With the factor $1/n_3$, it is different from the earlier definition given by~\cite{Zhang2014,Zhang2017}, and is important for our analysis in theory.
\label{remark2}
\end{remark}

\begin{figure}[!t]
\centering
\includegraphics[width=0.3\textwidth]{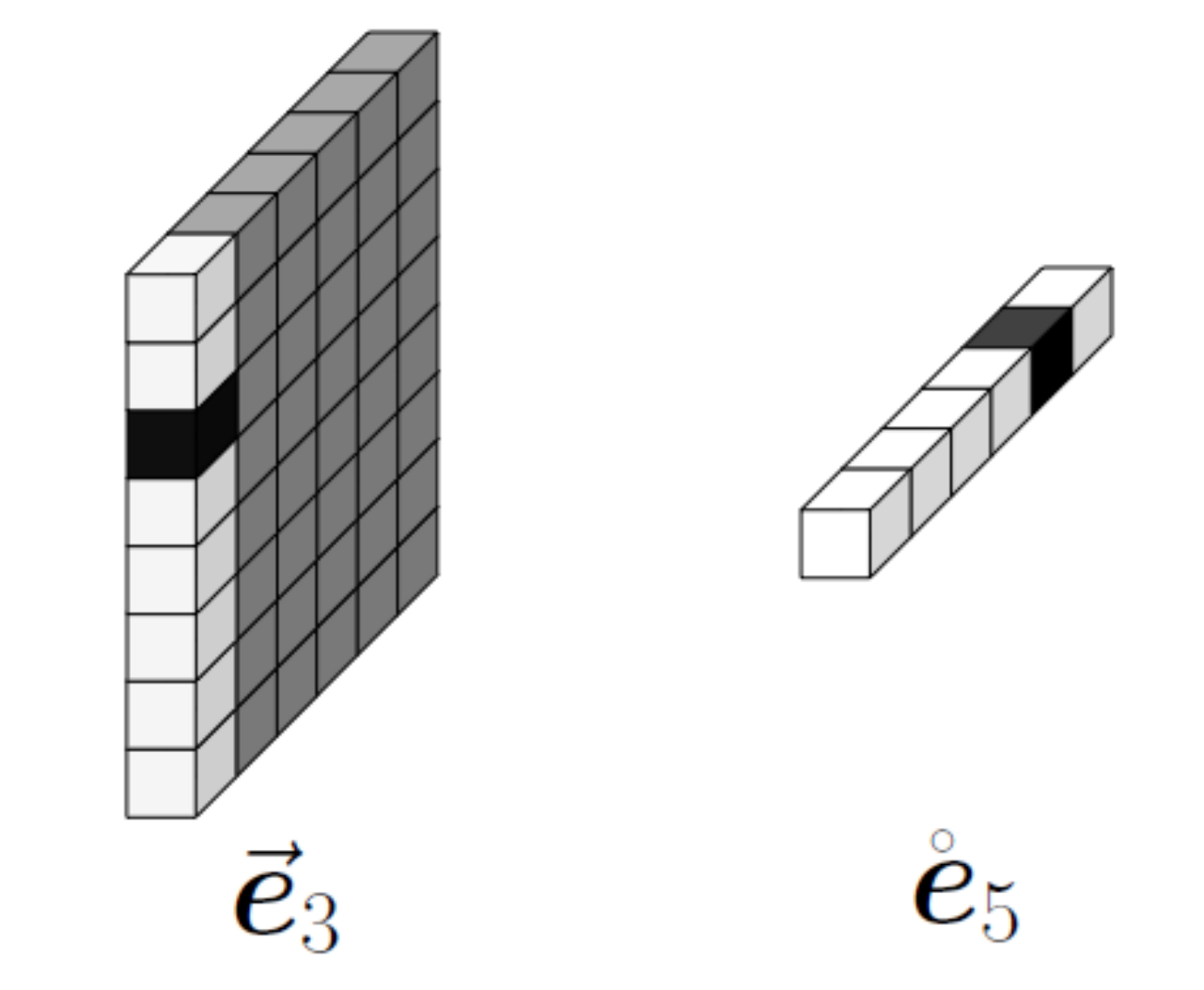}
\caption{The column basis $\tc{e}_3$ and tube basis $\tub{e}_5$~\cite{Zhang2017}. The black cubes are 1, gray and white cubes are 0. The white cubes stand for the potential entries that could be 1.}
\label{fig2}
\end{figure}

We introduce two kinds of tensor basis that are illustrated in Figure~\ref{fig2} and will be exploited to derive our main results.

\begin{definition}[Tensor basis\cite{Zhang2017}]
The \textbf{column basis}, denoted as $\tc{e}_i$, is a tensor of size $n_1 \times 1 \times n_3$ with its $(i, 1, 1)$th entry equaling to 1 and the rest equaling to 0. The nonzero entry 1 will only appear at the first front slice of $\tc{e}_i$. Naturally its conjugate transpose $\tc{e}_i^H$ is called \textbf{row basis}. The \textbf{tube basis}, denoted as $\tub{e}_k$, is a tensor of size $1 \times 1 \times n_3$ with its $(1,1,k)$th entry equaling to 1 and the rest equaling to 0.
\label{def10}
\end{definition}
One can obtain a unit tensor $\ten{E}_{ijk}$ with only the $(i,j,k)$th entry equaling to 1 through $\ten{E}_{ijk} = \eijk$. For a third-ord tensor $\ten{A} \in \R^{n_1 \times n_2 \times n_3}$, we can decompose it as $ \ten{A} = \sum_{ijk} \<\ten{E}_{ijk}, \ten{A}\>\ten{E}_{ijk} = \sum_{ijk} \ten{A}_{ijk} \ten{E}_{ijk}$.

\begin{definition}[Tensor spectral norm~\cite{Zhang2017}]
The tensor spectral norm of $\ten{A} \in \R^{n_1 \times n_2 \times n_3}$, denoted as $\|\ten{A}\|$, is defined as $\|\ten{A}\| = \|\widebar{\mtx{A}}\|$. In other words, the tensor spectral norm of $\ten{A}$ equals to the matrix spectral norm of its block diagonal form $\widebar{\mtx{A}}$.
\label{def11}
\end{definition}

\begin{remark}
If we define the tubal average rank as $\rank_{a}(\ten{A}) = \frac{1}{n_3} \sum^{n_3}_{i=1} \rank(\widehat{\ten{A}}^{(i)})$, it can be proved that the tubal nuclear norm is the convex envelop of the tubal average rank within the unit ball of the tensor spectral norm.
\label{remark3}
\end{remark}

\begin{definition}[Tensor operator norm\cite{Zhang2017}]
Suppose $\mathcal{L}$ is a tensor operator, then its operator norm is defined as
\begin{equation}
\|\mathcal{L}\|_{\textup{op}} = \sup_{\|\ten{X}\|_F \leq 1} \|\mathcal{L}(\ten{A})\|_F.
\label{eq11}
\end{equation}
\label{def12}
\end{definition}

\begin{remark}
This definition is consistent with the matrix case. Spectral norm is equivalent to the operator norm if the tensor operator $\mathcal{L}$ can be represented as a tensor $\ten{L}$ t-product $\ten{A}$. In other words, $\|\mathcal{L}\|_{\textup{op}} = \|\ten{L}\|$ if $\mathcal{L}(\ten{A}) = \ten{L} \ast \ten{A}$.
\label{remark4}
\end{remark}

\section{Theoretical Analysis}
\label{sec4}

Let us consider the RTC problem formally. Suppose we are given a third-order tensor $\ten{L}_0$ having low tubal rank and corrupted by a sparse term $\ten{E}_0$. Here, both $\ten{L}_0$ and $\ten{E}_0$ are of arbitrary magnitude. We do not know the tubal rank of $\ten{L}_0$. Furthermore, we have no idea about the locations of the nonzero entries of $\ten{E}_0$, not even how many there are. Can we recover $\ten{L}_0$ accurately (perhaps even exactly) and efficiently from an observed subset\footnote{In this situation, it is impossible to exactly recover $\ten{E}_0$ (some of its entries are simply not observed!), unless the observed set is identical to the support of $\ten{E}_0$.} of the noisy data $\ten{X} = \ten{L}_0 + \ten{E}_0$?

Mathematically, the problem can be represented by
\begin{equation}
\min_{\ten{L},\,\ten{E}} \rank_t(\ten{L}) + \lambda\|\ten{E}\|_{0},\,\,\, \textup{s.t.},\,  \mathcal{P}_\Omega(\ten{L} + \ten{E}) = \mathcal{P}_\Omega(\ten{X}),
\label{eq12}
\end{equation}
where $\lambda$ is a penalty parameter and $\PO$ is a linear projection such that the entries in the set $\Omega$ are given while the remaining entries are missing.
The optimization problem in (\ref{eq12}) is generally NP-hard due to the discrete nature of the tubal-rank function and the $\ell_0$ pseudo-norm which counts the number of nonzero entries of $\ten{E}$. Replacing these two terms by their convex surrogates, namely, tubal nuclear norm and $\ell_1$-norm respectively, leads to the following convex optimization problem
\begin{equation}
\min_{\ten{L},\,\ten{E}} \|\ten{L}\|_{\textup{TNN}} + \lambda\|\ten{E}\|_{1},\,\,\, \textup{s.t.},\,  \mathcal{P}_\Omega(\ten{L} + \ten{E}) = \mathcal{P}_\Omega(\ten{X}).
\label{eq13}
\end{equation}
Our model (\ref{eq13}) is equivalent to the following TC problem when there is no corruption, i.e., $\ten{E} = \mtx{0}$,
\begin{equation}
\min_{\ten{L}} \|\ten{L}\|_{\textup{TNN}} \,\,\, \textup{s.t.},\,  \mathcal{P}_\Omega(\ten{L}) = \mathcal{P}_\Omega(\ten{X}),
\label{eq14}
\end{equation}
and it reduces to the following TRPCA problem when $\Omega$ is the entire set of indices,
\begin{equation}
\min_{\ten{L},\,\ten{E}} \|\ten{L}\|_{\textup{TNN}} + \lambda\|\ten{E}\|_{1},\,\,\, \textup{s.t.},\,  \ten{L} + \ten{E} = \ten{X}.
\label{eq15}
\end{equation}

\subsection{Tensor Incoherence Conditions}
\label{sec4:sub1}

As discussed in~\cite{Huang2014,Lu2016,Zhang2017}, exact recovery is hopeless if most entries of $\ten{X}$ are equal to zero. Suppose $\ten{X}$ is both low-rank and sparse, e.g., $\ten{X} = \ten{E}_{111}$ ($\ten{X}_{ijk} = 1$ when $i = j= k= 1$ and zeros everywhere else), then we are not able to identify the low-rank tensor $\ten{L}_0$ in these cases. To make the problem meaningful, we need some incoherence conditions on $\ten{L}_0$ to ensure that it is not sparse.

\begin{definition}[Tensor Incoherence Conditions] Assume that $\rank_t(\ten{L}_0) = r$ and its skinny t-SVD is $\ten{L}_0 = \ten{U} \ast \ten{S} \ast \ten{V}^{H}$. $\ten{L}_0$ is said to satisfy the tensor incoherence conditions with parameter $\mu > 0$ if
\begin{align}
\max_{i=1, \dots, n_1} \|\ten{U}^{H} &\ast \tc{e}_i\|_F \leq \sqrt{\frac{\mu r}{n_1}}, \label{eq16}\\
\max_{j=1, \dots, n_2} \|\ten{V}^{H} &\ast \tc{e}_j\|_F \leq \sqrt{\frac{\mu r}{n_2}}, \label{eq17}
\end{align}
and \begin{equation}
\|\ten{U} \ast \ten{V}^{H} \|_{\infty} \leq \sqrt{\frac{\mu r}{n_1 n_2 n_3}}.
\label{eq18}
\end{equation}
\label{def13}
\end{definition}
Note that the smallest $\mu$ is equal to 1, which is achieved when each tensor column $\tc{u}_i = \ten{U}(:, i, :)$ or  $\tc{v}_j = \ten{V}(:, j, :)$ has entries with magnitude $1/\sqrt{n_1n_3}$. The largest possible value of $\mu$ is $\min(n_1, n_2)/r$ when one of the tensor columns of $\ten{U}$ (or $\ten{V}$ respectively) is equal to the standard tensor column basis $\tc{e}_i$ (or $\tc{e}_j$ respectively). With low $\mu$, the tensor columns $\ten{U}(:, i, :)$ and $\ten{V}(:, i, :), i = 1, 2, \dots, n_3$ are supposed to be sufficiently spread, i.e., be uncorrelated with the tensor basis, which means that each entry of $\ten{L}_0$ carries approximately same amount of information.

It is not difficult to discover that the incoherence conditions (\ref{eq16})-(\ref{eq18}) reduce to the regular matrix incoherence conditions~\cite{Candes2009,Recht2011,Candes2011,Li2013} when $n_3=1$. According to~\cite{Chen2013}, we name (\ref{eq16}) and (\ref{eq17}) as tensor standard incoherence conditions, and (\ref{eq18}) as tensor joint incoherence condition, respectively. With a factor $1/\sqrt{n_3}$, our incoherence conditions are much weaker than those given by~\cite{Lu2016}. Zhang and Aeron~\cite{Zhang2017} indicate that the joint incoherence condition (\ref{eq18}) is unnecessary for tensor completion, while we get the same conclusion in an alternative way (see the proof of Corollary~\ref{cor1}). However, it is unavoidable for obtaining exact solution to the TRPCA and RTC problems, as shown in our analysis.

Another identifiability issue arises if the corruption term $\ten{E}_0$ has low tubal rank. This can be avoided by assuming that the support of $\ten{E}_0$ is distributed uniformly at random.

\subsection{Main Results}
\label{sec4:sub2}

Now we present our main results. For convenience, we denote $n_{(1)} = \max(n_1, n_2)$ and $n_{(2)} = \min(n_1, n_2)$.

\begin{theorem}
Suppose $\ten{L}_0 \in \R^{n_1 \times n_2 \times n_3}$ obeys (\ref{eq16})-(\ref{eq18}), and the observation set $\Omega$ is uniformly distributed among all sets of cardinality $m = \rho n_1 n_2 n_3$. Also suppose that each observed entry is independently corrupted with probability $\gamma$. Then, there exist universal constants $c_1, c_2 > 0$ such that with probability at least $1 - c_1(n_{(1)}n_3)^{-c_2}$, the recovery of $\ten{L}_0$ with $\lambda = 1/\sqrt{\rho n_{(1)}n_3}$ is exact, provided that
\begin{equation}
r \leq \frac{c_rn_{(2)}}{\mu (\log(n_{(1)}n_3))^2}\, \,\,\, \textup{and} \,\,\, \,\gamma \leq c_\gamma
\label{eq19}
\end{equation}
where $c_r$ and $c_\gamma$ are two positive constants.
\label{the1}
\end{theorem}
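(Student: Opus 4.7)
The plan is to follow the standard dual-certificate plus golfing-scheme strategy that drives the matrix robust-completion results of Li and Chen, carefully adapted so that every operation respects the t-SVD algebra. Let $T = \{\ten{U}\ast\ten{Y}^H + \ten{Z}\ast\ten{V}^H : \ten{Y},\ten{Z}\}$ be the tangent space at $\ten{L}_0$, let $\Omega_E \subseteq \Omega$ be the (random) corruption support with sampling rate $\gamma$, and let $\Omega_L = \Omega \setminus \Omega_E$ be the clean observations. Writing out the subdifferentials of $\|\cdot\|_{\textup{TNN}}$ and $\|\cdot\|_1$, the pair $(\ten{L}_0, \ten{E}_0)$ is the unique optimum of (\ref{eq13}) provided we can exhibit a dual tensor $\ten{Y}$ supported on $\Omega$ satisfying, with high probability, (a) $\|\PT(\ten{Y}) - \ten{U}\ast\ten{V}^H\|_F$ very small, (b) $\|\PTc(\ten{Y})\| < 1/2$, (c) $\mathcal{P}_{\Omega_E}(\ten{Y}) = \lambda\sgn(\ten{E}_0)$ up to a tiny Frobenius error, and (d) $\|\mathcal{P}_{\Omega_L}(\ten{Y})\|_\infty < \lambda/2$. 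The reduction also requires the injectivity condition $\mathcal{P}_{\Omega}\cap T = \{0\}$, which will fall out as a by-product of the isometry lemma below.

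The core technical engine consists of four lemmas, each the t-SVD counterpart of a standard matrix lemma, proved by passing to the block-diagonal form $\widebar{\mtx{A}}$ in the Fourier domain (so that matrix concentration inequalities apply, and the $1/n_3$ factor in Definition~\ref{def9} absorbs the block-diagonal scaling): (i) a restricted-isometry bound $\|\PT - (\rho n_3)^{-1}\PT\PO\PT\|_{\textup{op}} \leq 1/2$, via tensor noncommutative Bernstein applied to independent rank-one summands $\langle \ten{E}_{ijk}, \PT(\cdot)\rangle \ten{E}_{ijk}$ together with the standard incoherence (\ref{eq16})--(\ref{eq17}); (ii) a spectral-norm bound $\|\PTc\PO(\ten{X})\| \lesssim \|\ten{X}\|_\infty\sqrt{n_{(1)}n_3\log(n_{(1)}n_3)}$ for fixed $\ten{X}$ and random $\Omega$; (iii) an $\ell_\infty$-to-$\ell_\infty$ contraction $\|\PT(\OpId - (\rho n_3)^{-1}\PO)\PT(\ten{X})\|_\infty \leq \tfrac{1}{2}\|\ten{X}\|_\infty$; and (iv) a spectral bound $\|\PT\mathcal{P}_{\Omega_E}\PT\|_{\textup{op}} \leq \rho\gamma n_3 + \tfrac{1}{2}$ that holds once $\gamma \leq c_\gamma$, yielding invertibility of $(\PT\mathcal{P}_{\Omega_E}\PT)^{-1}$ on $T$.

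With these in hand, I construct the dual certificate in two pieces, $\ten{Y} = \ten{Y}^L + \ten{Y}^E$. For $\ten{Y}^L$, partition $\Omega_L$ into $j_0 = \lceil 2\log(n_{(1)}n_3)\rceil$ independent batches $\Omega_1,\ldots,\Omega_{j_0}$ (justified by a standard Bernoulli-to-uniform reduction) and iterate the golfing recursion $\ten{Y}_j = \ten{Y}_{j-1} + q\,\mathcal{P}_{\Omega_j}\PT(\ten{U}\ast\ten{V}^H - \PT(\ten{Y}_{j-1}))$ with $q$ proportional to $1/(\rho(1-\gamma))$; lemma (iii) gives geometric decay of $\|\PT(\ten{Y}^L) - \ten{U}\ast\ten{V}^H\|_\infty$, while lemma (ii) controls $\|\PTc(\ten{Y}^L)\|$. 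For $\ten{Y}^E$, set $\ten{Y}^E = \lambda\mathcal{P}_{\Omega_E}(\OpId - \PT(\PT\mathcal{P}_{\Omega_E}\PT)^{-1}\PT)\sgn(\ten{E}_0)$, which exactly matches $\lambda\sgn(\ten{E}_0)$ on $\Omega_E$ and vanishes on $T$ after the projection; bounding $\|\PTc(\ten{Y}^E)\|$ requires the joint incoherence (\ref{eq18}) to control how $\sgn(\ten{E}_0)$ correlates with $T$. Finally verify (a)--(d) by combining the four lemmas and a union bound, and check that $\lambda = 1/\sqrt{\rho n_{(1)}n_3}$ simultaneously makes $\|\mathcal{P}_{\Omega_L}(\ten{Y}^L)\|_\infty$ and $\|\PTc(\ten{Y}^E)\|$ both fall below their thresholds.

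The hardest step is expected to be (iv) together with its downstream use in controlling $\|\PTc(\ten{Y}^E)\|$: this is where the joint incoherence condition (\ref{eq18}) is indispensable, and where the proof must be genuinely tensorial rather than slice-by-slice, because $\sgn(\ten{E}_0)$ is random across all frontal slices and the cross-slice convolutional coupling inherent to the t-product does not diagonalize trivially. Getting the correct scaling in terms of $n_3$—so that the final bound matches the $r \leq c_r n_{(2)}/(\mu(\log(n_{(1)}n_3))^2)$ threshold rather than a weaker $n_{(2)}/n_3$ threshold—is the delicate point, and is exactly where the $1/n_3$ normalization of the tubal nuclear norm (Remark~\ref{remark2}) must be tracked consistently.
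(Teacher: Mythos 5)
Your overall architecture --- reduce to a Bernoulli sampling model, prove Fourier-domain Bernstein lemmas for $\PT\PO\PT$, and certify optimality with a golfing-scheme dual --- matches the paper, and your $\ten{Y}^L$ is essentially the paper's construction. The genuine gap is in the sparse half of the certificate. The paper does not build a separate $\ten{Y}^E$ at all: after derandomizing the signs of $\ten{E}_0$ (so that $\sgn(\ten{E}_0)$ becomes a Bernoulli--Rademacher tensor), it seeds the golfing recursion with $\ten{Z}_0=\PT(\ten{U}\ast\ten{V}^H-\lambda\sgn(\ten{E}_0))$, keeps the entire certificate supported on the clean set $\Gamma$, and disposes of the sign tensor's contribution to the spectral condition by the direct bound $\lambda\|\PTc(\sgn(\ten{E}_0))\|\le\lambda\|\sgn(\ten{E}_0)\|\le\varphi(\gamma\rho)/\sqrt{\rho}\le 1/4$ from Lemma~\ref{lem4}. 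Your two-piece route runs into concrete trouble: the tensor $\ten{Y}^E=\lambda\mathcal{P}_{\Omega_E}(\OpId-\PT(\PT\mathcal{P}_{\Omega_E}\PT)^{-1}\PT)\sgn(\ten{E}_0)$ does satisfy $\PT(\ten{Y}^E)=0$, but it does \emph{not} equal $\lambda\sgn(\ten{E}_0)$ on $\Omega_E$: the correction term $\lambda\mathcal{P}_{\Omega_E}\PT(\PT\mathcal{P}_{\Omega_E}\PT)^{-1}\PT\sgn(\ten{E}_0)$ survives and has Frobenius norm of order $\lambda\sqrt{rn_{(1)}n_3/(\gamma\rho)}$, which is far from tiny, so your condition (c) fails. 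Moreover, your lemma (iv) cannot deliver what you ask of it: an \emph{upper} bound on $\|\PT\mathcal{P}_{\Omega_E}\PT\|_{\textup{op}}$ says nothing about invertibility on $T$; you would need a lower bound of order $\gamma\rho$ on the restriction to $T$, which degenerates as $\gamma\to 0^{+}$ and in any case would require $\gamma\rho\gtrsim \mu r\log(n_{(1)}n_3)/n_{(2)}$, an assumption absent from the theorem. The standard repair from matrix RPCA (invert $\OpId-\mathcal{P}_{\Omega_E}\PT\mathcal{P}_{\Omega_E}$ on the support side via a Neumann series) produces a certificate of the form $\lambda(\OpId-\PT)(\cdots)$ that is no longer supported on $\Omega$, which is fatal here because in robust \emph{completion} the dual variable must vanish off the observed set. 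This obstruction is precisely why the paper folds $\lambda\sgn(\ten{E}_0)$ into the golfing seed instead of certifying it separately.

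A second, smaller issue: your normalizations $(\rho n_3)^{-1}\PT\PO\PT$ and $(\rho n_3)^{-1}\PO$ are off by a factor of $n_3$. Since each entry is observed with probability $\rho$, $\E[\PO]=\rho\,\OpId$ and the correct centerings are $\rho^{-1}\PT\PO\PT-\PT$ and $\OpId-\rho^{-1}\PO$ (Lemmas~\ref{lem1}--\ref{lem3}); the $1/n_3$ bookkeeping lives entirely in the identity $\<\ten{A},\ten{B}\>=\frac{1}{n_3}\<\widebar{\mtx{A}},\widebar{\mtx{B}}\>$ and in Definition~\ref{def9}, not in the sampling operator. Carried through the golfing iteration, your misnormalization would shift the final rank threshold by a factor of $n_3$ --- exactly the scaling you flag as the delicate point.
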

The proof of Theorem~\ref{the1} will be given in Section~\ref{sec5}. The theorem tells us that the target tensor $\ten{L}_0$ whose singular vectors $\ten{U}(:, i, :)$ and $\ten{V}(:, j, :)$ are reasonably spread, can be exactly recovered with probability nearly one from a subset of its entries even if they are arbitrarily corrupted. All we require is that the tubal rank of the tensor $\ten{L}_0$ is not too large, to be exact, on the order of $n_{(2)}/(\mu(\log(n_{(1)}n_3))^2)$ and the corruption term $\ten{E}_0$ is sufficiently sparse. We would like to emphasize that the only \lq\lq random distribution\rq\rq~in our assumptions concerns the locations of the nonzero entries of $\ten{E}_0$, but not on their magnitudes or signs. Another remarkable fact is that there is no tuning parameter in our model.

As special cases of problem (\ref{eq13}), the recovery guarantees for problems (\ref{eq14}) and (\ref{eq15}) are naturally implied by Theorem~\ref{the1} as in the following corollaries
\begin{corollary}
Suppose $\ten{L}_0 \in \R^{n_1 \times n_2 \times n_3}$ obeys (\ref{eq16}) and (\ref{eq17}) and $m$ entries of $\ten{L}_0$ are observed with locations sampled uniformly at random, then there exist universal constants $c_0, c_1, c_2 > 0$ such that if
\begin{equation}
m \geq c_0 \mu r n_{(1)} n_3(\log(n_{(1)}n_3))^2,
\label{eq20}
\end{equation}
$\ten{L}_0$ is the unique minimizer to (\ref{eq14}) with probability at east $1 - c_1(n_{(1)}n_3)^{-c_2}$.
\label{cor1}
\end{corollary}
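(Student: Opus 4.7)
The strategy is to specialize the dual-certificate construction in the proof of Theorem~\ref{the1} to the noiseless regime $\gamma=0$, where $\ten{E}_0 = \mtx{0}$ and problem (\ref{eq13}) reduces to the pure completion problem (\ref{eq14}). In this regime only the low-rank component needs to be certified, so it suffices to produce a dual variable $\ten{Y} \in \R^{n_1 \times n_2 \times n_3}$ supported on $\Omega$ with $\PT(\ten{Y})$ close to $\ten{U}\ast\ten{V}^H$ in Frobenius norm and $\|\PTc(\ten{Y})\| < 1$, where $T$ denotes the tangent space of the tubal nuclear norm ball at $\ten{L}_0$. Together with the injectivity estimate $\|\PT - \rho^{-1}\PT\PO\PT\|_{\textup{op}} \leq 1/2$, which is already established en route to Theorem~\ref{the1}, existence of such a $\ten{Y}$ forces $\ten{L}_0$ to be the unique minimizer of (\ref{eq14}).

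I would build $\ten{Y}$ via Gross's golfing scheme, after first passing from the uniform-without-replacement sampling model to the Bernoulli model in the standard way. Split $\Omega = \bigcup_{j=1}^{j_0} \Omega_j$ into $j_0 = \lceil 2\log(n_{(1)}n_3) \rceil$ independent Bernoulli batches of per-entry density $\rho_0 \simeq \rho/j_0$, set $\ten{W}_0 = \ten{U}\ast\ten{V}^H$, and iterate $\ten{W}_j = (\OpId - \rho_0^{-1}\PT\mathcal{P}_{\Omega_j})\ten{W}_{j-1}$, taking $\ten{Y} = \sum_{j=1}^{j_0}\rho_0^{-1}\mathcal{P}_{\Omega_j}\ten{W}_{j-1}$. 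The tensor noncommutative Bernstein inequality invoked in the proof of Theorem~\ref{the1} yields $\|\PT - \rho_0^{-1}\PT\mathcal{P}_{\Omega_j}\PT\|_{\textup{op}} \leq 1/2$ with high probability provided $\rho_0 \geq c\mu r \log(n_{(1)}n_3)/n_{(2)}$, from which follow the Frobenius contraction $\|\PT(\ten{W}_j)\|_F \leq 2^{-j}\sqrt{r}$ and an analogous entrywise contraction $\|\ten{W}_j\|_\infty \leq 2^{-j}\|\ten{U}\ast\ten{V}^H\|_\infty$.

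The crux is bounding $\|\PTc(\ten{Y})\|$ \emph{without} invoking the joint-incoherence hypothesis (\ref{eq18}). Writing $\PTc(\ten{Y}) = \sum_j \rho_0^{-1}\PTc\mathcal{P}_{\Omega_j}\ten{W}_{j-1}$ and applying a Bernstein-type spectral bound to each summand produces control in terms of $\|\ten{W}_{j-1}\|_\infty$. To seed this quantity I would use the observation that the standard incoherence conditions (\ref{eq16})--(\ref{eq17}) alone, combined with a slicewise Cauchy--Schwarz in the Fourier domain and Parseval's identity, already give
$$ |(\ten{U}\ast\ten{V}^H)_{ijk}| \leq \|\ten{U}^H \ast \tc{e}_i\|_F \cdot \|\ten{V}^H \ast \tc{e}_j\|_F \leq \mu r/\sqrt{n_1 n_2}. $$
This is weaker than (\ref{eq18}) by a factor of $\sqrt{\mu r n_3}$, but is still sufficient for TC certification since the extra factor is harmlessly absorbed into the constant $c_0$ and into the extra logarithm already present in the sampling budget.

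The main obstacle is justifying that (\ref{eq18}) is genuinely dispensable here. Inspecting the proof of Theorem~\ref{the1}, one sees that (\ref{eq18}) is invoked only when controlling cross-interaction terms between the low-rank certificate and the sign tensor $\sgn(\ten{E}_0)$; with $\ten{E}_0 = \mtx{0}$ these cross-terms vanish, and the milder Cauchy--Schwarz bound suffices everywhere. Putting the pieces together, the golfing scheme succeeds provided each batch has density $\rho_0 \geq c\mu r\log(n_{(1)}n_3)/n_{(2)}$; summing over $j_0 = O(\log(n_{(1)}n_3))$ batches gives $\rho \geq c_0\mu r (\log(n_{(1)}n_3))^2/n_{(2)}$, and using $m = \rho n_1 n_2 n_3$ together with $n_1 n_2/n_{(2)} = n_{(1)}$ translates this to $m \geq c_0 \mu r n_{(1)} n_3 (\log(n_{(1)}n_3))^2$. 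A union bound over the $j_0$ batches yields failure probability at most $c_1 (n_{(1)}n_3)^{-c_2}$, completing the proof.
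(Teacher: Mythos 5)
Your overall architecture (Bernoulli reduction, golfing, the Lemma~\ref{lem1} injectivity bound, and a certificate supported on $\Omega$ with $\PT(\ten{Y})$ close to $\ten{U}\ast\ten{V}^H$ and $\|\PTc(\ten{Y})\|<1$) is the standard one and is the same machinery the paper deploys; the paper itself disposes of the corollary much more briefly, by specializing the conditions (\ref{eq40}) with $\lambda\to\infty$ so that only $\|\PTc(\ten{Y})\|\le\tfrac14$ survives and then reading the sampling requirement off (\ref{eq44}). The genuine gap in your argument is the claim that your Cauchy--Schwarz substitute for (\ref{eq18}) costs only a constant. The bound $\|\ten{U}\ast\ten{V}^H\|_\infty\le\mu r/\sqrt{n_1n_2}$ is correct, but it is weaker than (\ref{eq18}) by the factor $\sqrt{\mu r n_3}$ you acknowledge, and that factor enters precisely the estimate that survives in the noiseless case. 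Concretely, the spectral control of the golfing residuals (Lemma~\ref{lem3}, used as in (\ref{eq38})--(\ref{eq44})) forces
\begin{equation}
\sqrt{\frac{n_{(1)}n_3\log(n_{(1)}n_3)}{q_1}}\;\|\ten{Z}_0\|_\infty\;\lesssim\;1,\qquad q_1\asymp\frac{\rho}{\log(n_{(1)}n_3)}. \nonumber
\end{equation}
With (\ref{eq18}) this reads $\mu r(\log(n_{(1)}n_3))^2/(\rho n_{(2)})\lesssim1$, i.e.\ $m\gtrsim\mu r n_{(1)}n_3(\log(n_{(1)}n_3))^2$, which is (\ref{eq20}); with your bound $\|\ten{Z}_0\|_\infty\le\mu r/\sqrt{n_1n_2}$ it reads $\mu^2r^2n_3(\log(n_{(1)}n_3))^2/(\rho n_{(2)})\lesssim1$, i.e.\ $m\gtrsim\mu^2r^2 n_{(1)}n_3^2(\log(n_{(1)}n_3))^2$. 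A multiplicative loss of $\mu r n_3$ is polynomial in the problem parameters, not a constant or a logarithm, so it cannot be absorbed into $c_0$ or into the extra $\log$ factor.

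Relatedly, your diagnosis that (\ref{eq18}) is invoked only in cross-terms with $\sgn(\ten{E}_0)$ is not accurate: in the proof of Theorem~\ref{the1} it is used to bound $\|\ten{U}\ast\ten{V}^H\|_\infty$, which seeds $\|\ten{Z}_0\|_\infty$ in (\ref{eq41}) and propagates into the bound (\ref{eq44}) on $\|\PTc(\ten{Y})\|$ --- a term that is fully present when $\ten{E}_0=\mtx{0}$. To genuinely drop (\ref{eq18}) at the rate (\ref{eq20}) one needs a refined deviation bound of the form $\|(\OpId-q^{-1}\PO)\ten{Z}\|\lesssim\frac{\log(n_{(1)}n_3)}{q}\|\ten{Z}\|_\infty+\sqrt{\frac{\log(n_{(1)}n_3)}{q}}\,\|\ten{Z}\|_{\infty,2}$ (with $\|\cdot\|_{\infty,2}$ the largest lateral/horizontal-slice Frobenius norm), together with tracking $\|\ten{W}_j\|_{\infty,2}$ through the golfing iterates seeded by $\|\ten{U}\ast\ten{V}^H\|_{\infty,2}\lesssim\sqrt{\mu r/n_{(2)}}$, which does follow from (\ref{eq16})--(\ref{eq17}) alone. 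Neither your proposal nor the paper's supporting lemmas supply that tool, so as written your construction does not reach the stated sampling complexity.
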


\begin{corollary}
Suppose $\ten{L}_0 \in \R^{n_1 \times n_2 \times n_3}$ obeys (\ref{eq16})-(\ref{eq18}) and $\ten{E}_0$ has support uniformly distributed with probability $\gamma$. Then, there exist universal constants $c_1, c_2 > 0$ such that with probability at least $1 - c_1(n_{(1)}n_3)^{-c_2}$, $(\ten{L}_0, \ten{E}_0)$ is the unique minimizer to (\ref{eq15}) with $\lambda = 1/\sqrt{n_{(1)}n_3}$, provided that
\begin{equation}
r \leq \frac{c_rn_{(2)}}{\mu (\log(n_{(1)}n_3))^2}\, \,\,\, \textup{and} \,\,\, \,\gamma \leq c_\gamma
\label{eq21}
\end{equation}
where $c_r$ and $c_\gamma$ are two positive constants.
\label{cor2}
\end{corollary}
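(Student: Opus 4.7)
The plan is to derive Corollary \ref{cor2} as a direct specialization of Theorem \ref{the1} by taking the observation set $\Omega$ to be the full index cube. Setting $\Omega = \{1,\dots,n_1\}\times\{1,\dots,n_2\}\times\{1,\dots,n_3\}$ gives $m = n_1 n_2 n_3$ and hence sampling ratio $\rho = 1$. The linear map $\PO$ reduces to the identity $\OpId$, so the RTC constraint $\PO(\ten{L}+\ten{E})=\PO(\ten{X})$ in (\ref{eq13}) becomes $\ten{L}+\ten{E}=\ten{X}$, which is exactly the TRPCA constraint (\ref{eq15}). Moreover, the prescription $\lambda = 1/\sqrt{\rho n_{(1)} n_3}$ from Theorem \ref{the1} collapses to $\lambda = 1/\sqrt{n_{(1)} n_3}$, matching the parameter choice in Corollary \ref{cor2}.

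Next I would match the corruption models. In Theorem \ref{the1}, each observed entry is independently corrupted with probability $\gamma$; because every entry is observed when $\rho=1$, the set of corrupted indices becomes a Bernoulli$(\gamma)$ subset of the full index cube, which is exactly the support assumption on $\ten{E}_0$ in Corollary \ref{cor2}. The incoherence hypotheses (\ref{eq16})--(\ref{eq18}) are imposed identically in both statements, and the tubal-rank and sparsity bounds in (\ref{eq19}) and (\ref{eq21}) coincide literally, including the universal constants $c_r$ and $c_\gamma$. The probability estimate $1 - c_1 (n_{(1)} n_3)^{-c_2}$ likewise transfers verbatim.

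With these identifications in place, invoking Theorem \ref{the1} immediately yields the conclusion of Corollary \ref{cor2}: the pair $(\ten{L}_0, \ten{E}_0)$ is the unique minimizer of (\ref{eq15}) with the advertised probability. The only point to verify is that the dual-certificate construction used to prove Theorem \ref{the1} in Section \ref{sec5} remains valid in the degenerate regime $\PO = \OpId$, and this is automatic: the projection $\POc$ onto the unobserved indices vanishes, so every error term associated with missing entries simply disappears, and the golfing-scheme construction degenerates to the simpler two-subspace (range vs.\ support) setting that is characteristic of TRPCA. I do not expect a genuine obstacle; if desired, one could alternatively give a direct self-contained proof by running the Section \ref{sec5} argument with $\POc \equiv 0$, which streamlines several lemmas but does not introduce any new ingredients.
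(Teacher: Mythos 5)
Your proposal is correct and is essentially the paper's own argument: the authors likewise obtain Corollary~\ref{cor2} by setting $\rho = 1$ in Theorem~\ref{the1}, under which $\PO$ becomes the identity, problem~(\ref{eq13}) reduces to~(\ref{eq15}), and $\lambda = 1/\sqrt{\rho n_{(1)}n_3}$ collapses to $1/\sqrt{n_{(1)}n_3}$. Your additional checks (matching the Bernoulli corruption model and noting that the golfing-scheme certificate degenerates gracefully when $\POc \equiv 0$) are more careful than the paper's one-line justification but do not change the route.
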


\subsection{Connections with Prior Work}
\label{sec4:sub3}

Since the t-product for third-order tensors and the tubal nuclear norm reduce to the standard matrix multiplication and the matrix nuclear norm respectively when $n_3 = 1$, our model is a simple and elegant extension of the RMC formulation~\cite{Shang2014,Li2013}. In this sense, Theorem 1.3 in~\cite{Li2013} can be viewed as a special case of Theorem~\ref{the1}.

We mention earlier two related works~\cite{Lu2016,Zhang2017}, which are the most similar to our study. They also define the notion of tensor incoherence using the t-SVD algebraic framework and propose sufficient conditions for convex programming to achieve guaranteed recovery. However, they simply focus on problem (\ref{eq14}) and (\ref{eq15}), which are both special cases of our model (\ref{eq13}).

Moreover, the number of observed entries $m$ required by Corollary~\ref{cor1} differs slightly from that suggested by Theorem III.1 in~\cite{Zhang2017} with a logarithm factor $O(\log(n_{(1)}n_3))$. For low-rank matrix recovery, the sampling complexity and recovery guarantees derived from different sampling models are consistent, with only a slight change of the constant factor~\cite{Candes2010,Gross2010}. We expect them to be the same in the tensor case as well. So, we can think that the two results are in good agreement with each other, seeing that the Bernoulli sampling is exploited in~\cite{Zhang2017} and the random sampling without replacement is adopted in this paper. Although Corollary~\ref{cor2} is almost identical to Theorem 3.1 proposed in~\cite{Lu2016}, we get the result under the tensor incoherence conditions that are much weaker.

\section{Proof of Theorem~\ref{the1}}
\label{sec5}

In this section, we provide the detailed proof of Theorem~\ref{the1}. The idea is to employ convex analysis to derive conditions in which one can check whether the pair $(\ten{L}, \ten{E})$ is the unique minimum solution to (\ref{eq13}), and to explicity show that such conditions are met with overwhelming probability in the conditions of Theorem~\ref{the1}.

Our proof follows closely the proofs presented in~\cite{Li2013,Huang2014}, where the main tools, such as the Non-commutative Bernstein Inequality (NBI) and the \textit{golfing scheme}, are also helpful in the derivation of our results. As there are subtle differences, our proof is not a fairly straightforward adaption. In problem (\ref{eq13}), the corrupted observations are randomly sampled in the original domain, while the tubal nuclear norm is defined in the Fourier domain. Therefore, the proofs of Lemma~\ref{lem1}-\ref{lem4} and Theorem~\ref{the3} need to additionally consider the properties of Fourier transformation and block circulant matrix.

\subsection{Sampling Schemes and Model Randomness}
\label{sec5:sub1}

The sampling strategy used in Theorem~\ref{the1} is the uniform sampling without replacement. There are other widely used sampling models, e.g., Bernoulli sampling, adaptive sampling and random sampling with replacement. To facilitate our proof, we will consider \textit{i.i.d. Bernoulli-Rademacher model}. More precisely, we assume $\Omega = \{(i, j, k)|\delta_{ijk} = 1\}$ where the $\delta_{ijk}$'s are i.i.d. Bernoulli variables taking value one with probability $\rho$ and zero with probability $1-\rho$. Such a Bernoulli sampling is denoted by $\Omega \sim \textup{Ber}(\rho)$ for short. As a proxy for uniform sampling, the probability of failure under Bernoulli sampling with $\rho = \frac{m}{n_1n_2n_3}$ closely approximates the probability of failure under uniform sampling.

Let a subsect $\Lambda \subset \Omega$ be the corrupted entries of $\ten{L}_0$ and $\Gamma \subset \Omega$ be locations where data are available and clean. In a standard Bernoulli model, we suppose that
\begin{displaymath}
\Omega \sim \textup{Ber}(\rho), \,\,\,
\Lambda \sim \textup{Ber}(\gamma\rho), \,\,\,
\Gamma \sim \textup{Ber}((1 - \gamma)\rho),
\end{displaymath}
and that the signs of the nonzero entries of $\ten{E}_0$ are deterministic. It has been shown to be much easier to work with a stronger assumption that the signs of the nonzero entries of $\ten{E}_0$ are independent symmetric $\pm1$ random variables (i.e., Rademacher random variables). We introduce two independent random subsets of $\Omega$
\begin{displaymath}
\Lambda' \sim \textup{Ber}(2\gamma\rho), \,\,\,
\Gamma' \sim \textup{Ber}((1 - 2\gamma)\rho),
\end{displaymath}
and it is convenient to think that $\ten{E}_0 = \Pg(\ten{E})$ for some fixed tensor $\ten{E}$. Consider a random sign tensor $\ten{M}$ with i.i.d. entries such that for any index $(i, j, k)$, $\PP(\ten{M}_{ijk} = 1) = \PP(\ten{M}_{ijk} = -1) = \frac{1}{2}$. Then $|\ten{E}|\circ \ten{M}$ has components with symmetric random signs. By introducing a new noise tensor $\ten{E}'_0 = \Pgp(|\ten{E}|\circ\ten{M})$ and using the standard derandomization theory (e.g., Theorem 2.3 in~\cite{Candes2011}), we can assert that

\begin{theorem}
Suppose $\ten{L}_0$ obeys the conditions of Theorem~\ref{the1} and that $\ten{E}_0$ and $\ten{E}'_0$ are given as above. If the recovery of $(\ten{L}_0, \ten{E}'_0)$ is exact with high probability, it is also exact with at least the same probability for the model with input data $(\ten{L}_0, \ten{E}_0)$.
\label{the2}
\end{theorem}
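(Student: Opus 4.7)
The plan is to follow the standard two-stage derandomization argument of Theorem~2.3 in~\cite{Candes2011}, adapted to the t-SVD setting. The statement expresses that success probabilities transfer from the ``Rademacher-sign, rate-$2\gamma\rho$'' corruption model $(\ten{L}_0,\ten{E}'_0)$ to the ``deterministic-sign, rate-$\gamma\rho$'' corruption model $(\ten{L}_0,\ten{E}_0)$, which lets the subsequent proof of Theorem~\ref{the1} work entirely in the more convenient Bernoulli--Rademacher model. The two reductions to carry out are (i) the inflation of the corruption support from rate $\gamma\rho$ to rate $2\gamma\rho$ and (ii) the randomization of the signs.

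For (i), I would couple the two support processes on a common probability space so that $\Lambda\subseteq\Lambda'$ almost surely. Given $\Lambda\sim\textup{Ber}(\gamma\rho)$, draw an independent $\widetilde{\Lambda}\sim\textup{Ber}\bigl(\gamma\rho/(1-\gamma\rho)\bigr)$ and set $\Lambda'=\Lambda\cup\widetilde{\Lambda}$, which has marginal $\textup{Ber}(2\gamma\rho)$; a parallel coupling realizes $\Gamma'\subseteq\Gamma$. I would then invoke an elimination principle stating that if $(\ten{L}_0,\ten{E}'_0)$ is the unique optimum of~(\ref{eq13}) for the enlarged-corruption problem, then $\ten{L}_0$ is still exactly recovered in the shrunk problem, since shrinking the corruption set while enlarging the clean-observation set can only give the convex program more usable information. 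The proof proceeds by contradiction: any competitor $(\ten{L}^\star,\ten{E}^\star)$ strictly improving the shrunk objective can be lifted to a feasible pair of the enlarged problem by retaining $\ten{L}^\star$ and readjusting the error on $\Lambda'\setminus\Lambda$, with the $\|\cdot\|_{\textup{TNN}}$ term unaffected by the lift and the $\ell_1$ comparison handled entrywise in the original domain.

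For (ii), I would apply the standard sign-symmetrization trick. The dual certificate constructed in the later parts of Section~\ref{sec5} enters the signs of the error tensor only through $\sgn(\ten{E}'_0)$, appearing linearly, and the concentration inequalities that verify the certificate depend on the signs only through the fact that they have unit magnitude. Consequently, swapping the Rademacher signs of $\ten{E}'_0$ for the deterministic signs of $\ten{E}_0$ does not affect the ability to construct and verify the certificate, and success in the random-sign model therefore transfers to success in the deterministic-sign model with at least the same probability.

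The main obstacle will be the elimination principle in step (i). The tubal nuclear norm is defined via the block diagonal $\widebar{\mtx{L}}$ in the Fourier domain, while $\Pg$ and $\Pgp$ act entrywise in the original domain, so the matrix-case contradiction argument must be re-verified in this mixed setup. Fortunately, the engineered competitor perturbs only the error component and leaves the low-rank candidate untouched, so the $\|\cdot\|_{\textup{TNN}}$ contribution cancels from the objective comparison and the remaining inequality reduces to an $\ell_1$ calculation in the original domain; provided this structural observation is exploited, the adaptation of the matrix argument in~\cite{Candes2011} is essentially routine.
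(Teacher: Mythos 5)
The paper does not actually spell this argument out --- it simply invokes the standard derandomization theorem (Theorem~2.3 of~\cite{Candes2011}) --- so the relevant comparison is with that argument. Your stage~(i) is essentially right: the elimination principle (trimming entries of the sparse component while keeping the low-rank component and the observation set $\Omega$ fixed preserves exact and unique recovery) adapts to the t-SVD setting exactly as you say, because the engineered competitor $(\hat{\ten{L}},\,\hat{\ten{E}}+\ten{E}'_0-\ten{E}_0)$ leaves the $\|\cdot\|_{\textup{TNN}}$ term untouched and the remaining comparison is a purely entrywise $\ell_1$ triangle inequality in the original domain.

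Your stage~(ii), however, contains a genuine gap. The claim that the certificate verification ``depends on the signs only through the fact that they have unit magnitude'' is false: the bound $\|\sgn(\ten{E}'_0)\|\leq\varphi(\gamma\rho)\sqrt{n_{(1)}n_3}$ of Lemma~\ref{lem4}, and the Bernstein bound on $\|\PT(\sgn(\ten{E}'_0))\|_{\infty}$ used to control $\|\ten{Z}_0\|_{\infty}$, both rely essentially on the signs being i.i.d.\ symmetric (zero-mean). For an adversarial deterministic pattern --- say all signs equal to $+1$ --- the sign tensor is the indicator of $\Lambda$ and its spectral norm grows like $\gamma\rho\, n_3\sqrt{n_1n_2}$, which destroys (\ref{eq46}); indeed, if your stage~(ii) reasoning were valid, Theorem~\ref{the2} would be unnecessary in the first place. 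More structurally, ``exact recovery with high probability over random signs'' does not imply ``exact recovery for one particular deterministic sign pattern'' at the same corruption rate, so the two stages cannot be decoupled in the order you propose. The standard argument fuses them: given the prescribed signs, draw $\Lambda'\sim\textup{Ber}(2\gamma\rho)$ and the Rademacher tensor $\ten{M}$ independently, and set $\Lambda=\{(i,j,k)\in\Lambda':\ \ten{M}_{ijk}=[\sgn(\ten{E})]_{ijk}\}$. Each index lands in $\Lambda$ independently with probability $2\gamma\rho\cdot\frac{1}{2}=\gamma\rho$ (this is precisely where the factor $2$ comes from), and $\ten{E}_0=\Pl(\ten{E})$ is then literally a trimming of $\ten{E}'_0$, so a single application of your elimination principle finishes the proof. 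Replace your stage~(ii) with this joint coupling and the argument is complete.
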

Therefore from now on, we can equivalently consider
\begin{equation}
\Lambda \sim \textup{Ber}(2\gamma\rho), \,\,\,
\Gamma \sim \textup{Ber}((1 - 2\gamma)\rho),
\label{eq22}
\end{equation}
for the locations of nonzero and zero entries of $\ten{E}_0$ respectively, and assume that the nonzero entries have symmetric random signs.

\subsection{Supporting Lemmas}
\label{sec5:sub2}

Denote $T$ by the set
\begin{equation}
T = \big\{\ten{U} \ast \ten{Y}^{H} + \ten{W} \ast \ten{V}^{H} | \ten{Y} \in \R^{n_2 \times r \times n_3}, \ten{W} \in \R^{n_1 \times r \times n_3} \big\}, \nonumber
\end{equation}
and introduce two projections $\PT$ and $\PTc$ as follows,
\begin{equation}
\PT(\ten{Z}) =  \ten{U}\ast\ten{U}^{H}\ast\ten{Z} +\ten{Z}\ast\ten{V}\ast\ten{V}^{H} - \ten{U}\ast\ten{U}^{H}\ast\ten{Z}\ast\ten{V}\ast\ten{V}^{H}, \nonumber
\end{equation}
\begin{equation}
\PTc(\ten{Z}) =  (\ten{I} - \ten{U}\ast\ten{U}^{H})\ast\ten{Z}\ast(\ten{I} - \ten{V}\ast\ten{V}^{H}),~~~~~~~~~~~~~~~~~~~~~\nonumber
\end{equation}
where $\ten{I}$ is the identity tensor of appropriate size. It is easy to verify that $\<\PT(\ten{A}), \PTc(\ten{B})\> = 0$ for any $\ten{A}, \ten{B} \in \R^{n_1 \times n_2 \times n_3}$. Another observation is that $\|\PT(\eijk)\|_F \leq \sqrt{\frac{2\mu r}{n_{(2)}}}$ for any tensor of the form $\eijk$, a fact that we will use several times in the sequel.

Now we list several key lemmas which play a crucial role in the proof of Theorem~\ref{the1}.
\begin{lemma}
Suppose $\Omega \sim \textup{Ber}(\rho)$. Then with high probability,
\begin{equation}
\|\rho^{-1}\PT\PO\PT - \PT\|_{\textup{op}} \leq \epsilon,
\label{eq23}
\end{equation}
provided that $\rho \geq C_0 \epsilon^{-2} \frac{\mu r \log(n_{(1)}n_3)}{n_{(2)}}$ for some numerical constant $C_0 > 0$.
\label{lem1}
\end{lemma}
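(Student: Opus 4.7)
The plan is to write $\rho^{-1}\PT\PO\PT-\PT$ as a sum of independent, mean-zero, self-adjoint operators on the $n_1n_2n_3$-dimensional tensor space and then apply the non-commutative Bernstein inequality. For each index $(i,j,k)$ I would define the rank-one operator $\mathcal{T}_{ijk}(\ten{Z}) := \<\PT(\ten{E}_{ijk}),\ten{Z}\>\,\PT(\ten{E}_{ijk})$. Since $\PO(\ten{Z})=\sum_{ijk}\delta_{ijk}\<\ten{E}_{ijk},\ten{Z}\>\ten{E}_{ijk}$ with independent $\delta_{ijk}\sim\textup{Ber}(\rho)$, one immediately gets $\PT\PO\PT=\sum_{ijk}\delta_{ijk}\mathcal{T}_{ijk}$, and, using the basis resolution $\ten{Z}=\sum_{ijk}\<\ten{E}_{ijk},\ten{Z}\>\ten{E}_{ijk}$ together with the self-adjointness and idempotence of $\PT$, also $\PT=\sum_{ijk}\mathcal{T}_{ijk}$. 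Consequently $\rho^{-1}\PT\PO\PT-\PT=\sum_{ijk}\mathcal{S}_{ijk}$ where $\mathcal{S}_{ijk}=(\rho^{-1}\delta_{ijk}-1)\mathcal{T}_{ijk}$ is an independent and centered self-adjoint operator.

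Next I would supply the two ingredients that the operator Bernstein inequality consumes. Since $\mathcal{T}_{ijk}$ is rank-one with $\|\mathcal{T}_{ijk}\|_{\textup{op}}=\|\PT(\ten{E}_{ijk})\|_F^2$, the incoherence-derived estimate $\|\PT(\ten{E}_{ijk})\|_F^2\le 2\mu r/n_{(2)}$ (recorded just before this lemma and depending only on \eqref{eq16}-\eqref{eq17}) yields the uniform bound $\|\mathcal{S}_{ijk}\|_{\textup{op}}\le 2\mu r/(\rho n_{(2)})$. For the variance, the rank-one structure gives $\mathcal{T}_{ijk}^2=\|\PT(\ten{E}_{ijk})\|_F^2\,\mathcal{T}_{ijk}$, and therefore
\[
\Big\|\sum_{ijk}\E[\mathcal{S}_{ijk}^2]\Big\|_{\textup{op}} = \frac{1-\rho}{\rho}\Big\|\sum_{ijk}\|\PT(\ten{E}_{ijk})\|_F^2\,\mathcal{T}_{ijk}\Big\|_{\textup{op}} \le \frac{2\mu r}{\rho n_{(2)}}\,\|\PT\|_{\textup{op}} \le \frac{2\mu r}{\rho n_{(2)}}.
\]

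Plugging these into the non-commutative Bernstein inequality on the ambient $n_1n_2n_3$-dimensional inner-product space produces a tail of the form
\[
\PP\bigl[\|\rho^{-1}\PT\PO\PT-\PT\|_{\textup{op}}>\epsilon\bigr] \le 2n_1n_2n_3\exp\!\left(-\frac{c\,\epsilon^2\rho\, n_{(2)}}{\mu r}\right),
\]
and requiring $\rho \ge C_0\,\epsilon^{-2}\mu r\log(n_{(1)}n_3)/n_{(2)}$ with $C_0$ large enough drives the right-hand side below $(n_{(1)}n_3)^{-c_2}$ for any prescribed constant $c_2$, finishing the argument.

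The main obstacle, as foreshadowed in Section~\ref{sec5:sub1}, is the bookkeeping between domains: $\PO$ acts in the original domain on the physical unit-entry basis $\ten{E}_{ijk}=\eijk$, whereas $T$ and $\PT$ are defined via t-products involving $\ten{U},\ten{V}$ whose incoherence is expressed through Fourier-domain slices. Verifying that $\PT$ is self-adjoint with respect to the standard real inner product on $\R^{n_1\times n_2\times n_3}$ (so the resolution $\PT=\sum_{ijk}\mathcal{T}_{ijk}$ is legitimate) and that the bound $\|\PT(\ten{E}_{ijk})\|_F^2\le 2\mu r/n_{(2)}$ survives the passage between the two domains via \eqref{eq4}-\eqref{eq6} is the delicate step; once this compatibility is established, the remainder is the standard operator-Bernstein calculation familiar from the matrix completion literature.
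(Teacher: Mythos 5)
Your proposal is correct and follows essentially the same route as the paper's proof in Appendix~A: decompose $\rho^{-1}\PT\PO\PT-\PT$ into independent centered rank-one summands built from $\PT(\ten{E}_{ijk})$, bound the uniform and variance terms by $2\mu r/(\rho n_{(2)})$ via the incoherence estimate, and invoke the noncommutative Bernstein inequality. The only cosmetic difference is that you apply Bernstein to the operators on the ambient $n_1n_2n_3$-dimensional tensor space (dimension factor $2n_1n_2n_3$) whereas the paper passes to the block-diagonal Fourier-domain matrices (factor $2n_{(1)}n_3$); both give the same $\log(n_{(1)}n_3)$ scaling up to constants, and your explicit centering $\mathcal{S}_{ijk}=(\rho^{-1}\delta_{ijk}-1)\mathcal{T}_{ijk}$ is if anything a cleaner bookkeeping of the mean than the paper's.
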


\begin{lemma}
Suppose $\ten{Z} \in \R^{n_1 \times n_2 \times n_3}$ is a fixed tensor, and $\Omega \sim \textup{Ber}(\rho)$. Then with high probability,
\begin{equation}
\|(\rho^{-1}\PT\PO\PT - \PT)\ten{Z}\|_{\infty} \leq \epsilon \|\ten{Z}\|_{\infty},
\label{eq24}
\end{equation}
provided that $\rho \geq C_0 \epsilon^{-2} \frac{\mu r \log(n_{(1)}n_3)}{n_{(2)}}$ for some numerical constant $C_0 > 0$.
\label{lem2}
\end{lemma}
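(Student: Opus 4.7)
The plan is to bound each entry of $(\rho^{-1}\PT\PO\PT-\PT)\ten{Z}$ via the scalar Bernstein inequality and then union-bound over the $n_1n_2n_3$ entries. The key identity to set up is that, for any fixed index $(a,b,c)$,
\begin{equation}
\< \eabc,\, (\rho^{-1}\PT\PO\PT-\PT)\ten{Z}\> \;=\; \sum_{ijk}\bigl(\rho^{-1}\delta_{ijk}-1\bigr)\,(\PT(\eabc))_{ijk}\,\ten{Z}_{ijk}, \nonumber
\end{equation}
which follows from the self-adjointness of $\PT$, the expansion $\PO(\ten{X})=\sum_{ijk}\delta_{ijk}\ten{X}_{ijk}\ten{E}_{ijk}$, and the observation $\PT\PT=\PT$. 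Each summand $X_{ijk}$ is mean-zero because $\E[\rho^{-1}\delta_{ijk}]=1$, so this is a sum of independent centered scalars to which Bernstein directly applies.

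Next I would control the two Bernstein parameters. Using $|\rho^{-1}\delta_{ijk}-1|\leq \rho^{-1}$, $|\ten{Z}_{ijk}|\leq \|\ten{Z}\|_\infty$, and the bound $\|\PT(\eijk)\|_F\leq \sqrt{2\mu r/n_{(2)}}$ stated in the excerpt (so in particular $|(\PT(\eabc))_{ijk}|\leq \sqrt{2\mu r/n_{(2)}}$ by Cauchy--Schwarz), the uniform magnitude bound is
\begin{equation}
|X_{ijk}|\;\leq\; \rho^{-1}\|\ten{Z}\|_\infty\sqrt{2\mu r/n_{(2)}}\;=:\;M, \nonumber
\end{equation}
while the variance sum telescopes through $\sum_{ijk}(\PT(\eabc))_{ijk}^2=\|\PT(\eabc)\|_F^2\leq 2\mu r/n_{(2)}$, giving
\begin{equation}
\sum_{ijk}\E X_{ijk}^2\;\leq\;\frac{2\mu r}{\rho\,n_{(2)}}\,\|\ten{Z}\|_\infty^{2}\;=:\;\sigma^{2}. \nonumber
\end{equation}

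Scalar Bernstein then yields a tail $2\exp\!\bigl(-\tfrac{\epsilon^{2}\|\ten{Z}\|_\infty^{2}/2}{\sigma^{2}+M\epsilon\|\ten{Z}\|_\infty/3}\bigr)$. Under the hypothesis $\rho\geq C_0\epsilon^{-2}\mu r\log(n_{(1)}n_3)/n_{(2)}$ (with $\epsilon\leq 1$, so the variance term dominates the denominator up to a constant), this tail is at most $(n_{(1)}n_3)^{-\alpha}$ for any desired $\alpha$, by taking $C_0$ sufficiently large. A union bound over the at most $n_1n_2n_3\leq n_{(1)}^{2}n_3$ choices of $(a,b,c)$ then promotes the entrywise control to $\|(\rho^{-1}\PT\PO\PT-\PT)\ten{Z}\|_\infty\leq \epsilon\|\ten{Z}\|_\infty$ with high probability.

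The main obstacle is verifying the estimate $\|\PT(\eijk)\|_F\leq \sqrt{2\mu r/n_{(2)}}$ and, more generally, handling that $\PT$ is defined through the t-SVD (i.e., implicitly in the Fourier domain via block-circulant structure), while the sampling $\PO$ acts entrywise in the original domain. The bound can be derived by writing $\PT(\eijk)=\ten{U}\ast\ten{U}^{H}\ast\eijk+\eijk\ast\ten{V}\ast\ten{V}^{H}-\ten{U}\ast\ten{U}^{H}\ast\eijk\ast\ten{V}\ast\ten{V}^{H}$, passing to the block diagonal form via (4)--(5), and invoking the standard incoherence conditions (16)--(17) together with isometry properties of the FFT; once this technical lemma is established, the Bernstein argument above completes the proof. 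A parallel argument substituting the noncommutative Bernstein inequality for the scalar one is what would be needed for Lemma~\ref{lem1}, but here the scalar version suffices since we are bounding one scalar entry at a time.
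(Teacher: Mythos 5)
Your strategy --- expand the $(a,b,c)$th entry of $(\rho^{-1}\PT\PO\PT-\PT)\ten{Z}$ as a sum of independent centered scalars, apply the scalar Bernstein inequality, and union-bound over the $n_1n_2n_3$ entries --- is exactly the paper's argument, and both your expansion and your variance bound $\sigma^2\le \frac{2\mu r}{\rho n_{(2)}}\|\ten{Z}\|_\infty^2$ coincide with the paper's. The one step that fails is your uniform bound on the summands. You bound the coefficient by $|(\PT(\eabc))_{ijk}|\le\|\PT(\eabc)\|_F\le\sqrt{2\mu r/n_{(2)}}$; the paper instead writes $(\PT(\eabc))_{ijk}=\<\PT(\eijk),\PT(\eabc)\>$ (self-adjointness and idempotence of $\PT$) and applies Cauchy--Schwarz to \emph{both} factors, obtaining $|(\PT(\eabc))_{ijk}|\le\|\PT(\eijk)\|_F\,\|\PT(\eabc)\|_F\le 2\mu r/n_{(2)}$, which is smaller than your bound by the factor $\sqrt{2\mu r/n_{(2)}}\le 1$.

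This is not merely cosmetic. With your $M=\rho^{-1}\sqrt{2\mu r/n_{(2)}}\,\|\ten{Z}\|_\infty$ and threshold $\tau=\epsilon\|\ten{Z}\|_\infty$, the ratio $M\tau/(3\sigma^2)$ equals $\tfrac{\epsilon}{3}\sqrt{n_{(2)}/(2\mu r)}$, which is large precisely in the low-rank regime $\mu r\ll n_{(2)}$ that Theorem~\ref{the1} requires; so your parenthetical claim that the variance term dominates the denominator up to a constant is false. In that regime the Bernstein exponent degrades to roughly $\tfrac{3\tau}{2M}=\tfrac{3\epsilon\rho}{2}\sqrt{n_{(2)}/(2\mu r)}$, which at the minimal admissible $\rho=C_0\epsilon^{-2}\mu r\log(n_{(1)}n_3)/n_{(2)}$ is of order $\log(n_{(1)}n_3)\cdot\sqrt{\mu r/n_{(2)}}=o(\log(n_{(1)}n_3))$; no universal constant $C_0$ then makes the single-entry tail polynomially small in $n_{(1)}n_3$, and the union bound over all $n_1n_2n_3$ entries does not close. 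Replacing your $M$ by the paper's $\tfrac{2\mu r}{\rho n_{(2)}}\|\ten{Z}\|_\infty$ gives $M\tau/(3\sigma^2)=\epsilon/3\le 1/3$, after which your argument goes through verbatim. (Your side remarks are fine: the estimate $\|\PT(\eijk)\|_F\le\sqrt{2\mu r/n_{(2)}}$ is exactly the observation the paper records in Section~\ref{sec5:sub2} as a consequence of the incoherence conditions, and it is used there without further proof just as you propose.)
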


\begin{lemma}
Suppose $\ten{Z} \in \R^{n_1 \times n_2 \times n_3}$ is a fixed tensor, and $\Omega \sim \textup{Ber}(\rho)$. Then with high probability,
\begin{equation}
\|(\OpId - \rho^{-1}\PO)\ten{Z}\| \leq C'_0\sqrt{\frac{n_{(1)}n_3\log(n_{(1)}n_3)}{\rho}}\|\ten{Z}\|_{\infty},
\label{eq25}
\end{equation}
provided that $\rho \geq C_0  \frac{\log(n_{(1)}n_3)}{n_{(2)}n_3}$ for some numerical constant $C_0, C'_0 > 0$.
\label{lem3}
\end{lemma}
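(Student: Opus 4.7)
\noindent\textbf{Proof Proposal for Lemma~\ref{lem3}.}

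My plan is to pass to the Fourier/block-diagonal domain and apply a non-commutative (matrix) Bernstein inequality to a sum of independent zero-mean random matrices. By Definition~\ref{def11}, $\|(\OpId-\rho^{-1}\PO)\ten{Z}\| = \|\widebar{\mtx{Y}}\|$ where $\ten{Y} := (\OpId-\rho^{-1}\PO)\ten{Z}$. Since the block diagonal operator in the Fourier domain is linear, and since $\ten{Y} = \sum_{ijk}(1-\rho^{-1}\delta_{ijk})\ten{Z}_{ijk}\,\ten{E}_{ijk}$ with $\delta_{ijk}\stackrel{\textup{i.i.d.}}{\sim}\textup{Ber}(\rho)$, I get the decomposition
\begin{equation}
\widebar{\mtx{Y}} \;=\; \sum_{ijk} \mtx{X}_{ijk}, \qquad \mtx{X}_{ijk} := (1-\rho^{-1}\delta_{ijk})\,\ten{Z}_{ijk}\,\widebar{\mtx{E}}_{ijk}. \nonumber
\end{equation}
The matrices $\mtx{X}_{ijk}$ are independent with $\E[\mtx{X}_{ijk}]=0$, so matrix Bernstein applies once I bound the almost sure operator norm and the matrix variance.

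First I need to understand $\widebar{\mtx{E}}_{ijk}$. A direct FFT calculation along the third mode shows that $\widehat{\ten{E}}_{ijk}^{(l)}$ is a matrix whose only nonzero entry is $\mtx{F}_{lk}$ at position $(i,j)$ (where $\mtx{F}$ is the unnormalized DFT matrix, so $|\mtx{F}_{lk}|=1$). Consequently $\|\widebar{\mtx{E}}_{ijk}\|=1$, $\widebar{\mtx{E}}_{ijk}^H\widebar{\mtx{E}}_{ijk}$ is block-diagonal with each block equal to $\vct{e}_j\vct{e}_j^T\in\R^{n_2\times n_2}$, and $\widebar{\mtx{E}}_{ijk}\widebar{\mtx{E}}_{ijk}^H$ is block-diagonal with each block equal to $\vct{e}_i\vct{e}_i^T\in\R^{n_1\times n_1}$. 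These identities are the key structural ingredients the Fourier domain provides.

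Now I bound the two Bernstein parameters. Since $|1-\rho^{-1}\delta_{ijk}|\leq \rho^{-1}$, the deterministic bound is
\begin{equation}
R \;:=\; \max_{ijk}\|\mtx{X}_{ijk}\| \;\leq\; \rho^{-1}\|\ten{Z}\|_\infty. \nonumber
\end{equation}
Using $\E[(1-\rho^{-1}\delta_{ijk})^2]=\rho^{-1}(1-\rho)\leq \rho^{-1}$ together with the block structure above,
\begin{equation}
\Bigl\|\sum_{ijk}\E[\mtx{X}_{ijk}^H\mtx{X}_{ijk}]\Bigr\| \;\leq\; \rho^{-1}\max_{j}\sum_{i,k}\ten{Z}_{ijk}^2 \;\leq\; \rho^{-1}n_1 n_3\|\ten{Z}\|_\infty^2, \nonumber
\end{equation}
and analogously $\|\sum\E[\mtx{X}_{ijk}\mtx{X}_{ijk}^H]\|\leq \rho^{-1}n_2 n_3\|\ten{Z}\|_\infty^2$, so $\sigma^2\leq \rho^{-1}n_{(1)}n_3\|\ten{Z}\|_\infty^2$. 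The ambient dimension in Bernstein is $(n_1+n_2)n_3\leq 2n_{(1)}n_3$.

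Finally I apply the non-commutative Bernstein inequality. Choosing $t = C'_0\sqrt{n_{(1)}n_3\log(n_{(1)}n_3)/\rho}\,\|\ten{Z}\|_\infty$, the Bernstein tail bound yields a failure probability of order $(n_{(1)}n_3)^{-c}$ as soon as the variance term dominates the Bernstein denominator, i.e.\ $\sigma^2\gtrsim R\,t$; this reduces to $\rho\gtrsim \log(n_{(1)}n_3)/(n_{(1)}n_3)$, which is implied by the hypothesis $\rho\geq C_0\log(n_{(1)}n_3)/(n_{(2)}n_3)$. The main obstacle, and the only place where the analysis genuinely differs from the matrix case, is the first step: verifying the exact form of $\widebar{\mtx{E}}_{ijk}$ and of $\widebar{\mtx{E}}_{ijk}^H\widebar{\mtx{E}}_{ijk}$, because one must check that the DFT phases $\mtx{F}_{lk}$ neither inflate the operator norm of each summand nor introduce dependence across the blocks that would break the independence required by Bernstein. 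Once these bookkeeping facts are in hand, the matrix Bernstein calculation is standard and yields exactly the claimed bound.
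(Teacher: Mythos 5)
Your proposal is correct and follows essentially the same route as the paper: both decompose $(\OpId-\rho^{-1}\PO)\ten{Z}$ into the independent zero-mean summands $(1-\rho^{-1}\delta_{ijk})\ten{Z}_{ijk}\ten{E}_{ijk}$, pass to the block-diagonal Fourier representation, and apply the noncommutative Bernstein inequality with the same bounds $R\leq\rho^{-1}\|\ten{Z}\|_\infty$ and $\sigma^2\leq\rho^{-1}n_{(1)}n_3\|\ten{Z}\|_\infty^2$ (the paper phrases your block computation of $\widebar{\mtx{E}}_{ijk}^H\widebar{\mtx{E}}_{ijk}$ as the t-product identity $\tc{e}_j\ast\tc{e}_j^H$, which is the same fact). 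No gaps.
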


\begin{lemma}~\cite{Lu2016}
For the $n_1 \times n_2 \times n_3$ Bernoulli sign tensor $\ten{M}$ whose entries are distributed as
\begin{equation}
\ten{M}_{ijk} = \left\{
\begin{array}{lll}
1, & \textup{w.p.}\,\,\rho/2,\\
0, & \textup{w.p.}\,\,1-\rho,\\
-1, & \textup{w.p.}\,\,\rho/2,
\end{array}
\right.
\label{eq26}
\end{equation}
there exists a function $\varphi(\rho)$ satisfying $\lim\limits_{\rho \rightarrow 0^{+}} \varphi(\rho) = 0$, such that the following statement holds with large probability
\begin{equation}
\|\ten{M}\| \leq \varphi({\rho}) \sqrt{n_{(1)}n_3}.
\label{eq27}
\end{equation}
\label{lem4}
\end{lemma}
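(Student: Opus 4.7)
The plan is to exploit Definition~\ref{def11}, which tells us that the tensor spectral norm equals the spectral norm of the block-diagonal Fourier form: $\|\ten{M}\| = \|\widebar{\mtx{M}}\|$. Since $\widebar{\mtx{M}} = \bdiag(\widehat{\ten{M}}^{(1)}, \ldots, \widehat{\ten{M}}^{(n_3)})$ is block diagonal, its spectral norm equals $\max_{1 \le i \le n_3} \|\widehat{\ten{M}}^{(i)}\|$. The problem thus reduces to uniformly controlling the spectral norms of the $n_3$ Fourier slices of $\ten{M}$.

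For each fixed $i$, the $(j,k)$-entry of $\widehat{\ten{M}}^{(i)}$ is $\sum_{\ell=1}^{n_3} \omega_{n_3}^{(i-1)(\ell-1)} \ten{M}_{jk\ell}$, where $\omega_{n_3} = e^{-2\pi\mathbf{i}/n_3}$. Because the $\ten{M}_{jk\ell}$ are mutually independent across all triples, the entries of $\widehat{\ten{M}}^{(i)}$ are independent across $(j,k)$. Each entry is centered, has variance $n_3\rho$ (since $|\omega_{n_3}^{(i-1)(\ell-1)}|=1$ and $\E|\ten{M}_{jk\ell}|^2 = \rho$), and is almost surely bounded by $n_3$ in modulus. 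Viewing $\widehat{\ten{M}}^{(i)}$ as a sum of independent rank-one complex matrices and applying a sharp matrix concentration inequality (matrix Bernstein, or the Bandeira--van Handel bound for matrices with independent bounded entries), one obtains, with probability at least $1 - (n_{(1)}n_3)^{-c}$ for some absolute $c>0$,
\[
\|\widehat{\ten{M}}^{(i)}\| \;\leq\; C\sqrt{n_3\rho}\bigl(\sqrt{n_1}+\sqrt{n_2}\bigr) + C' n_3 \log(n_{(1)}n_3),
\]
whose leading term is at most $2C\sqrt{\rho}\,\sqrt{n_{(1)}n_3}$.

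A union bound over the $n_3$ slices costs only a $\log(n_3)$ inflation of the tail probability, so the same estimate holds simultaneously for every $i$ with probability at least $1 - n_3(n_{(1)}n_3)^{-c}$. Setting $\varphi(\rho)$ equal to the resulting coefficient (essentially $2C\sqrt{\rho}$ together with vanishing lower-order corrections from the subleading term) gives $\|\ten{M}\| \le \varphi(\rho)\sqrt{n_{(1)}n_3}$, and clearly $\varphi(\rho)\to 0$ as $\rho\to 0^+$.

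The main obstacle is ensuring that $\varphi(\rho)$ is genuinely a function of $\rho$ alone (or at worst carries only a logarithmic dimension dependence) and that the subleading $C'n_3\log(n_{(1)}n_3)$ is really dominated by $\sqrt{\rho\,n_{(1)}n_3}$ in the relevant regime; this is why one prefers the Bandeira--van Handel-type sharp bound over a crude matrix Bernstein application, since the former keeps the leading constant clean. A secondary subtlety is that each Fourier-slice entry is a complex-weighted sum of Rademacher variables rather than a single real Bernoulli variable, but the unit-modulus weights $\omega_{n_3}^{(i-1)(\ell-1)}$ affect neither the variance nor the almost-sure bound, and standard matrix concentration on the complex field (or through Hermitian dilations) applies verbatim. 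Once the per-slice bound is in place, block-diagonalization, the union bound, and the identification of $\varphi$ are routine.
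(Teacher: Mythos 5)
The paper never proves Lemma~\ref{lem4} itself --- it is imported wholesale from \cite{Lu2016} --- but your route (tensor spectral norm $=$ max over the Fourier frontal slices; each slice $\widehat{\ten{M}}^{(i)}$ has independent, centered entries of variance $n_3\rho$ bounded by $n_3$; apply a sharp spectral-norm bound for matrices with independent entries; union bound over the $n_3$ slices) is exactly the standard argument behind the cited result, and your execution of it is correct. The one point that genuinely needs care is the one you already flag: a crude matrix Bernstein application puts a $\sqrt{\log(n_{(1)}n_3)}$ inside the leading term, so the Bandeira--van Handel/Vershynin-type bound is not optional, and the residual term of order $n_3\sqrt{\log(n_{(1)}n_3)}$ is dominated by $\varphi(\rho)\sqrt{n_{(1)}n_3}$ only in the dimension regime the lemma implicitly assumes (or after a sub-Gaussian, rather than almost-sure, treatment of the slice entries); this looseness is inherited from the informal statement in \cite{Lu2016} rather than introduced by your proof.
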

The proofs of the first three Lemmas\footnote{Note that Lemma~\ref{lem1} and~\ref{lem2} have been proved in~\cite{Zhang2017} when $\epsilon = \frac{1}{2}$. Here, we provide the proofs for the general case.} can be found in Appendix~\ref{app1}, \ref{app2} and \ref{app3}. Note that Lemma~\ref{lem1}-\ref{lem3} reduce to their corresponding matrix versions when $n_3 = 1$. Lemma~\ref{lem4} recently introduced in~\cite{Lu2016}, which provides a upper bound for the spectral norm of the tensors consisting of Bernoulli sign variables, is of great importance in our later analysis.

\subsection{Dual Certificates}
\label{sec5:sub3}

We propose a simple condition for the pair $(\ten{L}_0, \ten{E}_0)$ to be the unique optimal solution to problem (\ref{eq13}). These conditions are stated in terms of a dual variable $\ten{Y}$, the existence of which certifies optimality.

\begin{theorem}
If there is a tensor $\ten{Y} \in \R^{n_1 \times n_2 \times n_3}$ obeying
\begin{equation}
\left\{
\begin{array}{lll}
\|\PT(\ten{Y} + \lambda\sgn(\ten{E}_0) - \ten{U} \ast \ten{V}^{H})\|_F \leq \frac{\lambda}{n_1n_2n^2_3} \\
\|\PTc(\ten{Y} + \lambda\sgn(\ten{E}_0))\| \leq \frac{1}{2} \\
\|\Pg(\ten{Y})\|_{\infty} \leq \frac{\lambda}{2}\\
\Pgc(\ten{Y}) = \mtx{0},
\end{array}
\right.
\label{eq28}
\end{equation}
where $\lambda = 1/\sqrt{\rho n_{(1)}n_3}$, then $(\ten{L}_0, \ten{E}_0)$ is the unique optimal solution to (\ref{eq13}) when $n_1, n_2, n_3$ are large enough.
\label{the3}
\end{theorem}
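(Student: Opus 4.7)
The plan is to use a standard convex-duality argument: for every feasible perturbation $(\ten{L}_0+\ten{H}, \ten{E}_0+\ten{F})$ with $\PO(\ten{H}+\ten{F})=\mtx{0}$ and $(\ten{H}, \ten{F}) \ne (\mtx{0}, \mtx{0})$, show that the objective strictly increases. The starting point is the subgradient inequalities. The subdifferential of $\|\cdot\|_{\textup{TNN}}$ at $\ten{L}_0$ is $\{\ten{U}\ast\ten{V}^H + \ten{W} : \PT \ten{W} = \mtx{0},\; \|\ten{W}\|\le 1\}$, and the subdifferential of $\|\cdot\|_1$ at $\ten{E}_0$ is $\{\sgn(\ten{E}_0) + \ten{F}_0 : \text{supp}(\ten{F}_0)\subseteq \Lambda^c,\; \|\ten{F}_0\|_\infty\le 1\}$. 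Choosing $\ten{W}$ and $\ten{F}_0$ dual to $\PTc \ten{H}$ and $\mathcal{P}_{\Lambda^c}\ten{F}$ respectively yields
\begin{equation}
\Delta\mathcal{F} \;\ge\; \<\ten{U}\ast\ten{V}^H, \ten{H}\> + \|\PTc \ten{H}\|_{\textup{TNN}} + \lambda\<\sgn(\ten{E}_0), \ten{F}\> + \lambda\|\mathcal{P}_{\Lambda^c}\ten{F}\|_1. \nonumber
\end{equation}

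The second step is to inject the dual certificate $\ten{Y}$ via the identity $\<\ten{U}\ast\ten{V}^H, \ten{H}\> + \lambda\<\sgn(\ten{E}_0), \ten{F}\> = \<\ten{U}\ast\ten{V}^H - \ten{Y} - \lambda\sgn(\ten{E}_0), \ten{H}\> + \<\ten{Y}, \ten{H}\> + \lambda\<\sgn(\ten{E}_0), \ten{H}+\ten{F}\>$. The final term vanishes because $\sgn(\ten{E}_0)$ is supported on $\Lambda \subseteq \Omega$ while $\PO(\ten{H}+\ten{F})=\mtx{0}$. Decomposing the first inner product along $T \oplus T^\perp$ and applying the first two conditions of~(\ref{eq28})---the second paired with the duality between the tensor spectral norm and the tubal nuclear norm---bounds its absolute value by $\tfrac{\lambda}{n_1 n_2 n_3^2}\|\PT \ten{H}\|_F + \tfrac{1}{2}\|\PTc \ten{H}\|_{\textup{TNN}}$. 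Since $\Pgc \ten{Y}=\mtx{0}$, the remaining term equals $\<\Pg \ten{Y}, \Pg \ten{H}\> = -\<\Pg \ten{Y}, \Pg \ten{F}\>$ by feasibility, which is at least $-\tfrac{\lambda}{2}\|\Pg \ten{F}\|_1$ by the third condition of~(\ref{eq28}). Using $\Lambda^c = \Gamma \sqcup \Omega^c$, these contributions combine to
\begin{equation}
\Delta\mathcal{F} \;\ge\; \tfrac{1}{2}\|\PTc \ten{H}\|_{\textup{TNN}} + \tfrac{\lambda}{2}\|\Pg \ten{F}\|_1 + \lambda\|\POc \ten{F}\|_1 - \tfrac{\lambda}{n_1 n_2 n_3^2}\|\PT \ten{H}\|_F. \nonumber
\end{equation}

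The main obstacle is proving that this lower bound is strictly positive whenever $(\ten{H}, \ten{F}) \ne (\mtx{0}, \mtx{0})$; the delicate case is when $\PT \ten{H}$ is large while the three explicit positive quantities are small. I would argue by contradiction: if $\Delta\mathcal{F}\le 0$, then each of $\|\PTc \ten{H}\|_{\textup{TNN}}$, $\|\Pg \ten{F}\|_1$, $\|\POc \ten{F}\|_1$ is at most $\tfrac{C}{n_1 n_2 n_3^2}\|\PT \ten{H}\|_F$, hence $o(\|\PT \ten{H}\|_F)$ for large dimensions. The identity $\Pg \PT \ten{H} = -\Pg \ten{F} - \Pg \PTc \ten{H}$ (from $\Pg(\ten{H}+\ten{F})=\mtx{0}$) with the elementary bounds $\|\Pg \ten{F}\|_F \le \|\Pg \ten{F}\|_1$ and $\|\PTc \ten{H}\|_F \le \sqrt{n_3}\|\PTc \ten{H}\|_{\textup{TNN}}$ then forces $\|\Pg \PT \ten{H}\|_F = o(\|\PT \ten{H}\|_F)$. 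This contradicts Lemma~\ref{lem1} applied to the clean sub-sampling $\Gamma \sim \textup{Ber}((1-2\gamma)\rho)$ (whose sampling rate satisfies Lemma~\ref{lem1}'s hypothesis under the conditions of Theorem~\ref{the1}), which supplies the opposing bound $\|\Pg \PT \ten{Z}\|_F \ge c\|\PT \ten{Z}\|_F$ for some $c>0$ on $T$. The two bounds are compatible only if $\|\PT \ten{H}\|_F = 0$, whence $\|\PTc \ten{H}\|_{\textup{TNN}}=0$ and $\ten{H}=\mtx{0}$, and feasibility gives $\ten{F}=\mtx{0}$. This establishes uniqueness for $n_1, n_2, n_3$ large enough.
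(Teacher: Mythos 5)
Your proof is correct and follows essentially the same route as the paper: a subgradient lower bound on the perturbed objective, injection of the dual certificate through the four conditions in (\ref{eq28}), and Lemma~\ref{lem1} applied to $\Gamma \sim \textup{Ber}((1-2\gamma)\rho)$ to control the $\PT$-component. The only differences are presentational --- you allow a general feasible perturbation $(\ten{H},\ten{F})$ and close by contradiction via the restricted-injectivity bound $\|\Pg\PT\ten{H}\|_F \geq c\,\|\PT\ten{H}\|_F$, whereas the paper restricts to $\ten{F} = -\PO(\ten{Z})$ and substitutes the equivalent upper bound on $\|\PT(\ten{Z})\|_F$ directly into the final inequality before checking that the resulting coefficients are positive for large $n_1, n_2, n_3$.
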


\begin{proof}
Let $f(\ten{L}, \ten{E}) = \|\ten{L}\|_{\textup{TNN}} + \lambda \|\ten{E}\|_1$. Given a feasible perturbation $(\ten{L}_0 + \ten{Z}, \ten{E}_0 - \PO(\ten{Z}))$, we will show that the objective value $f(\ten{L}_0 + \ten{Z}, \ten{E}_0 - \PO(\ten{Z}))$ is strictly greater than $f(\ten{L}_0, \ten{E}_0)$ unless $\ten{Z} = \mtx{0}$. Define the skinny t-SVD of $\PTc(\ten{Z})$ to be $\PTc(\ten{Z}) = \ten{U}_\perp \ast \ten{S}_\perp \ast \ten{V}^{H}_\perp$ and then we have $\|\ten{U} \ast \ten{V}^{H} + \ten{U}_\perp \ast \ten{V}^{H}_\perp\| = 1$. Note that
\begin{align}
\|\ten{L}_0 + \ten{Z}\|_{\text{TNN}} & \geq  \<\ten{U} \ast \ten{V}^{H} + \ten{U}_\perp \ast \ten{V}^{H}_\perp, \ten{L}_0 + \ten{Z}\> \nonumber \\
& =  \<\ten{U} \ast \ten{V}^{H}, \ten{L}_0\> + \<\ten{U}_\perp \ast \ten{V}^{H}_\perp, \PTc(\ten{Z})\> + \<\ten{U} \ast \ten{V}^{H}, \ten{Z}\> \nonumber\\
& =  \|\ten{L}_0\|_{\textup{TNN}} + \|\PTc(\ten{Z})\|_{\textup{TNN}} + \<\ten{U} \ast \ten{V}^{H}, \ten{Z}\>, \nonumber
\end{align}
and
\begin{align}
\|\ten{E}_0 - \PO(\ten{Z})\|_1 & = \|\PO(\ten{E}_0 -\ten{Z})\|_1  = \|\Pg(\ten{E}_0 - \ten{Z})\|_1 + \|\Pl(\ten{E}_0 - \ten{Z})\|_1  \nonumber\\
& =  \|\Pg(\ten{Z})\|_1 + \|\ten{E}_0 -\Pl(\ten{Z})\|_1 \nonumber\\
& \geq  \|\Pg(\ten{Z})\|_1 + \|\ten{E}_0\|_1 -\<\sgn(\ten{E}_0), \Pl(\ten{Z})\> \nonumber\\
& \geq  \|\Pg(\ten{Z})\|_1 + \|\ten{E}_0\|_1 -\<\sgn(\ten{E}_0), \ten{Z}\>, \nonumber
\end{align}
we have
\begin{align}
\Delta f & = f(\ten{L}_0 + \ten{Z}, \ten{E}_0 - \PO(\ten{Z})) - f(\ten{L}_0, \ten{E}_0) \nonumber \\
& = \|\ten{L}_0 + \ten{Z}\|_{\textup{TNN}} + \lambda\|\ten{E}_0 - \PO(\ten{Z})\|_1 - \|\ten{L}_0\|_{\textup{TNN}} - \lambda\|\ten{E}_0\|_1 \nonumber\\
& \geq  \|\PTc(\ten{Z})\|_{\textup{TNN}} + \lambda \|\Pg(\ten{Z})\|_1 - \<\lambda \sgn(\ten{E}_0) - \ten{U} \ast \ten{V}^{H}, \ten{Z}\> \nonumber\\
& \geq \|\PTc(\ten{Z})\|_{\textup{TNN}} + \lambda \|\Pg(\ten{Z})\|_1 -|\<\ten{Y} + \lambda \sgn(\ten{E}_0) - \ten{U} \ast \ten{V}^{H}, \ten{Z}\> - \<\ten{Y}, \ten{Z}\>| \nonumber\\
&\geq \|\PTc(\ten{Z})\|_{\textup{TNN}} + \lambda \|\Pg(\ten{Z})\|_1-\<\PT(\ten{Y} + \lambda \sgn(\ten{E}_0) - \ten{U} \ast \ten{V}^{H}), \PT(\ten{Z})\> \nonumber\\
&~~~~-\<\PTc(\ten{Y} + \lambda \sgn(\ten{E}_0)), \PTc(\ten{Z})\> - \<\Pg(\ten{Y}), \Pg(\ten{Z})\> \nonumber\\
& =  \|\PTc(\ten{Z})\|_{\textup{TNN}} + \lambda \|\Pg(\ten{Z})\|_1-\frac{1}{n_3}\<\overline{\PT(\ten{Y} + \lambda \sgn(\ten{E}_0) - \ten{U} \ast \ten{V}^{H})}, \overline{\PT(\ten{Z})}\> \nonumber\\
&~~~~- \frac{1}{n_3}\<\overline{\PTc(\ten{Y} + \lambda \sgn(\ten{E}_0))}, \overline{\PTc(\ten{Z})}\> - \<\Pg(\ten{Y}), \Pg(\ten{Z})\> \nonumber\\
& \geq  \|\PTc(\ten{Z})\|_{\textup{TNN}} + \lambda \|\Pg(\ten{Z})\|_1-\frac{1}{n_3}\|\overline{\PT(\ten{Y} + \lambda \sgn(\ten{E}_0) - \ten{U} \ast \ten{V}^{H})}\|_F \|\overline{\PT(\ten{Z})}\|_F \nonumber\\
&~~~~- \frac{1}{n_3}\|\overline{\PTc(\ten{Y} + \lambda \sgn(\ten{E}_0))}\| \|\overline{\PTc(\ten{Z})}\|_{\ast} - \|\Pg(\ten{Y})\|_{\infty}\|\Pg(\ten{Z})\|_1 \nonumber\\
& =  \|\PTc(\ten{Z})\|_{\textup{TNN}} + \lambda \|\Pg(\ten{Z})\|_1-\|\PT(\ten{Y} + \lambda \sgn(\ten{E}_0) - \ten{U} \ast \ten{V}^{H})\|_F \|\PT(\ten{Z})\|_F \nonumber\\
&~~~~- \|\PTc(\ten{Y} + \lambda \sgn(\ten{E}_0))\| \|\PTc(\ten{Z})\|_{\ast} - \|\Pg(\ten{Y})\|_{\infty}\|\Pg(\ten{Z})\|_1 \nonumber\\
& \geq   \frac{1}{2}\|\PTc(\ten{Z})\|_{\ast} + \frac{\lambda}{2}\|\Pg(\ten{Z})\|_1 - \frac{\lambda}{n_1n_2n^2_3} \|\PT(\ten{Z})\|_F. \label{eq29}
\end{align}
The inequality (\ref{eq29}) is due to (\ref{eq28}). Recall that we have $\|\frac{1}{(1-2\gamma)\rho}\PT\Pg\PT - \PT\|_{\textup{op}} \leq \frac{1}{2}$ which implies $\|\frac{1}{\sqrt{(1-2\gamma)\rho}}\PT\Pg\|_{\textup{op}} \leq \sqrt{3/2}$, then
\begin{align}
\|\PT(\ten{Z})\|_F  & = \frac{1}{\sqrt{n_3}} \|\overline{\PT(\ten{Z})}\|_F \leq \frac{2}{\sqrt{n_3}} \Big\|\frac{1}{(1-2\gamma)\rho}\overline{\PT\Pg\PT(\ten{Z})}\Big\|_F \nonumber \\
& \leq  \frac{2}{\sqrt{n_3}} \Big\|\frac{1}{(1-2\gamma)\rho}\overline{\PT\Pg\PTc(\ten{Z})}\Big\|_F + \frac{2}{\sqrt{n_3}} \Big\|\frac{1}{(1-2\gamma)\rho}\overline{\PT\Pg(\ten{Z})}\Big\|_F \nonumber\\
& \leq  \sqrt{\frac{6}{(1-2\gamma)\rho n_3}}\|\overline{\PTc(\ten{Z})}\|_F + \sqrt{\frac{6}{(1-2\gamma)\rho n_3}}\|\overline{\Pg(\ten{Z})}\|_F \nonumber\\
& \leq  \sqrt{\frac{6}{(1-2\gamma)\rho}}\|\PTc(\ten{Z})\|_F + \sqrt{\frac{6}{(1-2\gamma)\rho}}\|\Pg(\ten{Z})\|_F.\label{eq30}
\end{align}

It is easy to check that
\begin{equation}
\|\PTc(\ten{Z})\|_{\textup{TNN}} = \frac{1}{n_3}\|\overline{\PTc(\ten{Z})}\|_{\ast} \geq \frac{1}{n_3}\|\overline{\PTc(\ten{Z})}\|_F = \frac{1}{\sqrt{n_3}} \|\PTc(\ten{Z})\|_F
\label{eq31}
\end{equation}
and $\|\Pg(\ten{Z})\|_1 \geq \|\Pg(\ten{Z})\|_F$. Substituting (\ref{eq30}) and (\ref{eq31}) into (\ref{eq32}), we have
\begin{eqnarray}
\Delta f \geq \Bigg(\frac{1}{2\sqrt{n_3}} - \frac{\lambda}{n_1n_2n^2_3}\sqrt{\frac{6}{(1-2\gamma)\rho}}\Bigg)\|\PTc(\ten{Z})\|_F + \Bigg(\frac{\lambda}{2} - \frac{\lambda}{n_1n_2n^2_3}\sqrt{\frac{6}{(1-2\gamma)\rho}}\Bigg)\|\Pg(\ten{Z})\|_F. \label{eq32}
\end{eqnarray}
When $n_1, n_2, n_3$ are sufficiently large such that
\begin{equation}
\frac{1}{2\sqrt{n_3}} - \frac{\lambda}{n_1 n_2 n^2_3}\sqrt{\frac{6}{(1-2\gamma)\rho}} > 0, \qquad \frac{\lambda}{2} - \frac{\lambda}{n_1 n_2 n^2_3}\sqrt{\frac{6}{(1-2\gamma)\rho}} > 0, \nonumber
\end{equation}
the inequality (\ref{eq32}) holds if and only if $\PT(\ten{Z}) = \Pg(\ten{Z}) = \mtx{0}$. On the other hand, when $\rho$ is sufficiently large and $\gamma$ is sufficiently small (which are bounded by two constants $c_\rho$ and $c_\gamma$),
\begin{equation}
\|\PT\Pg\|_{\textup{op}} \leq \sqrt{\frac{3(1 - 2\gamma)\rho}{2}} < 1, \nonumber
\end{equation}
which implies that $\PT\Pg$ is injective. As a result, (\ref{eq32}) holds if and only if $\ten{Z} = \mtx{0}$.
\end{proof}

We apply the golfing scheme similar to those used in~\cite{Li2013,Huang2014} to construct the dual tensor $\ten{Y}$, which is supported on $\Gamma$, by gradually increasing the size of $\Gamma$. Now think of $\Gamma \sim \textup{Ber}((1-2\gamma)\rho)$ as a union of sets of support $\Gamma_j$, namely, $\Gamma = \bigcup_{j=1}^p \Gamma_j$ where $\Gamma_j \sim \textup{Ber}(q_j)$. Let $q_1 = q_2 = \frac{(1-2\gamma)\rho}{6}$ and $q_3 = \dots = q_p = q$, which implies $q \geq C_0\rho/\log(n_{(1)}n_3)$. Hence we have
\begin{equation}
1 - (1 - 2\gamma)\rho = \Big(1 - \frac{(1-2\gamma)\rho}{6}\Big)^2 (1 - q)^{p-2},
\label{eq33}
\end{equation}
where $p = \lfloor 5\log(n_{(1)}n_3) + 1\rfloor$. Starting from $\ten{Z}_0 = \PT(\ten{U} \ast \ten{V}^{H} - \lambda \sgn(\ten{E}_0))$, we define inductively
\begin{equation}
\ten{Z}_j = \Big(\PT - \frac{1}{q_j}\PT\Pgs{j}\PT\Big)\ten{Z}_{j-1}.  \label{eq34}
\end{equation}
Then it follows from Lemma~\ref{lem1}-\ref{lem3} that
\begin{equation}
\|\ten{Z}_j\|_F \leq \frac{1}{2}\|\ten{Z}_{j-1}\|_F, \,\,\,j = 1, \dots,p,
\label{eq35}
\end{equation}
\begin{align}
\|\ten{Z}_1\|_{\infty} &\leq \frac{1}{2\sqrt{\log(n_{(1)}n_3)}}\|\ten{Z}_0\|_{\infty},\label{eq36}\\
\|\ten{Z}_j\|_{\infty} &\leq \frac{1}{2^j\log(n_{(1)}n_3)}\|\ten{Z}_0\|_{\infty}, \,\,\,j = 2, \dots, p,\label{eq37}
\end{align}
and
\begin{align}
 &\|(\OpId - q^{-1}_j\Pgs{j}) \ten{Z}_{j-1}\| \nonumber \\
\leq & C'_0 \sqrt{\frac{n_{(1)}n_3\log(n_{(1)}n_3)}{q_j}}\|\ten{Z}_{j-1}\|_{\infty},\,\,\,j = 2, \dots, p,
\label{eq38}
\end{align}
with high probability provided $c_r$ and $c_\gamma$ are small enough.

We set the dual tensor $\ten{Y}$ as
\begin{equation}
\ten{Y} = \sum_{j=1}^p \frac{1}{q_j}\Pgs{j}(\ten{Z}_{j-1}),
\label{eq39}
\end{equation}
and attempt to show that it satisfies (\ref{eq28}). Obviously, $\Pgc(\ten{Y}) = \mtx{0}$ and it suffices to prove
\begin{equation}
\left\{
\begin{array}{llll}
\|\PT(\ten{Y} + \lambda\sgn(\ten{E}_0) - \ten{U} \ast \ten{V}^{H})\|_F \leq \frac{\lambda}{n_1n_2n^2_3} \\
\|\PTc(\ten{Y})\| \leq \frac{1}{4}\\
\lambda\|\PTc(\sgn(\ten{E}_0))\| \leq \frac{1}{4}\\
\|\Pg(\ten{Y})\|_{\infty} \leq \frac{\lambda}{2},
\end{array}
\right.
\label{eq40}
\end{equation}
where $\lambda = 1/\sqrt{\rho n_{(1)}n_3}$ and $n_1, n_2, n_3$ are large enough.

First, let us bound $\|\ten{Z}_0\|_F$ and $\|\ten{Z}_0\|_{\infty}$. By the triangle inequality, we have $\|\ten{Z}_0\|_{\infty} \leq \|\ten{U} \ast \ten{V}^{H}\|_{\infty} + \lambda\|\PT(\sgn(\ten{E}_0))\|_{\infty}$. Noting that
\begin{equation}
\sgn(\ten{E}_0) = \sum_{i,j,k} \big[\sgn(\ten{E}_0)\big]_{ijk}\eijk, \nonumber
\end{equation}
we have
\begin{equation}
\PT(\sgn(\ten{E}_0)) = \sum_{i,j,k} \big[\sgn(\ten{E}_0)\big]_{ijk}\PT(\eijk). \nonumber
\end{equation}
Hence, the $(a,b,c)$th entry of $\PT(\sgn(\ten{E}_0))$ can be represented by
\begin{equation}
\<\PT(\sgn(\ten{E}_0)), \eabc\> = \sum_{ijk} \big[\sgn(\ten{E}_0)\big]_{ijk}\<\PT(\eijk), \eabc\>. \nonumber
\end{equation}
By Bernstein's inequality, we further have
\begin{equation}
\PP(|\<\PT(\sgn(\ten{E}_0)), \eabc\>| \geq \tau) \leq 2\exp\bigg(-\frac{\tau^2/2}{N + M\tau/3}\bigg), \nonumber
\end{equation}
where
\begin{equation}
M = \big|\big[\sgn(\ten{E}_0)\big]_{ijk}\big|\|\PT(\eijk)\|_F \|\PT(\eabc)\|_F \leq \frac{2\mu r}{n_{(2)}}, \nonumber
\end{equation}
and
\begin{equation}
N = 2\gamma\rho \|\PT(\eijk)\|^2_F \leq 4\gamma\rho\frac{\mu r}{n_{(2)}}. \nonumber
\end{equation}
Considering that the entries of $\PT(\sgn(\ten{E}_0))$ can be understood as i.i.d. copies of the $(a, b, c)$ th entry, we have by the union bound
\begin{equation}
\|\PT(\sgn(\ten{E}_0))\|_{\infty} \leq C'' \sqrt{\frac{\rho \mu r \log(n_{(1)}n_3)}{n_{(2)}}} \nonumber
\end{equation}
with high probability for some numerical constant $C''$. From the joint incoherence condition (\ref{eq18}), we know
\begin{equation}
\|\ten{U} \ast \ten{V}^{H}\|_{\infty} \leq \sqrt{\frac{\mu r}{n_1 n_2 n_3}} = \lambda \sqrt{\frac{\rho \mu r}{n_{(2)}}}, \nonumber
\end{equation}
and thus we have
\begin{align}
\|\ten{Z}_0\|_{\infty} & \leq C \lambda \sqrt{\frac{\rho \mu r\log(n_{(1)}n_3)}{n_{(2)}}}, \label{eq41}\\
\|\ten{Z}_0\|_{F} & \leq \sqrt{n_1n_2n_3}\|\ten{Z}_0\|_{\infty} \leq C \lambda \sqrt{\rho \mu rn_{(1)}n_3\log(n_{(1)}n_3)}, \label{eq42}
\end{align}
where $C = \max\big\{\frac{1}{\log(n_{(1)}n_3)}, C''\big\}$. Now, let us turn to the proof of (\ref{eq40}).

\begin{proof}
From (\ref{eq39}), we deduce
\begin{align}
&\|\PT(\ten{Y}) + \PT(\lambda\sgn(\ten{E}_0) - \ten{U} \ast \ten{V}^{H})\|_F \nonumber\\
=& \Big\|\ten{Z}_0 -\sum_{j=1}^p \frac{1}{q_j}\PT\Pgs{j}(\ten{Z}_{j-1})\Big\|_F\nonumber\\
=&  \Big\|\PT(\ten{Z}_0) -\sum_{j=1}^p \frac{1}{q_j}\PT\Pgs{j}(\ten{Z}_{j-1})\Big\|_F\nonumber\\
=&  \Big\|(\PT - \frac{1}{q_1}\PT\Pgs{1}\PT)\ten{Z}_0 - \sum_{j=2}^p \frac{1}{q_j}\PT\Pgs{j}\PT(\ten{Z}_{j-1}) \Big\|_F \nonumber\\
= & \Big\|\PT(\ten{Z}_1)-\sum_{j=2}^p \frac{1}{q_j}\PT\Pgs{j}\PT(\ten{Z}_{j-1})\Big\|_F \nonumber\\
=& \dots = \|\ten{Z}_p\|_F \leq \Big(\frac{1}{2}\Big)^p \|\ten{Z}_0\|_F \nonumber\\
\leq & C \Big(n_{(1)}n_3\Big)^{-5} \lambda \sqrt{\rho \mu r n_{(1)} n_3 \log(n_{(1)}n_3)} \nonumber\\
\leq &\frac{\lambda}{n_1n_2n^2_3}. \label{eq43}
\end{align}
The fifth step follows from (\ref{eq35}) and the sixth from (\ref{eq42}).
$\\\\$
Furthermore, we have
\begin{align}
\|\PTc(\ten{Y})\| & = \Big\|\PTc \sum_{j=1}^p \frac{1}{q_j}\Pgs{j}(\ten{Z}_{j-1})\Big\| \nonumber\\
& \leq \sum_{j=1}^p \Big\|\frac{1}{q_j} \PTc\Pgs{j}(\ten{Z}_{j-1})\Big\| \nonumber\\
& = \sum_{j=1}^p \Big\|\PTc\Big(\frac{1}{q_j}\Pgs{j}(\ten{Z}_{j-1}) - \ten{Z}_{j-1}\Big)\Big\| \nonumber\\
& \leq  \sum_{j=1}^p \Big\|\frac{1}{q_j}\Pgs{j}(\ten{Z}_{j-1}) - \ten{Z}_{j-1}\Big\| \nonumber\\
& \leq \sum_{j=1}^p C'_0 \sqrt{\frac{n_{(1)}n_3\log(n_{(1)}n_3)}{q_j}}\|\ten{Z}_{j-1}\|_{\infty}  \nonumber\\
& \leq C'_0 \sqrt{n_{(1)}n_3\log(n_{(1)}n_3)}\Big(\sum_{j=3}^p \frac{1}{2^{j-1}\log(n_{(1)}n_3)\sqrt{q_j}} + \frac{1}{2\sqrt{\log(n_{(1)}n_3)}\sqrt{q_2}} + \frac{1}{\sqrt{q_1}}\Big)\|\ten{Z}_0\|_{\infty} \nonumber\\
& \leq C' \lambda \sqrt{\frac{ \rho \mu r n_{(1)}n_3(\log(n_{(1)}n_3))^2}{\rho n_{(2)}}}\label{eq44}\\
& \leq C' \sqrt{c_r} \leq \frac{1}{4}, \label{eq45}
\end{align}
provided $\lambda = 1/\sqrt{\rho n_{(1)}n_3}$ and $c_r$ is sufficiently small. The fifth step is from Lemma~\ref{lem3}, the sixth from (\ref{eq36})-(\ref{eq38}), and the seventh from (\ref{eq41}) respectively.
$\\\\$
Third, the sign tensor $\sgn(\ten{E}_0)$ is distributed as
\begin{equation}
\big[\sgn(\ten{E}_0)\big]_{ijk} = \left\{
\begin{array}{lll}
1, & \textup{w.p.}\,\,\gamma\rho\\
0, & \textup{w.p.}\,\,1-2\gamma\rho\\
-1, & \textup{w.p.}\,\,\gamma\rho
\end{array}
\right. \nonumber.
\end{equation}
As proved by Lemma~\ref{lem4}, there exists a function $\varphi(\gamma\rho)$ satisfying $\lim\limits_{\gamma\rho \rightarrow 0^{+}} \varphi(\gamma\rho) = 0$, such that
\begin{equation}
\|\sgn(\ten{E}_0)\| \leq \varphi(\gamma\rho) \sqrt{n_{(1)}n_3} \nonumber
\end{equation}
with large probability, which gives
\begin{equation}
\lambda\|\PTc(\sgn(\ten{E}_0))\| \leq \lambda\|\sgn(\ten{E}_0)\| \leq \varphi(\gamma\rho)/\sqrt{\rho} \leq \frac{1}{4},
\label{eq46}
\end{equation}
as long as $c_r$ and $c_\gamma$ is sufficiently small.
$\\\\$
Last, we observe that
\begin{align}
\|\Pg(\ten{Y})\|_{\infty} &= \Big\|\PT\sum_{j=1}^p \frac{1}{q_j}\Pgs{j}(\ten{Z}_{j-1})\Big\|_{\infty} \nonumber\\
&\leq \sum_{j=1}^p \frac{1}{q_j}\|\ten{Z}_{j-1}\|_{\infty}\nonumber\\
& \leq \Big(\sum_{j=3}^p \frac{1}{2^{j-1}\log(n_{(1)}n_3)\sqrt{q_j}} + \frac{1}{2\sqrt{\log(n_{(1)}n_3)q_2}} +\frac{1}{\sqrt{q_1}}\Big)\|\ten{Z}_0\|_{\infty}\nonumber\\
& \leq C \sqrt{\frac{\mu r\log(n_{(1)}n_3)}{\rho n_{(2)}}}\lambda \nonumber\\
& \leq C \sqrt{\frac{c_r}{\log(n_{(1)}n_3)}} \lambda \leq \frac{\lambda}{2} \label{eq47}
\end{align}
when $c_r$ is small enough. The second step follows from Lemma~\ref{lem2}, the third from (\ref{eq36})-(\ref{eq38}), and the fourth from (\ref{eq41}) respectively.
\end{proof}

\subsection{Proofs of Two Corollaries}
\label{sec5:sub4}

Corollary~\ref{cor2} is obvious, as problem (\ref{eq13}) reduce to (\ref{eq15}) in the event of the whole entries available, i.e., $\rho = 1$. The proof of Corollary~\ref{cor1} is also straightforward. Remember that the sparse term $\ten{E}$ in (\ref{eq13}) should vanish for the TC problems. We can achieve this goal by setting $\lambda \rightarrow \infty$. In this situation, the first, third and fourth inequalities in (\ref{eq40}) hold automatically. This says that the joint incoherence condition (\ref{eq18}) is unnecessary and can be successfully excluded. From (\ref{eq44}), we have
\begin{equation}
\frac{\mu r (\log(n_{(1)}n_3))^2}{\rho n_{(2)}} \leq c_r, \nonumber
\end{equation}
which implies
\begin{equation}
m = \rho n_{(1)} n_{(2)} n_3 \geq c_0 \mu r n_{(1)} n_3(\log(n_{(1)}n_3))^2, \nonumber
\end{equation}
where $c_0 = 1/c_r$ is sufficiently large positive constant.

\section{Optimization Algorithm}
\label{sec6}

In this work, we use the alternating direction method of multiplier (ADMM) method to solve the convex problem~(\ref{eq13}). ADMM decomposes a large global problem into a series of smaller subproblems, and coordinates the solutions of subproblems
to compute the globally optimal solution. It has received renewed interest in recent years due to the fact that it is very efficient to tackle large-scale problems and solve optimization problems with multiple non-smooth terms in the objective function. We also refer to~\cite{Candes2011,Shang2014,Zhang2014,Lu2016} for some exploited applications of ADMM to the similar problems.

\begin{algorithm}[!t]
\caption{RTC: Solving (\ref{eq13}) via ADMM}
\label{alg2}
\textbf{Input:} $\ten{X}$, $\Omega$ and $\lambda$.\\
\textbf{Initialize:} $\ten{L}^0 = \ten{E}^0 = \ten{Y}^0 = \mtx{0}$, $\rho = 1.1$, $\mu^0$ = 1e-4, $\mu_{\max}$ = 1e8, $\varepsilon$ = 1e-6. \\\vspace{-0.4cm}
\begin{algorithmic}[1]
\WHILE {not converged}
\STATE {Update $\ten{L}^{k+1}$ by
\begin{displaymath}
\min\limits_{\ten{L}} \|\ten{L}\|_{\textup{TNN}} + \frac{\mu^k}{2}\Big\|\ten{L} + \ten{E}^k - \ten{X} + \frac{\ten{Y}^k}{\mu^k}\Big\|^2_F;
\end{displaymath}}
\STATE {Update $\PO(\ten{E}^{k+1})$ by
\begin{displaymath}
\min\limits_{\ten{E}} \lambda\|\PO(\ten{E})\|_{1} + \frac{\mu^k}{2}\Big\|\PO\Big(\ten{E} + \ten{L}^{k+1} - \ten{X} + \frac{\ten{Y}^k}{\mu^k}\Big)\Big\|^2_F;
\end{displaymath}}
\STATE {Update $\POc(\ten{E}^{k+1})$ by
\begin{displaymath}
\POc(\ten{E}^{k+1}) = \POc(\ten{X} - \ten{L}^{k+1} - \ten{Y}^k/\mu^k)
\end{displaymath}}
\STATE {Update the multipliers $\ten{Y}^{k+1}$ by\\ $\ten{Y}^{k+1} =\ten{Y}^{k} +\mu^{k}(\ten{L}^{k+1}+\ten{E}^{k+1} - \ten{X})$;}
\STATE {Update $\mu^{k+1}$ by $\mu^{k+1}=\textup{min}(\rho\mu^{k},\,\mu_{\max})$;}
\STATE {Check the convergence condition, \\
$\|\ten{L}^{k+1} -\ten{L}^{k}\|_{\infty} <\varepsilon, \,\,\, \|\ten{E}^{k+1} -\ten{E}^{k}\|_{\infty} <\varepsilon,$\\
$\|\ten{X}-\ten{L}^{k+1} -\ten{E}^{k+1}\|_{\infty} <\varepsilon$.}
\ENDWHILE
\end{algorithmic}
\textbf{Output:} $\ten{L}$.
\end{algorithm}

As mentioned above, the $\ell_1$-norm in (\ref{eq13}) forces any entry of an optimal solution $\ten{E}$ in the unobserved set $\Omega^\perp$ to be zero. Without loss of generality, we can assume that the unobserved data may be appropriate values such that $\POc(\ten{X}) = \POc(\ten{L}) + \POc(\ten{E})$. Then, the linear projection operator constraint
in (\ref{eq13}) is simply replaced by an equation $\ten{X} = \ten{L} + \ten{E}$. Thus, problem (\ref{eq13}) can be rewritten as
\begin{equation}
\min_{\ten{L},\,\ten{E}} \|\ten{L}\|_{\ast} + \lambda\|\PO(\ten{E})\|_{1},\quad \textup{s.t.},\,  \ten{X} = \ten{L} + \ten{E}, \label{eq48}
\end{equation}
which is exactly a high-order version of the RMC problem~\cite{Shang2014} and we have good reason to believe that it can be solved in a similar way. Algorithm~\ref{alg2} summarize the optimization details. In step 2 and 3, the updates of $\ten{L}^{k+1}$ and $\PO(\ten{E}^{k+1})$ both have closed-form solutions~\cite{Shang2014,Zhang2014}. It is easy to find that the computational cost of this algorithm is dominated by step 2 in each iteration, which requires computing FFT and $n_3$ SVDs of $n_1 \times n_2$ matrices. Hence the complexity is $O(t(n_1n_2n_3\log(n_3) + n_{(1)}n^2_{(2)}n_3))$ where $t$ is the number of iterations. We can resort to the conjugate symmetry of the Fourier transform to further reduce the computational burden (see~\cite{Kilmer2013} for more details).

\section{Experiments}
\label{sec7}

We conduct a series of experiments to demonstrate the validity of our theorem, and show possible applications of our model and algorithm. As suggested by Theorem~\ref{the1}, the parameter $\lambda$ is set to be $\lambda = 1/\sqrt{\rho n_{(1)}n_3}$ in all the experiments unless otherwise specified. For practical problems, it is possible to further improve the performance by turning $\lambda$ cautiously. Nevertheless, the default value is often a good rule of thumb.

\subsection{Synthetic Tensor Recovery}
\label{sec7:sub1}

\subsubsection{Validity of Exact Recovery}

We first verify the correct recovery phenomenon of Theorem~\ref{the1} by synthetic problems. For simplicity, we consider the tensors of size $n \times n \times n$ with varying dimension $n = 100, 200$ and $300$. We generate the clean tensor $\ten{L}_0 = \ten{P} \ast \ten{W}$ with tubal rank $\rank_t(\ten{L}_0) = r$, where the entries of $\ten{P} \in \R^{n \times r \times n}$ and $\ten{W} \in \R^{r \times n \times n}$ are independently sampled from a standard Gaussian distribution $\mathcal{N}(0, 1)$\footnote{We also consider the situations in which the entries of tensors $\ten{P}$, $\ten{W}$ and $\ten{E}_0$ are sampled from other different distributions, such as uniform distribution and Bernoulli distribution. Similar results are obtained and we do not report them here due to page limit.}. In addition, a fraction $\gamma$ of its entries are uniformly corrupted by additive i.i.d. noise from a standard Gaussian distribution $\mathcal{N}(0, 1)$ at random. Finally, we randomly choose a percentage $\rho$ of the noisy tensor entries as our observations.

\begin{table}[!t]
\centering
\caption{Exact recovery on random data with different sizes. In the two scenarios, synthetic tensors are with different tubal ranks, percentage of missing entries and proportion of grossly corrupted observations.}
\label{tab1}
\vspace{0.5em}
\begin{tabular}{c|c|c|c|c|c|c|c}
\hline
\multicolumn{4}{c|}{$r = 0.05n, \,\,\, \rho = 0.9, \,\,\, \gamma = 0.1$} & \multicolumn{4}{c}{$r = 0.1n, \,\,\, \rho = 0.8, \,\,\, \gamma = 0.2$}\\
\hline
$n$ & $r$ & $\rank_t(\ten{L})$ & $\frac{\|\ten{L} - \ten{L}_0\|_F}{\|\ten{L}_0\|_F}$ & $n$ & $r$ & $\rank_t(\ten{L})$ & $\frac{\|\ten{L} - \ten{L}_0\|_F}{\|\ten{L}_0\|_F}$\\
\hline
\hline
100 & 5 & 5	 & $1.68 \times 10^{-10}$ & 100 & 10 & 10 & $8.53 \times 10^{-10}$ \\
\hline
200	& 10 & 10 & $5.25 \times 10^{-12}$ & 200 & 20 & 20 & $1.13 \times 10^{-11}$\\
\hline
300	& 15 & 15 &	$4.80 \times 10^{-12}$ & 300 & 30 & 30 & $2.26 \times 10^{-12}$\\
\hline
\end{tabular}
\end{table}

We test on two cases and summarize the results in Table~\ref{tab1}. We set $r = 0.05n$, $\gamma = 0.1$ and $\rho = 0.9$ for the first scenario, and choose a more challenging setting with $r = 0.1n$, $\gamma = 0.2$ and $\rho = 0.8$ for the second scenario. It is clear to see that our method gives the correct rank estimation of $\ten{L}_0$ and the negligible relative error $\|\ten{L} - \ten{L}_0\|_F/\|\ten{L}_0\|_F$ in all cases. These results verify the exact recovery phenomenon as claimed in Theorem~\ref{the1} pretty well.

\begin{table}[!t]
\centering
\caption{Exact recovery on $100 \times 100 \times 100$ random data with different corruption magnitudes.}
\label{tab2}
\vspace{0.5em}
\begin{tabular}{c|c|c|c}
\hline
\multicolumn{2}{c|}{$r = 0.05n, \,\,\, \rho = 0.9, \,\,\, \gamma = 0.1$} & \multicolumn{2}{c}{$r = 0.1n, \,\,\, \rho = 0.8, \,\,\, \gamma = 0.2$}\\
\hline
Magnitude & $\frac{\|\ten{L} - \ten{L}_0\|_F}{\|\ten{L}_0\|_F}$ & Magnitude & $\frac{\|\ten{L} - \ten{L}_0\|_F}{\|\ten{L}_0\|_F}$\\
\hline
\hline
$\mathcal{N}(0, 1/n)$ & $5.52 \times 10^{-11}$ & $\mathcal{N}(0, 1/n)$ & $3.52 \times 10^{-11}$\\
\hline
$\mathcal{N}(0, 1)$ & $1.68 \times 10^{-10}$ & $\mathcal{N}(0, 1)$ & $8.53 \times 10^{-10}$\\
\hline
$\mathcal{N}(0, n)$ & $6.60 \times 10^{-10}$ & $\mathcal{N}(0, n)$ & $2.80 \times 10^{-9}$\\
\hline
\end{tabular}
\end{table}

Theorem~\ref{the1} shows that the exact recovery is independent of the magnitudes of the corruption term $\ten{E}_0$. To verify this, Table~\ref{tab2} reports the relative error $\|\ten{L} - \ten{L}_0\|_F/\|\ten{L}_0\|_F$ under varying corruption magnitudes $\mathcal{N}(0, 1/n)$, $\mathcal{N}(0, 1)$ and $\mathcal{N}(0, n)$. It seems that our approach always succeeds, no matter what magnitudes of the corruptions are.

\subsubsection{Phase Transition}

\begin{figure}[!t]
\centering
\includegraphics[width=0.7\textwidth]{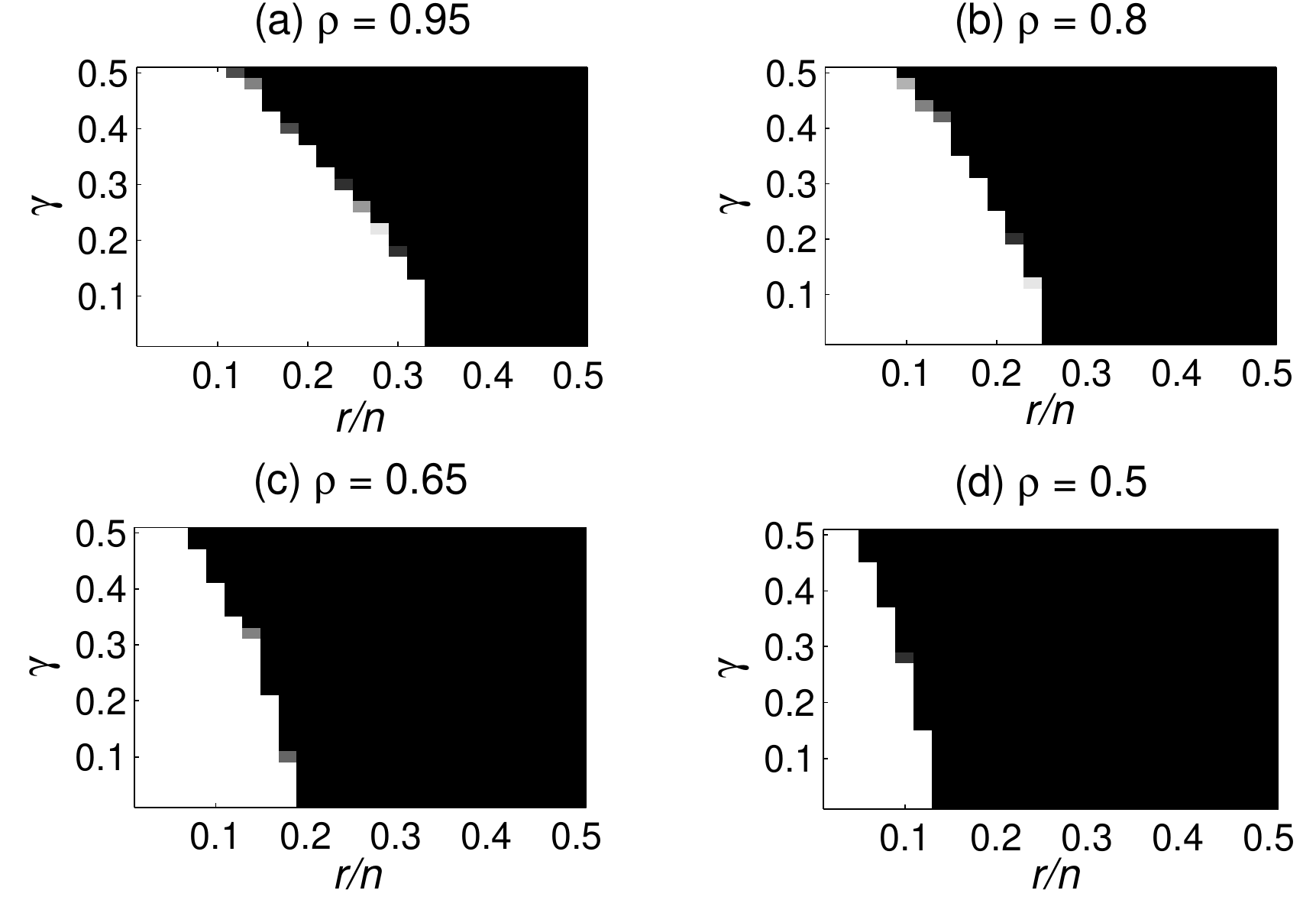}
\caption{Exact recovery for varying rank and gross corruptions under different proportions of observed entries. Fraction of perfect recoveries across 10 trials, as a function of tubal rank $\rank_t(\ten{L}_0)$ ($x$-axis) and proportion of gross corruptions $\ten{E}_0$ ($y$-axis).}
\label{fig3}
\end{figure}

To further corroborate our theoretical results, we check the recovery ability of our algorithm as a function of tubal rank $r$, fractions of gross corruptions $\gamma$ and proportion of observed entries $\rho$. The data are generated as the above-mentioned experiments, where the data size $n = 100$. We set $\rho$ to be different specified values, and vary $r$ and $\gamma$ to empirically investigate the probability of recovery success. For each pair $(r, \gamma)$, we simulate 10 test instances and declare a trial to be successful if the recovered tensor $\ten{L}$ satisfies $\|\ten{L} - \ten{L}_0\|_F/\|\ten{L}_0\|_F \leq 10^{-3}$. Figure~\ref{fig3} reports the fraction of perfect recovery for each pair (black = $0\%$ and white = $100\%$). We see clearly that there exists a big region in which the recovery is correct for all the cases. Moreover, the larger the percentage of missing values is, the smaller the region of correct recovery becomes.

\subsubsection{Comparison with Similar Methods}

Considering the connections among problems (\ref{eq13}), (\ref{eq14}) and (\ref{eq15}), we compare our algorithm with two most similar approaches\footnote{These two approaches are thought to be the most similar to our algorithm in the sense that they also employ the ADMM algorithm to solve corresponding convex optimization problem under the algebraic framework of t-SVD.}, namely, tubal nuclear norm minimization (TNNM)~\cite{Zhang2014} and TRPCA~\cite{Lu2016}, for three different settings. Once again, we fix $n = 100$ and generate random data as the prior experiments. In the first case, we set $\ten{E}_0$ to vanish and vary $r$ and $\rho$ to compare the recovery behaviors of the three methods for tensor completion. As shown in the first row of Figure~\ref{fig4}, RTC performs much better than the other methods. We then test all the three methods by varying $r$ and $\gamma$ when the entire entries are observed. Since problem (\ref{eq13}) reduces to problem (\ref{eq15}) in this settings, our algorithm and TRPCA obtain the identical results. In contrast, TNNM fails to recover the synthetic tensors in most cases, owing to that it is very fragile to the gross corruptions. Finally, we consider the robust tensor completion setting, i.e., fixing $\rho = 0.8$ and setting $r$ and $\gamma$ to be different values. From the bottom row of Figure~\ref{fig4}, we observe that RTC consistently and significantly outperforms the other two approaches, especially TNNM, which can not give desirable results.

\begin{figure}
\centering
\subfigure{\includegraphics[width=0.22\textwidth]{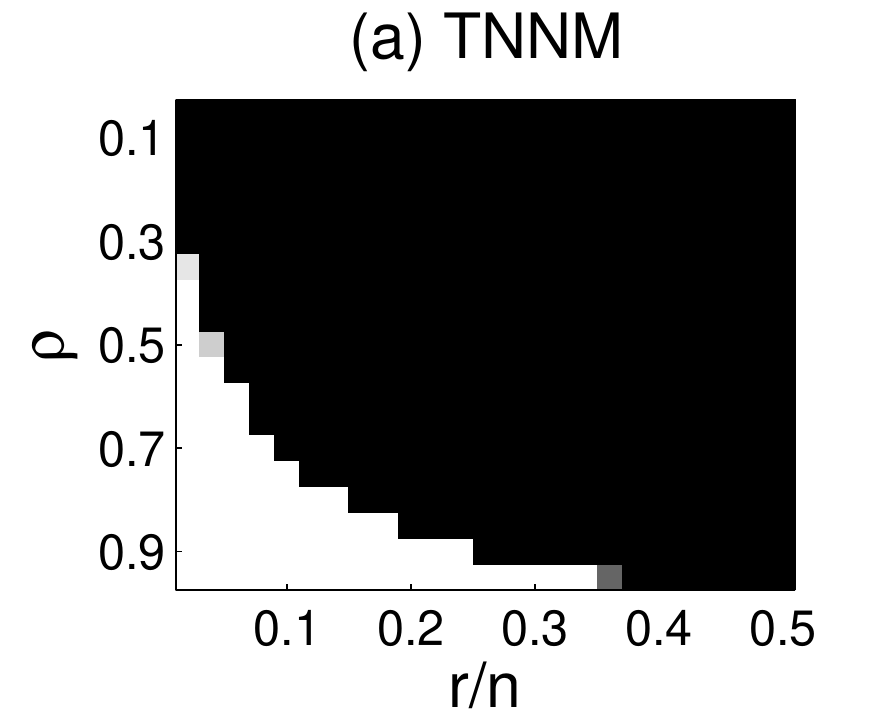}}
\hspace{-1em}
\subfigure{\includegraphics[width=0.22\textwidth]{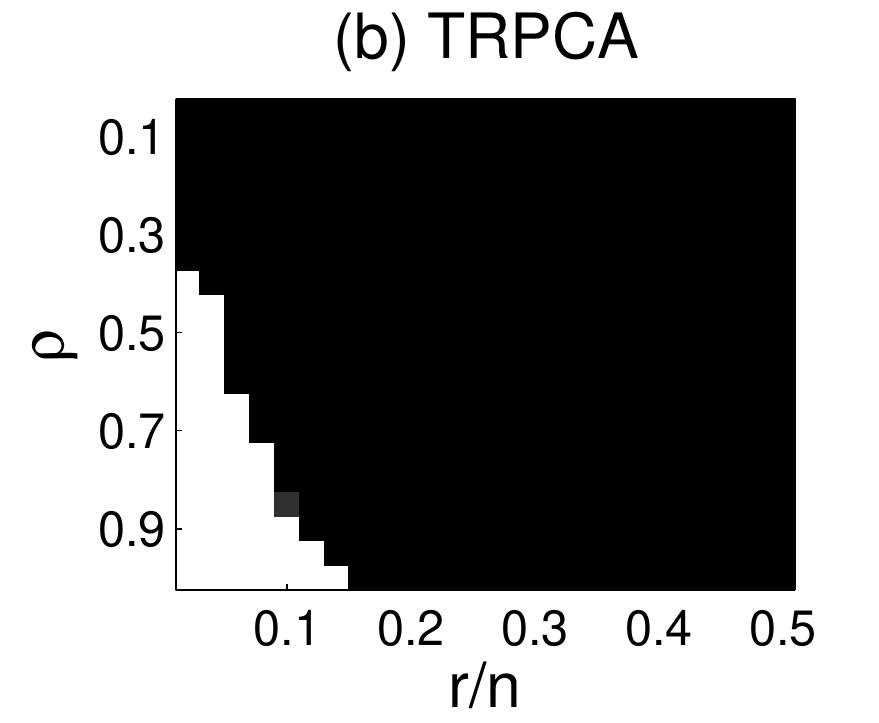}}
\hspace{-1em}
\subfigure{\includegraphics[width=0.22\textwidth]{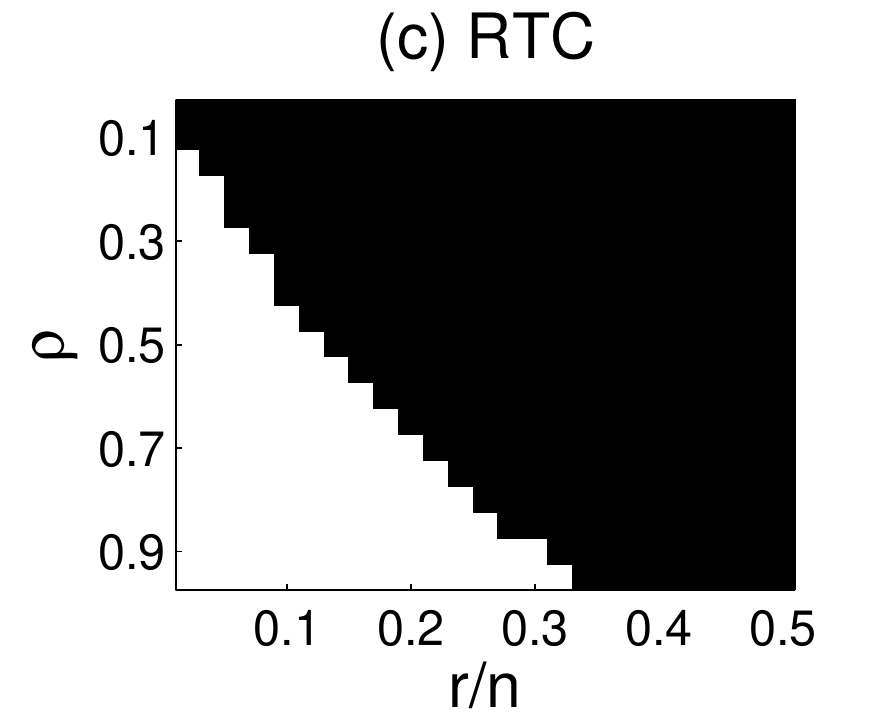}}
\\[-0.5em]
\subfigure{\includegraphics[width=0.22\textwidth]{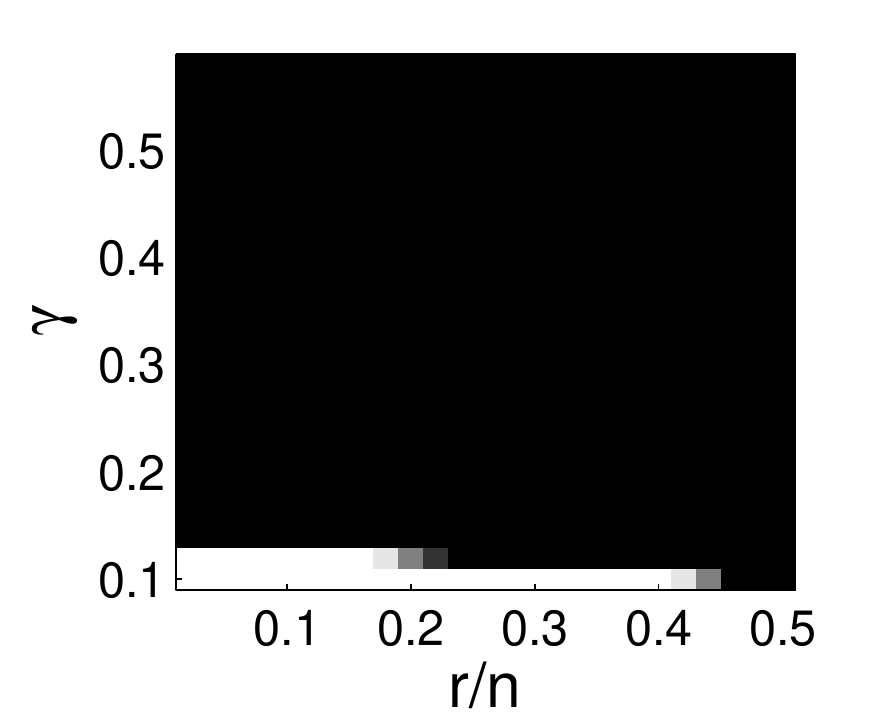}}
\hspace{-1em}
\subfigure{\includegraphics[width=0.22\textwidth]{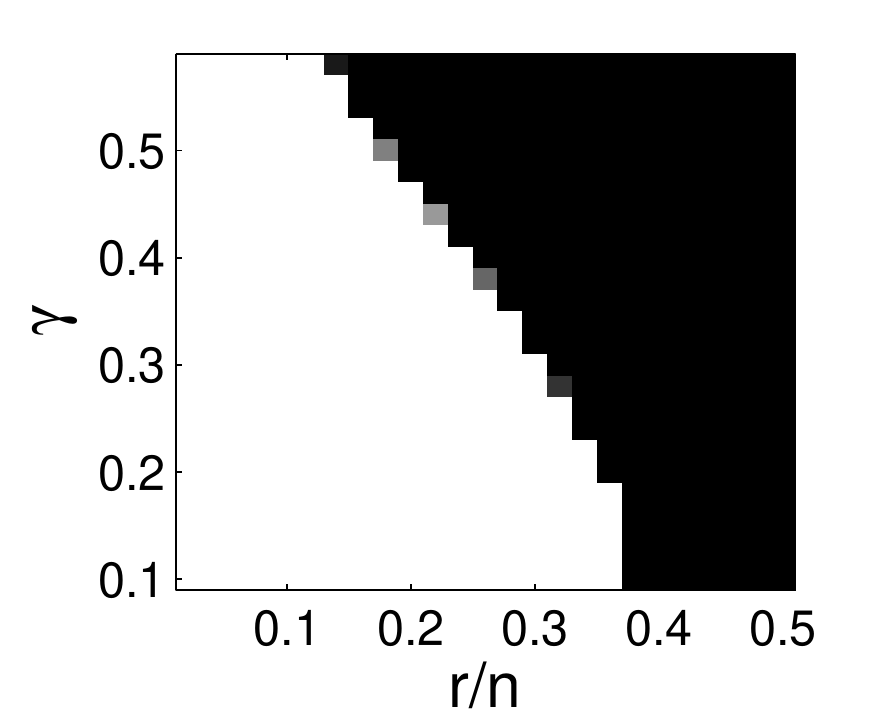}}
\hspace{-1em}
\subfigure{\includegraphics[width=0.22\textwidth]{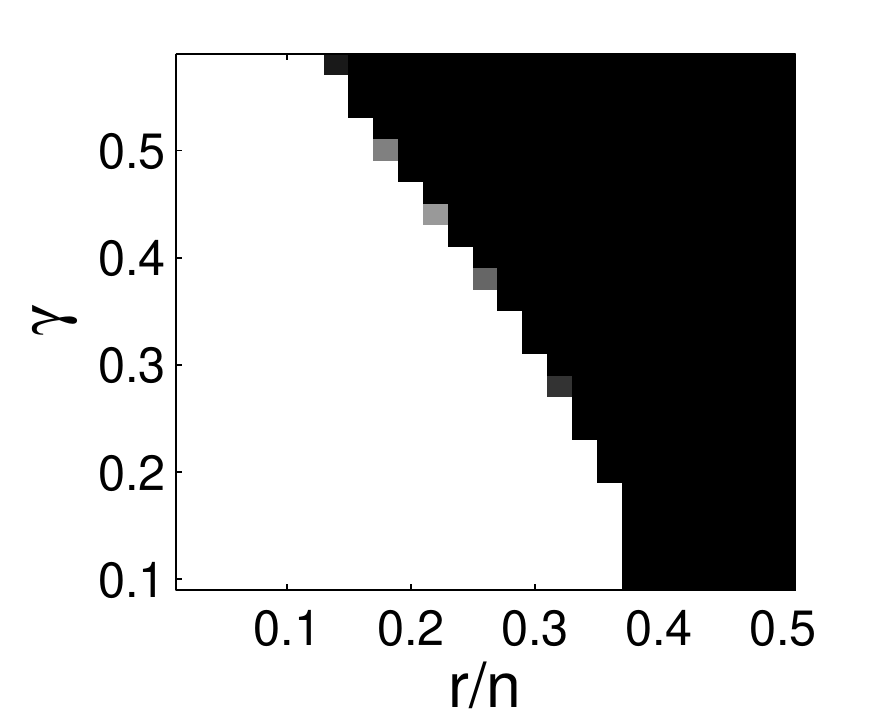}}
\\[-0.5em]
\subfigure{\includegraphics[width=0.22\textwidth]{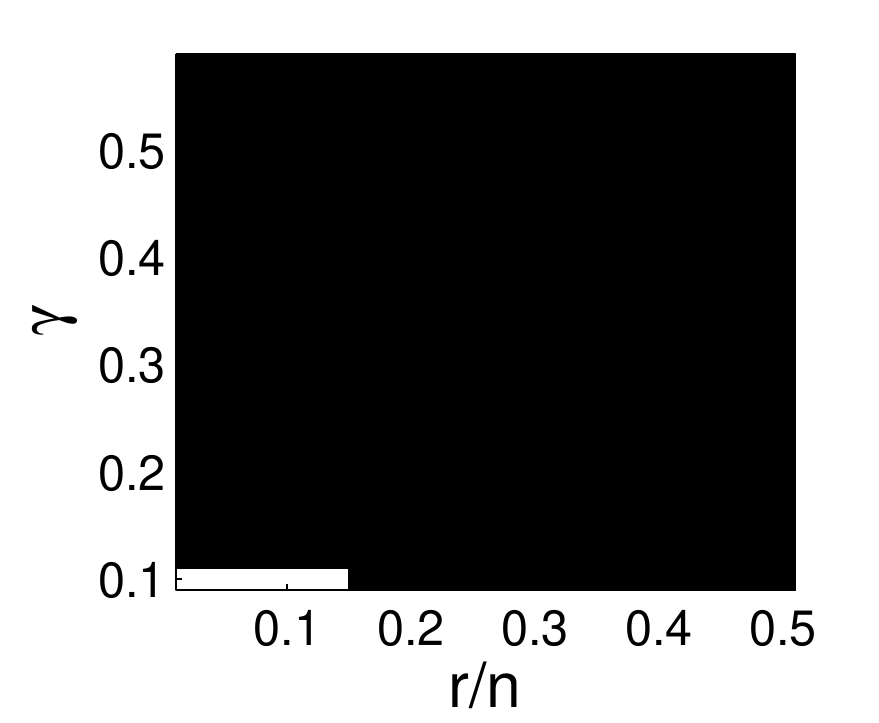}}
\hspace{-1em}
\subfigure{\includegraphics[width=0.22\textwidth]{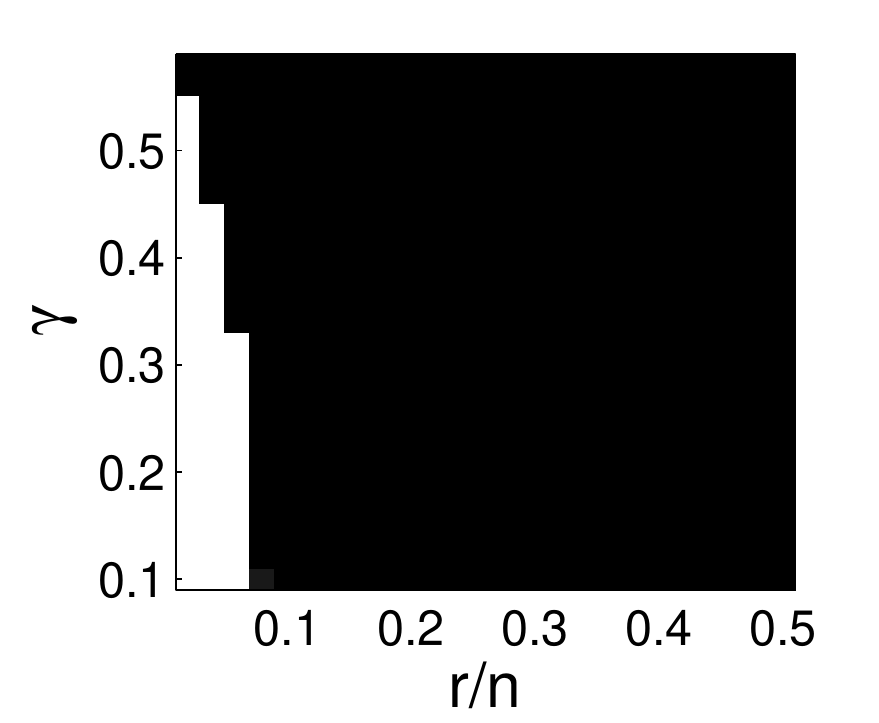}}
\hspace{-1em}
\subfigure{\includegraphics[width=0.22\textwidth]{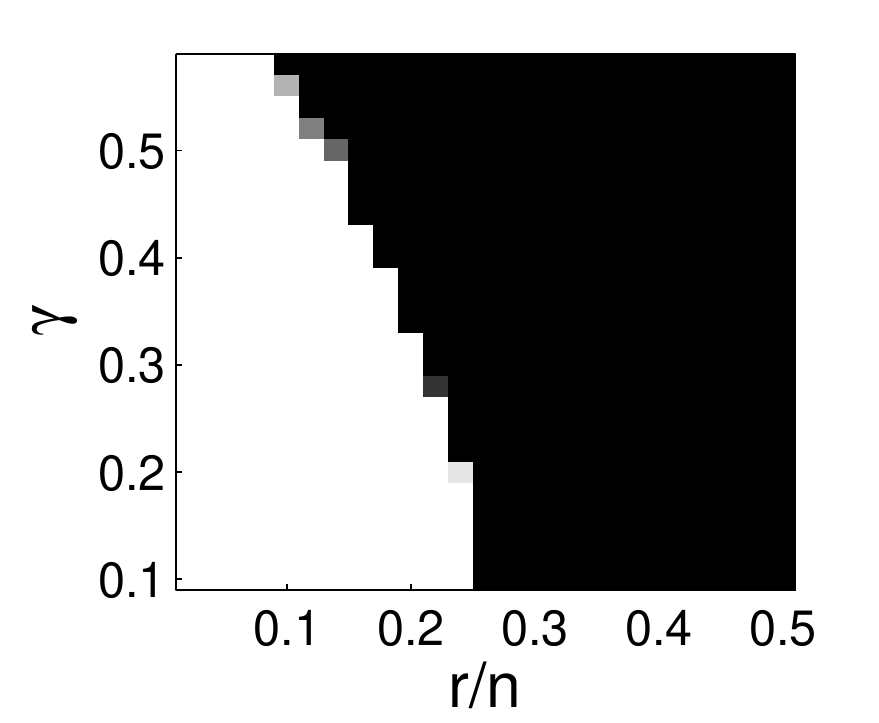}}
\caption{Comparison of our method and two similar approaches for three different problems of low-rank tensor recovery. Top: tensor completion. Middle: tensor robust principal component analysis. Bottom: robust tensor completion.}
\label{fig4}
\end{figure}

\subsection{Natural Image Restoration}
\label{sec7:sub2}

It is well known that a $n_1 \times n_2$ color image with red, blue and green channels can be naturally regarded as a third-order tensor $\ten{X} \in \R^{n_1 \times n_2 \times 3}$. Each frontal slice of $\ten{X}$ corresponds to a channel of the color image. Actually, each channel of a color image may not be low-rank, but their top singular values dominate the main information~\cite{Liu2013,Lu2016}. Hence, the image can be approximately reconstructed by a low-tubal-rank tensor.

\begin{table*}[!t]
\centering
\caption{Average PSNR and SSIM obtained by various methods on the BSD image set.}
\label{tab3}
\vspace{0.5em}
\scalebox{0.75}{
\begin{tabular}{c|cc|cc|cc|cc|cc|cc}
\hline
 & \multicolumn{6}{c|}{$\rho = 0.9$} & \multicolumn{6}{c}{$\rho = 0.7$}\\
\cline{2-13}
& \multicolumn{2}{c|}{$\gamma = 0.1$} & \multicolumn{2}{c|}{$\gamma = 0.2$} & \multicolumn{2}{c|}{$\gamma = 0.3$} & \multicolumn{2}{c|}{$\gamma = 0.1$} & \multicolumn{2}{c|}{$\gamma = 0.2$} & \multicolumn{2}{c}{$\gamma = 0.3$}\\
\cline{2-13}
& PSNR & SSIM & PSNR & SSIM & PSNR & SSIM & PSNR & SSIM & PSNR & SSIM & PSNR & SSIM\\
\hline
\hline
RPCA & 27.67 & 0.8535 & 27.30  & 0.8367 & 26.88 & 0.8122 & 21.69 & 0.5609 & 20.62 &    0.4744 & 19.64 & 0.4081\\
RMC & 28.11 & 0.8552 & 27.82 & 0.8423 & 27.53 & 0.8276 & 26.33 & 0.7865 & 26.09 & 0.7736 &	25.84 & 0.7599\\
TRPCA & \underline{32.31} & \underline{0.9457} & \underline{31.59} & \underline{0.9278} & \underline{30.91} & \underline{0.9037} & 29.25 & \underline{0.8608} & 28.69 & 0.8209 & 28.06	& 0.7678\\
BM3D & 28.31 & 0.8008 & 28.25 &	0.8002 & 28.18 & 0.7994 & 20.91 & 0.3392 & 20.80 & 0.3311 & 20.70 & 0.3234\\
BM3D+ & 30.72 & 0.8289 & 30.51 & 0.8245 & 30.28 & 0.8203 & 29.75 & 0.8060 & 29.45 & 0.7993	& 29.18 & 0.7935\\
BM3D++ & 30.94 & 0.8338	& 30.74 & 0.8297 & 30.52 & 0.8257 & \underline{30.42} & 0.8221 & \underline{30.11} & 0.8152 & \underline{29.82} & \underline{0.8093}\\
SNN & 30.14 & 0.9128 & 29.60 & 0.8972 & 29.11 & 0.8797 & 27.75 & 0.8426 & 27.35	& \underline{0.8248}	& 26.97 & 0.8063\\
RTC & {\bf 33.03} & {\bf 0.9566} & {\bf 32.10} & {\bf 0.9400} & {\bf 31.27} & {\bf 0.9185} & {\bf 31.30} & {\bf 0.9296} & {\bf 30.58} & {\bf 0.9091} &	{\bf 29.91} & {\bf 0.8831}\\
\hline
\end{tabular}}
\end{table*}

\begin{figure}[!t]
\centering
\subfigure{\includegraphics[width=0.9\textwidth]{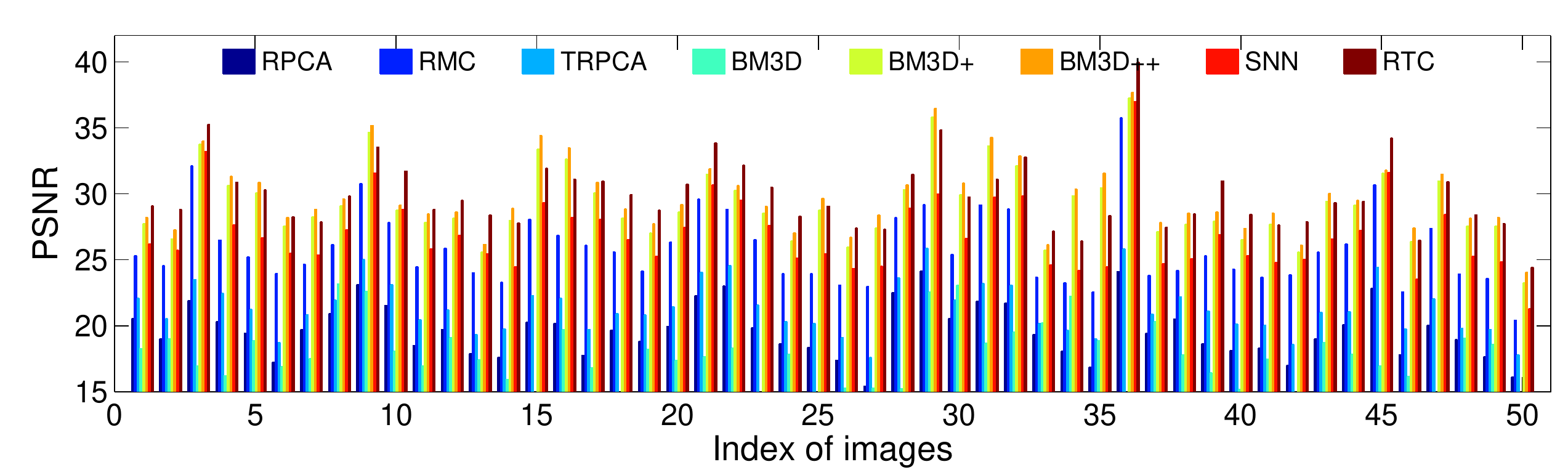}}
\subfigure{\includegraphics[width=0.9\textwidth]{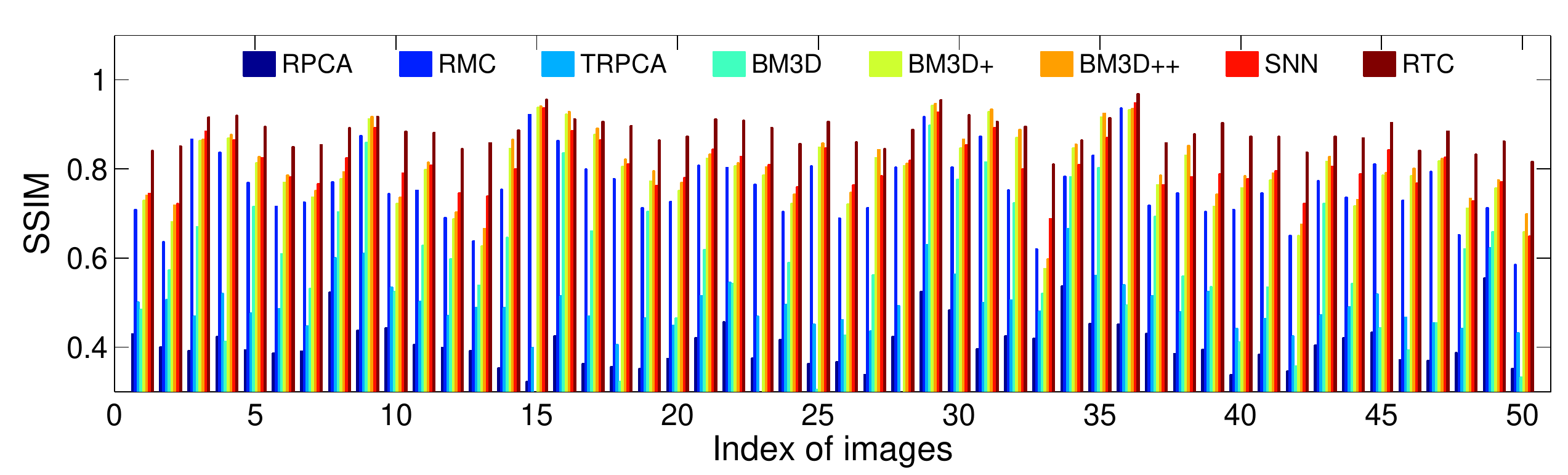}}
\caption{Comparison of the PSNR and SSIM values of various methods for image restoration on 50 images when $\rho = 0.7$ and $\gamma = 0.3$.}
\label{fig5}
\end{figure}

In this experiment, we focus on noisy image completion. This problem, unlike the traditional problems of image inpainting and image denoising, aims to simultaneously fill the missing pixels and remove the noise in an image. One typical example is the restoration of archived photographs and films~\cite{Kokaram2004,Subrahmanyam2010}. The archived materials are prone to be degraded due to physical processes or chemical decompositions, which may lead to various kinds of contaminations as well. So it is necessary to deal with corruptions and missing values jointly.

We download 50 color images at random from the Berkeley Segmentation Database (BSD)~\cite{Martin2001}. For each image, we randomly set $\gamma$ pixels corrupted with Gaussian noise $\mathcal{N}(0, \sigma)$, and choose $\rho$ entries to be observed. We compare our algorithm with several approaches for low-rank matrix/tensor recovery, including RPCA~\cite{Candes2011}, RMC~\cite{Shang2014}, SNN~\cite{Huang2014}, and TRPCA~\cite{Lu2016}. For RPCA and RMC, we apply them on each channel independently with $\lambda = 1/\sqrt{n_{(1)}}$. For SNN, we find that its performance is not satisfactory when the parameters $\lambda_i$'s are set to the default values~\cite{Huang2014}. As suggested by~\cite{Lu2016}, we empirically set $\lambda_1 = \lambda_2 = 15$ and $\lambda_3 = 1.5$ which make SNN work pretty well for most images. We run TRPCA with $\lambda = 1/\sqrt{n_{(1)}n_3}$. For a comprehensive comparison, we also test BM3D\footnote{\url{http://www.cs.tut.fi/~foi/GCF-BM3D/index.html}}~\cite{Dabov2007} on the BSD image set, which is usually referred to as the representative of state-of-the-art algorithms for image restoration. Considering that BM3D is originally proposed for image denoising, we further enhance it with the scheme of \lq\lq completion + denoising\rq\rq, which means that we first fill the missing pixels without considering the noise and then apply BM3D to the intermediate result. Here, HaLRTC\footnote{\url{http://www.cs.rochester.edu/u/jliu/publications.html}}~\cite{Liu2013} and TNNM~\cite{Zhang2014} are used in the completion step and the corresponding methods are denoted by BM3D+ and BM3D++, respectively.

\begin{figure*}[t!]
\centering
\subfigure[{\tiny Original (PSNR, SSIM)}]{\includegraphics[width=0.18\textwidth]{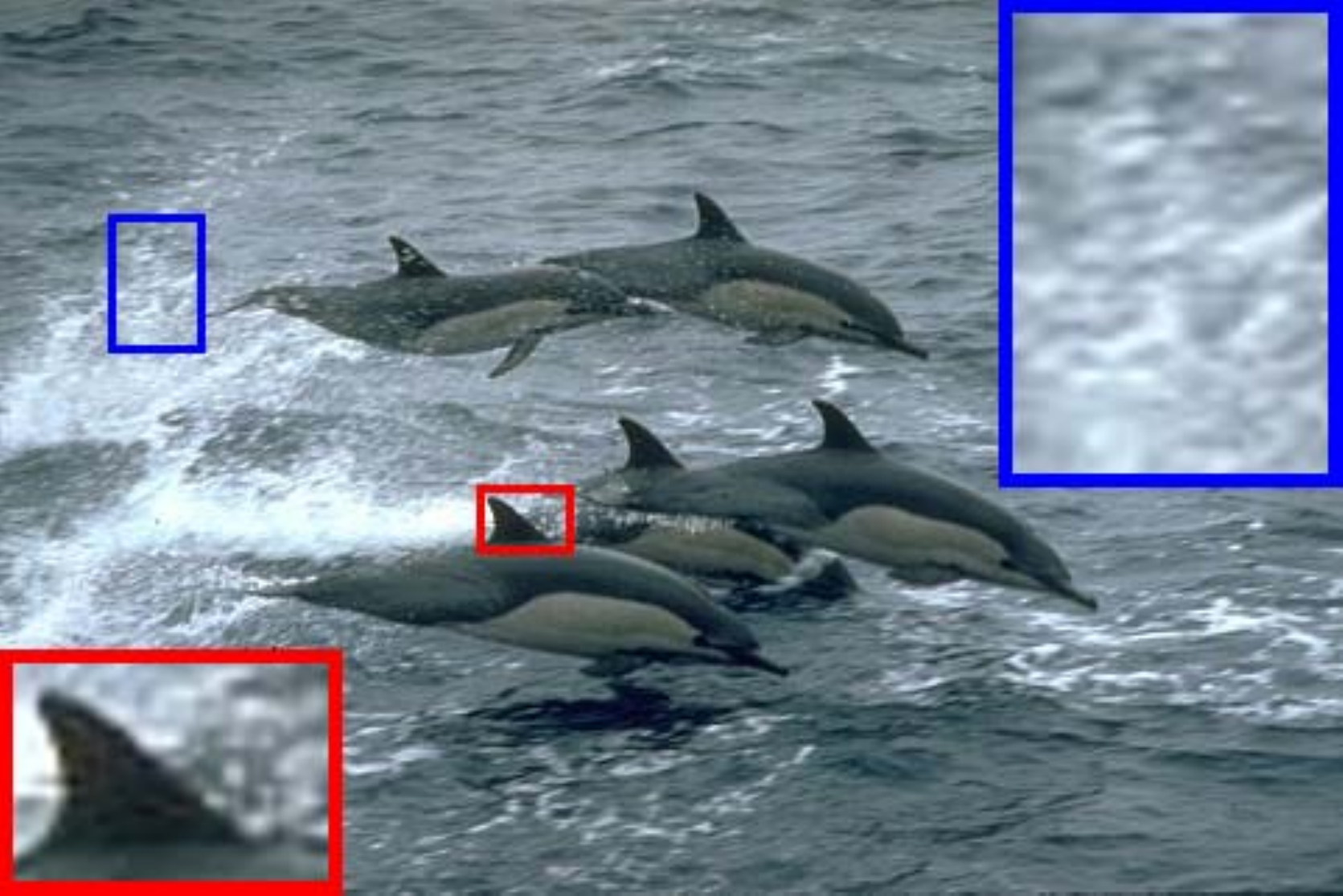}}
\subfigure[{\tiny Noisy (NA, NA)}]{\includegraphics[width=0.18\textwidth]{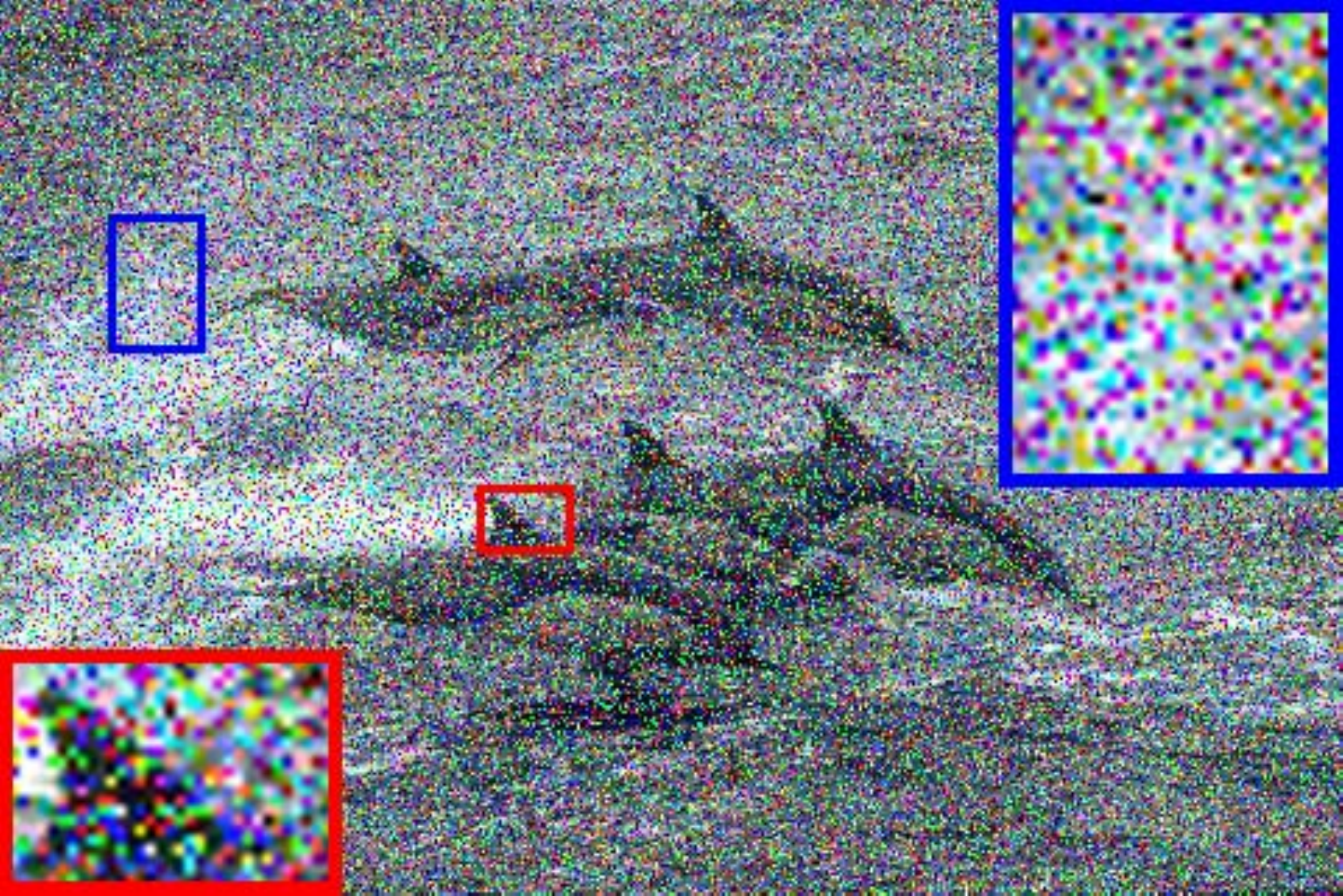}}
\subfigure[{\tiny RPCA (18.62,  0.3955)}]{\includegraphics[width=0.18\textwidth]{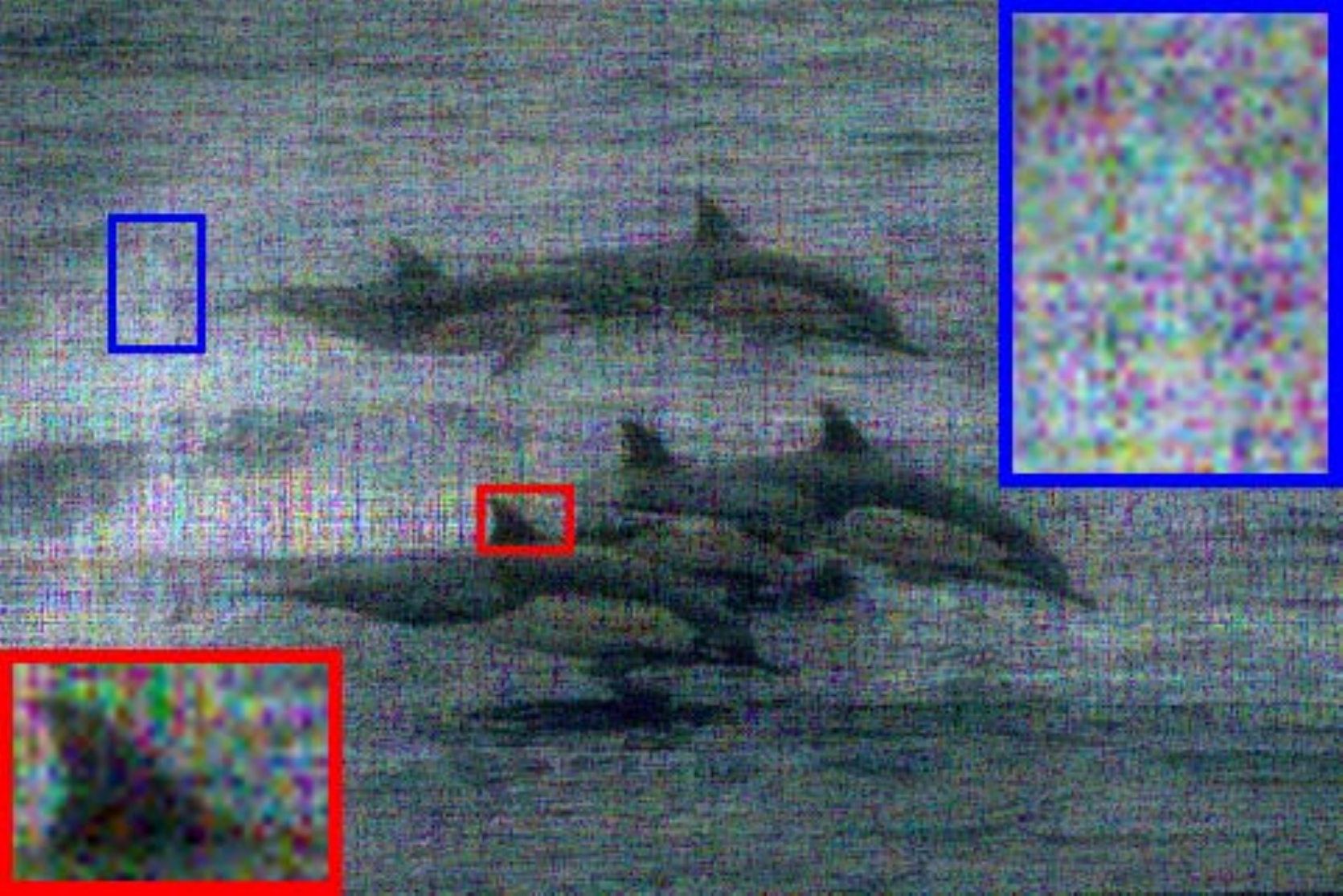}}
\subfigure[{\tiny RMC (25.31,    0.7048)}]{\includegraphics[width=0.18\textwidth]{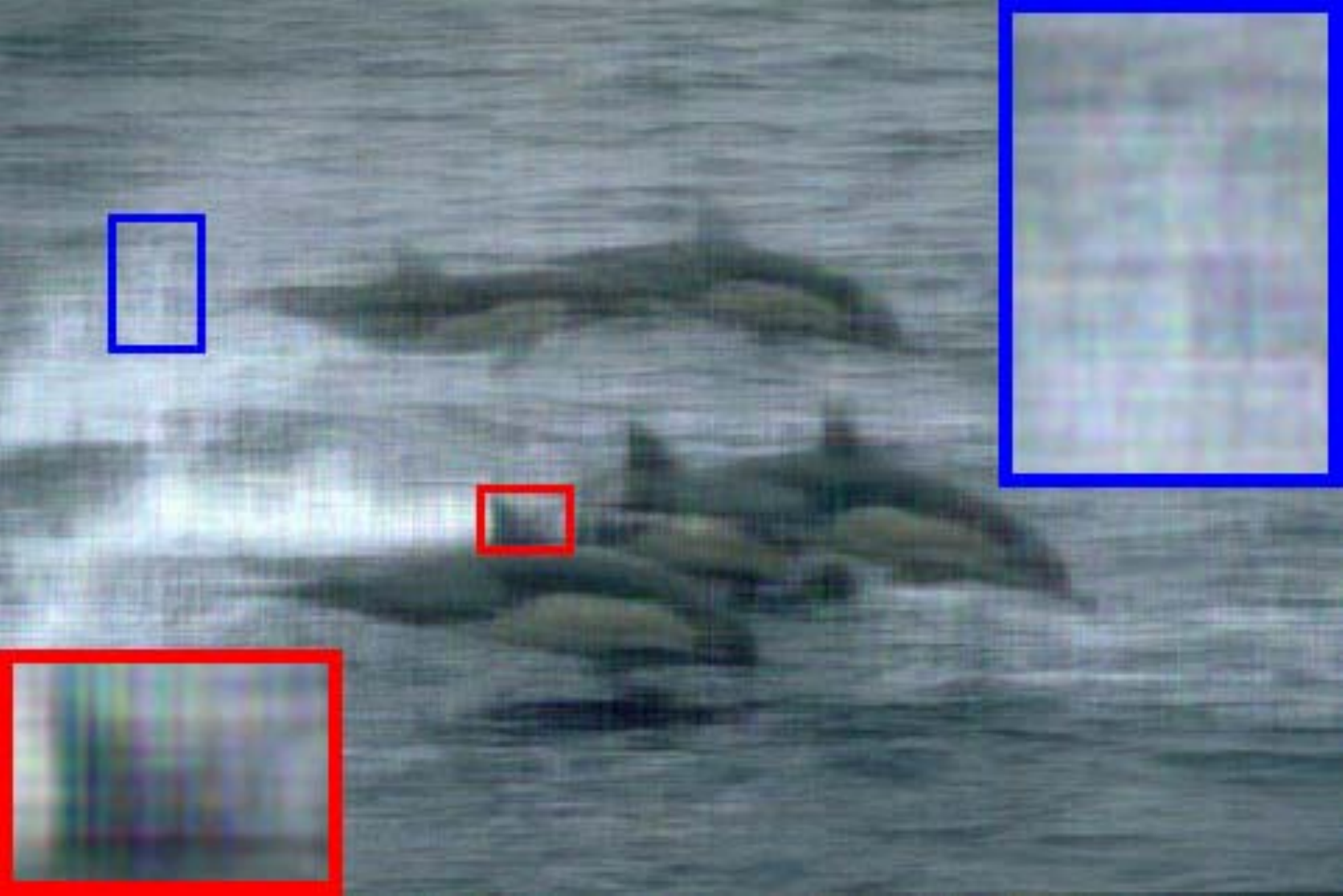}}
\subfigure[{\tiny TRPCA (28.28,   0.7948)}]{\includegraphics[width=0.18\textwidth]{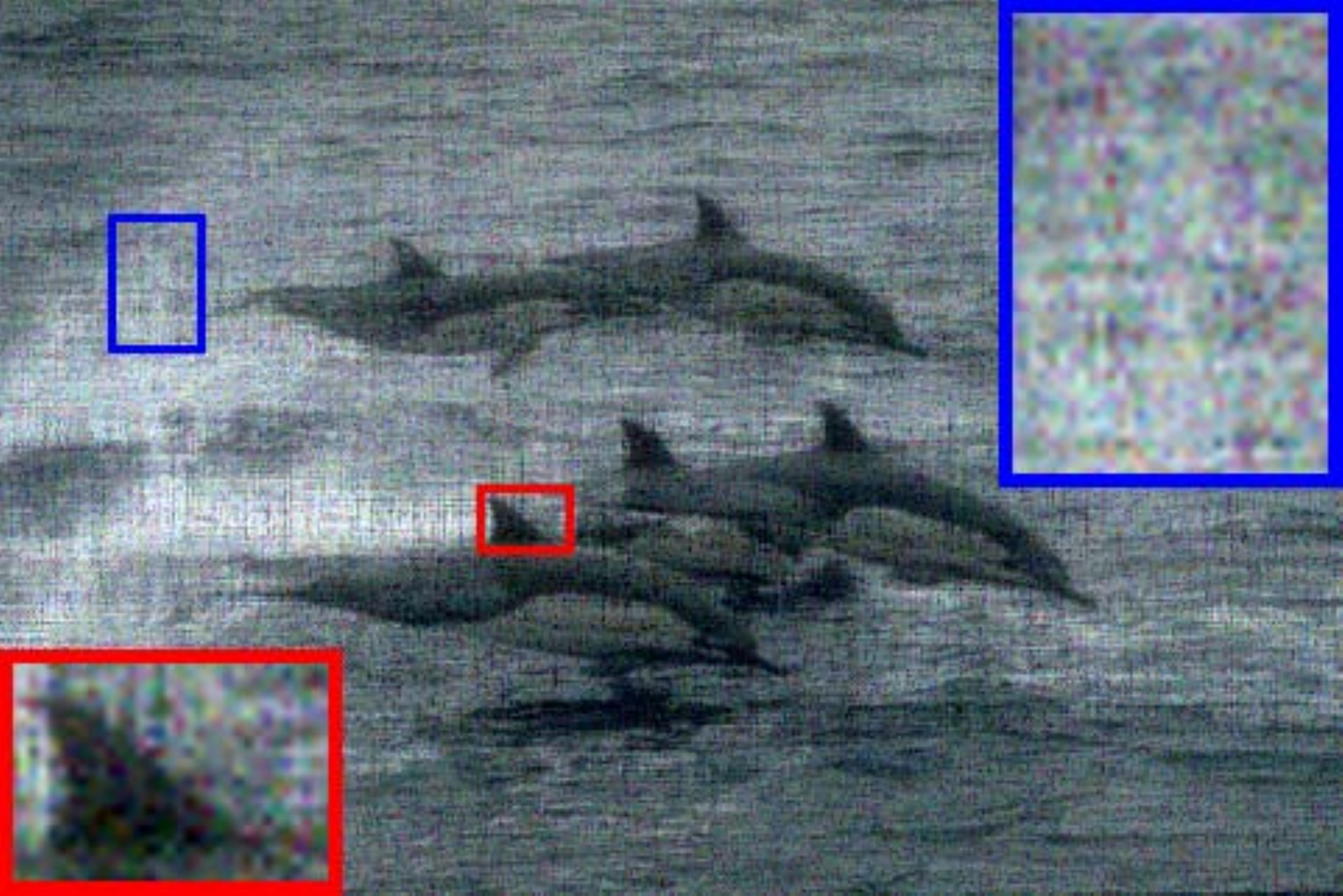}}
\\[-0.2em]
\subfigure[{\tiny BM3D (21.83,    0.4116)}]{\includegraphics[width=0.18\textwidth]{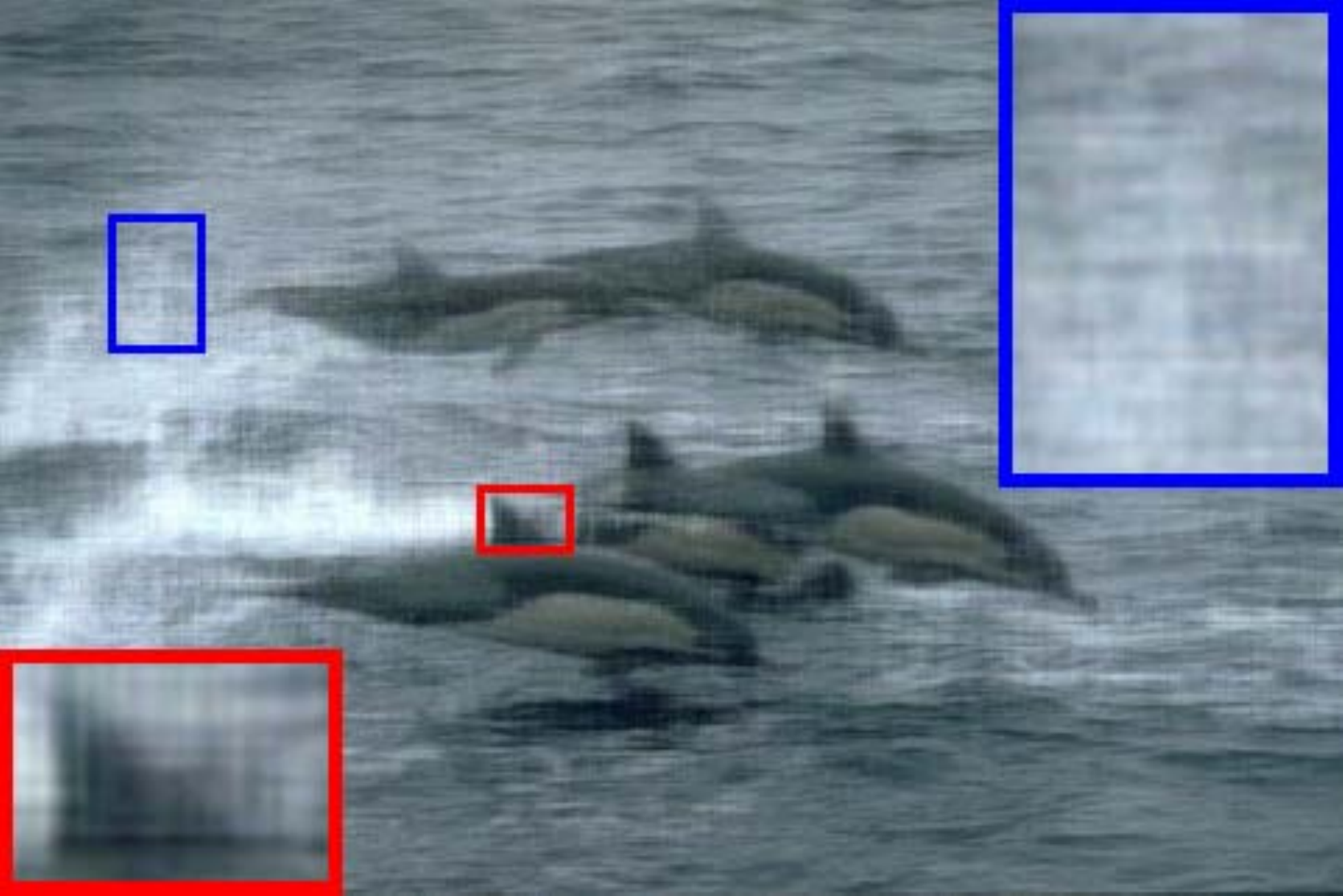}}
\subfigure[{\tiny BM3D+ (27.94,   0.7165)}]{\includegraphics[width=0.18\textwidth]{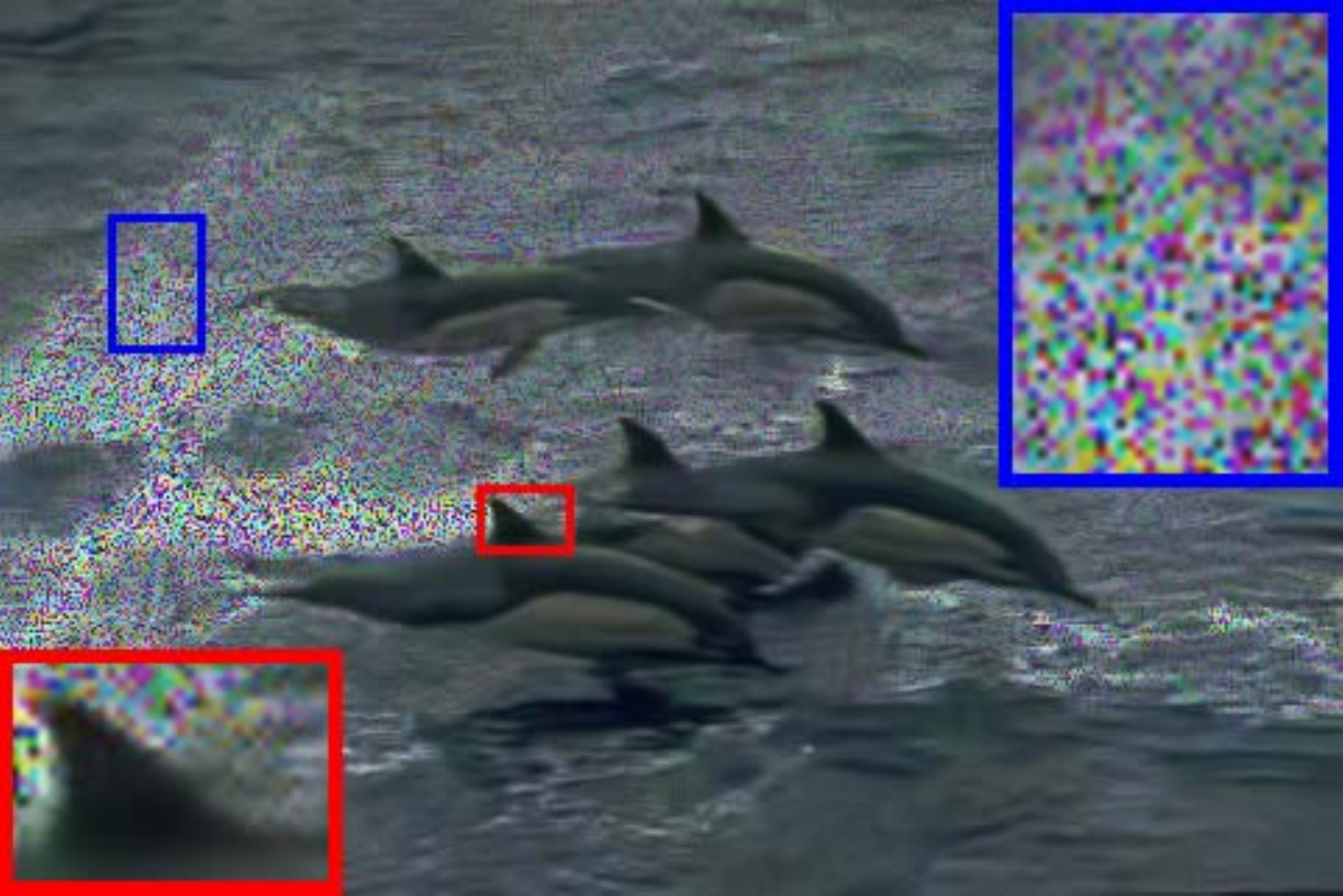}}
\subfigure[{\tiny BM3D++ (28.62,   0.7440)}]{\includegraphics[width=0.18\textwidth]{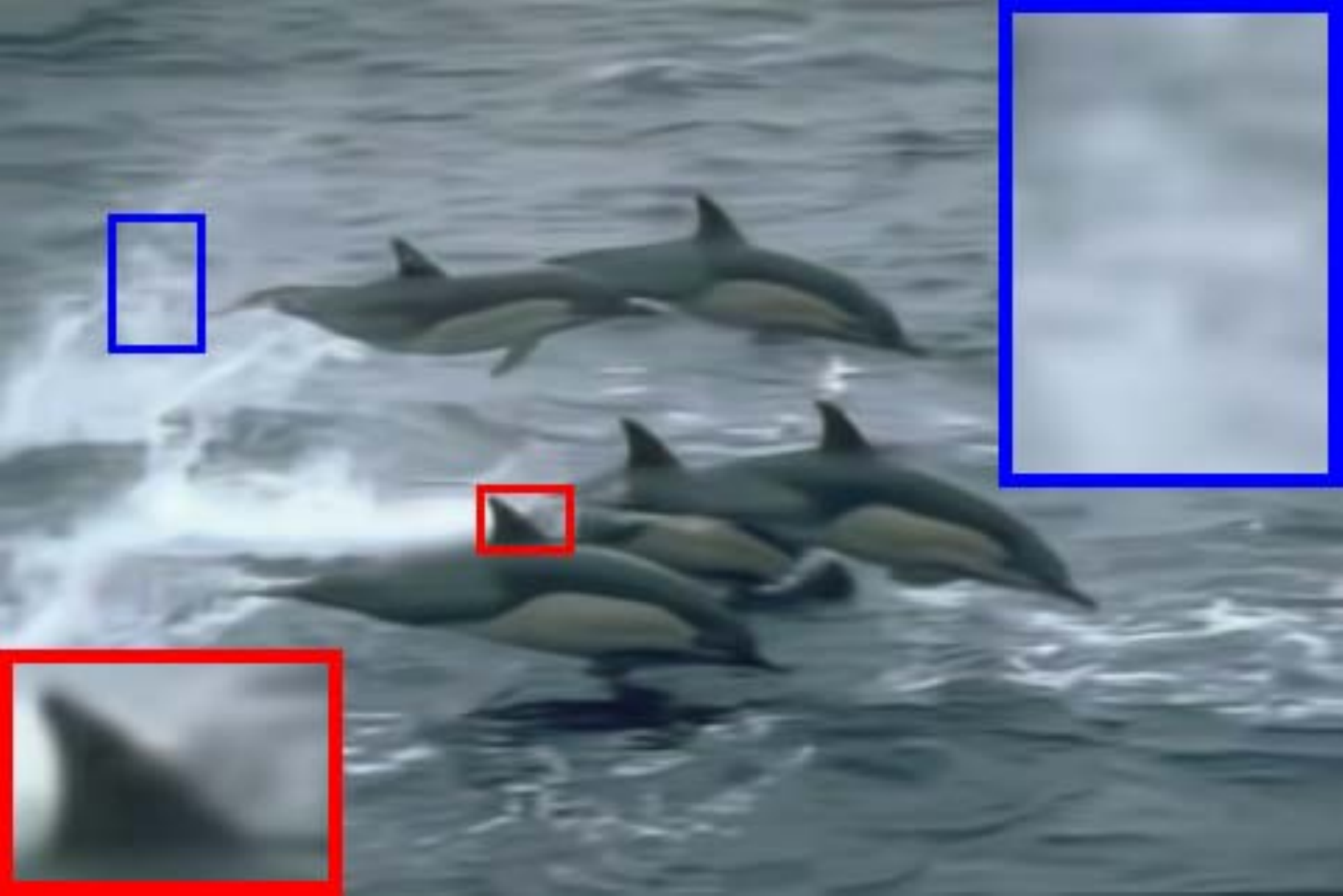}}
\subfigure[{\tiny SNN (26.91,   0.7898)}]{\includegraphics[width=0.18\textwidth]{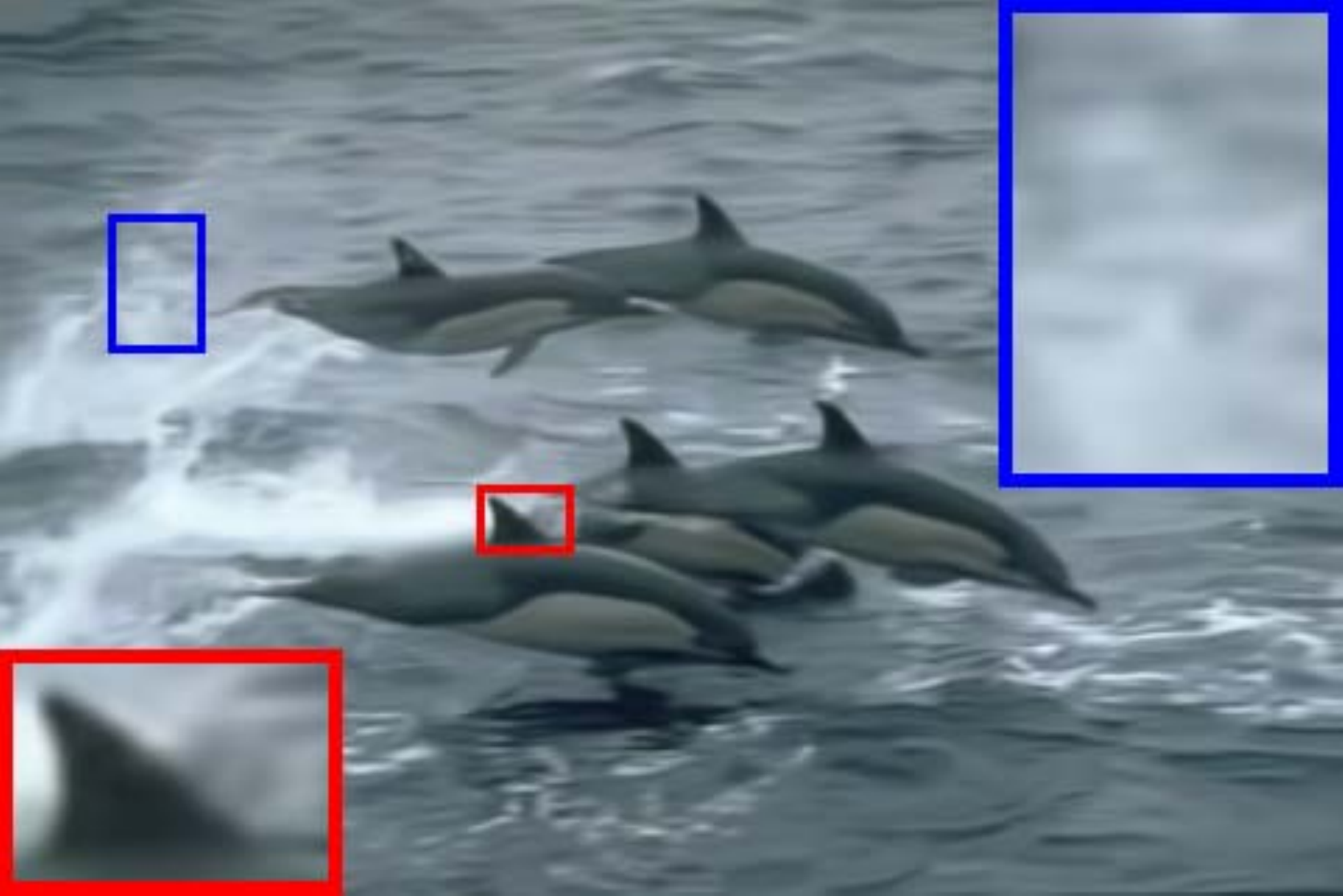}}
\subfigure[{\tiny RTC ({\bf 30.98},    {\bf 0.9044})}]{\includegraphics[width=0.18\textwidth]{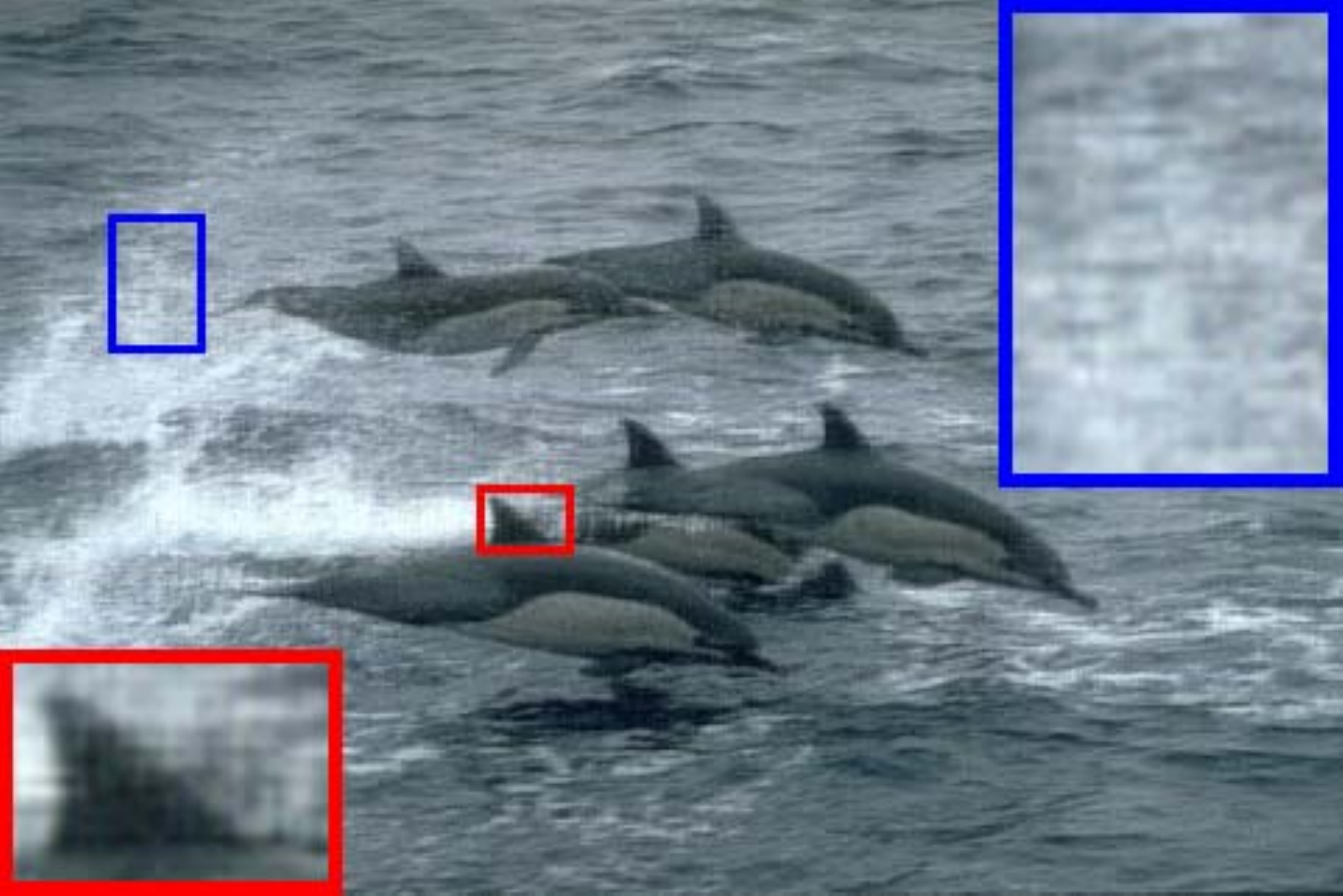}}
\\
\subfigure[{\tiny Original (PSNR, SSIM)}]{\includegraphics[width=0.18\textwidth]{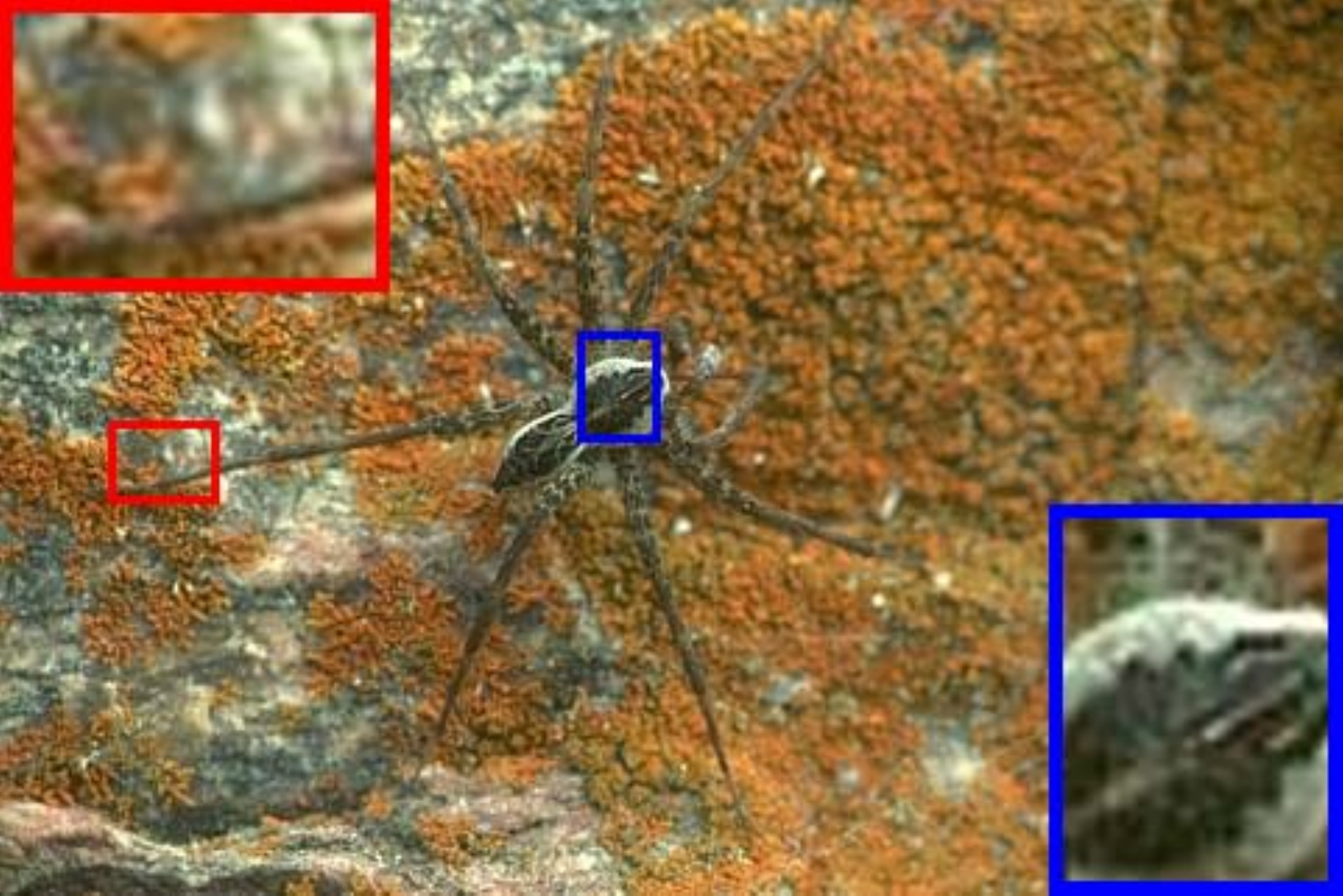}}
\subfigure[{\tiny Noisy (NA, NA)}]{\includegraphics[width=0.18\textwidth]{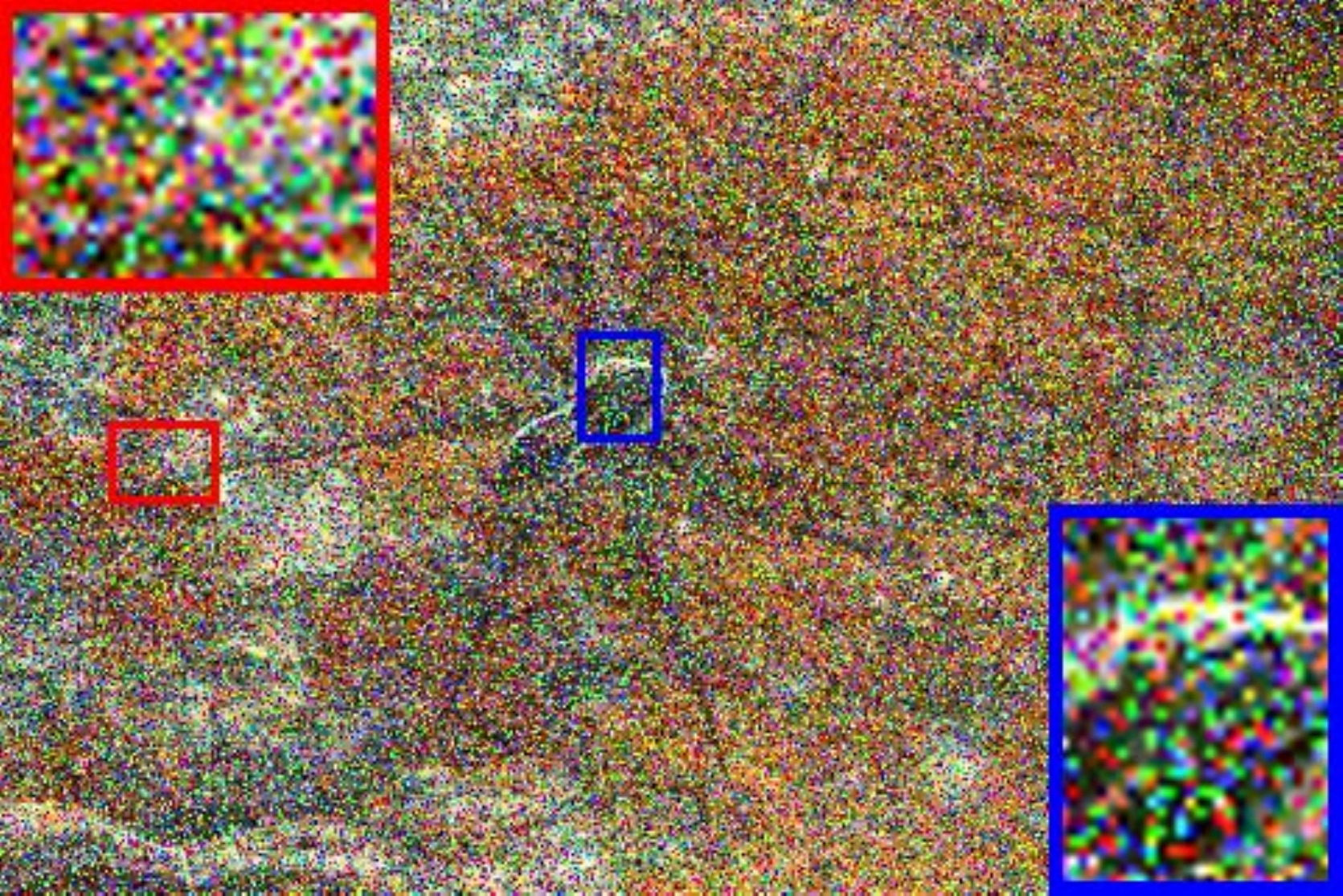}}
\subfigure[{\tiny RPCA (17.88,    0.3928)}]{\includegraphics[width=0.18\textwidth]{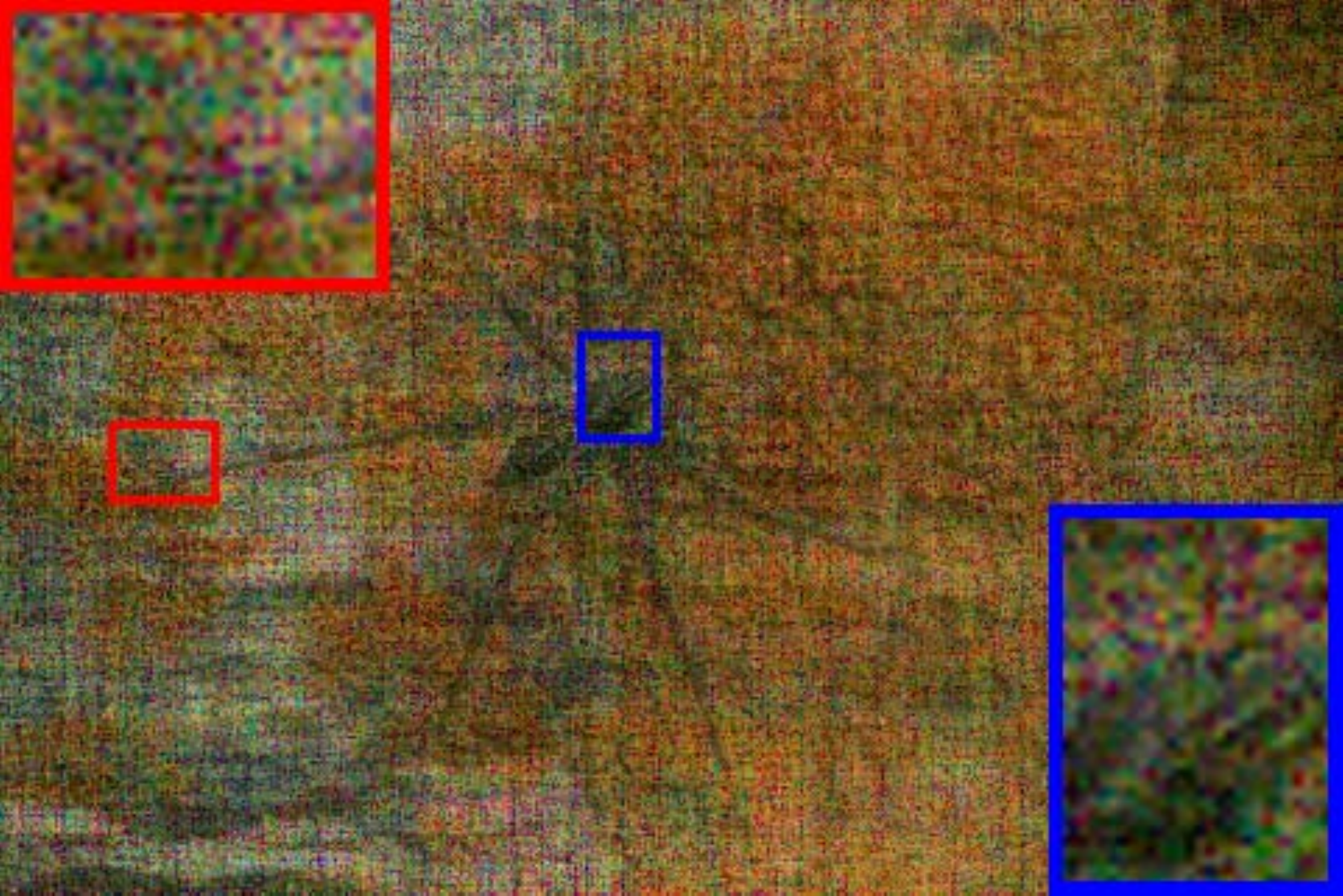}}
\subfigure[{\tiny RMC (24.03    0.6380)}]{\includegraphics[width=0.18\textwidth]{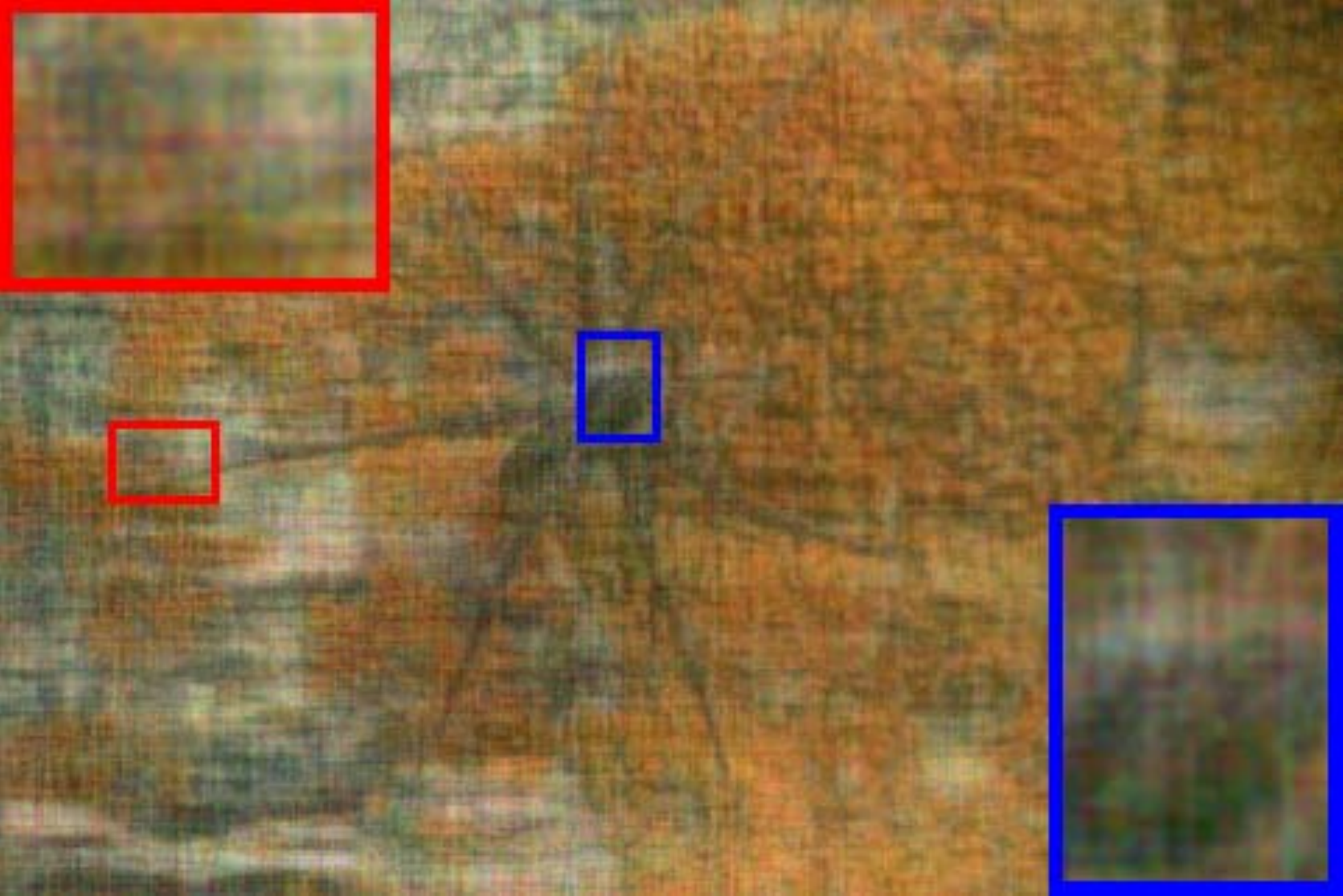}}
\subfigure[{\tiny TRPCA (25.60,    0.7192)}]{\includegraphics[width=0.18\textwidth]{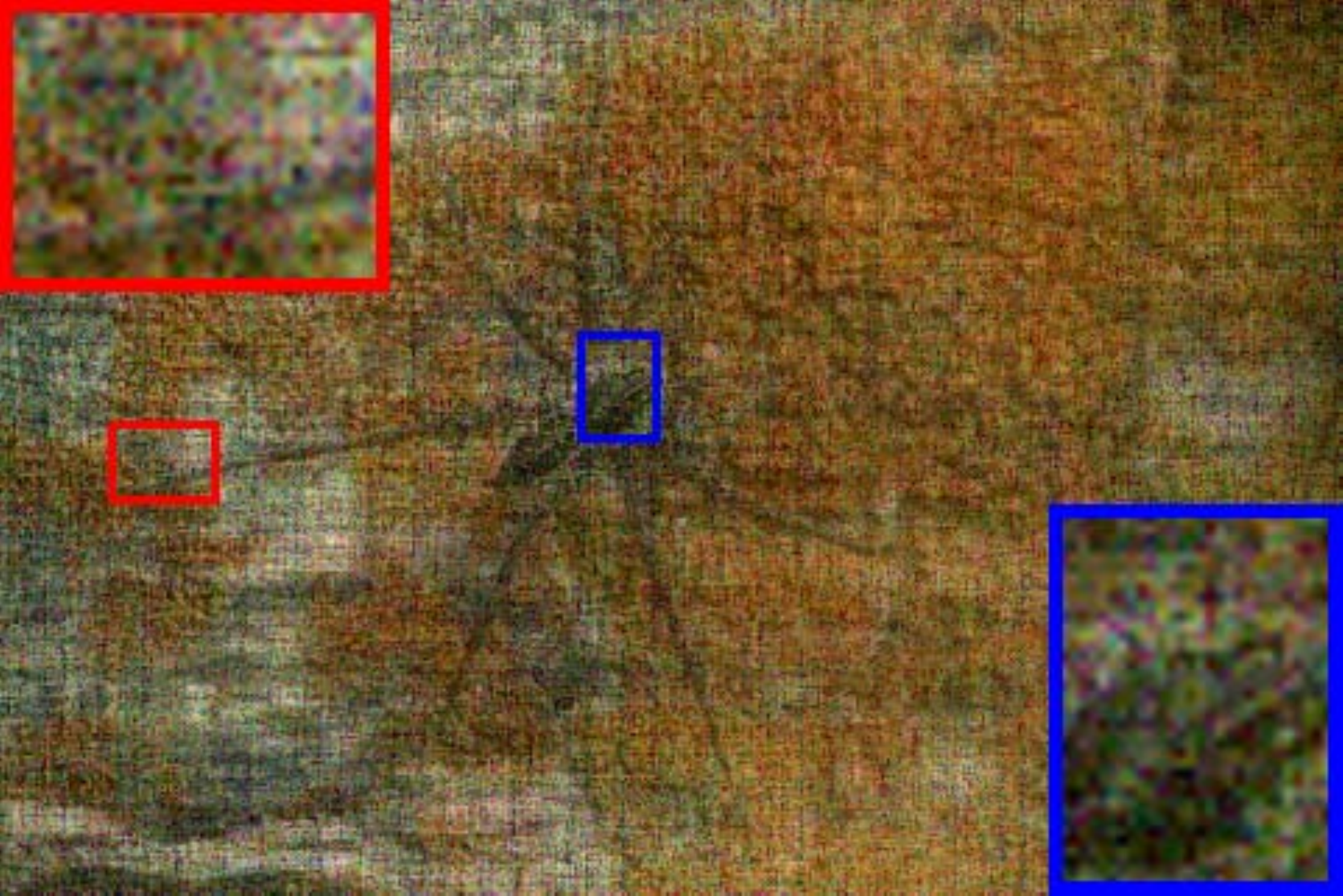}}
\\[-0.2em]
\subfigure[{\tiny BM3D (21.43,    0.4227)}]{\includegraphics[width=0.18\textwidth]{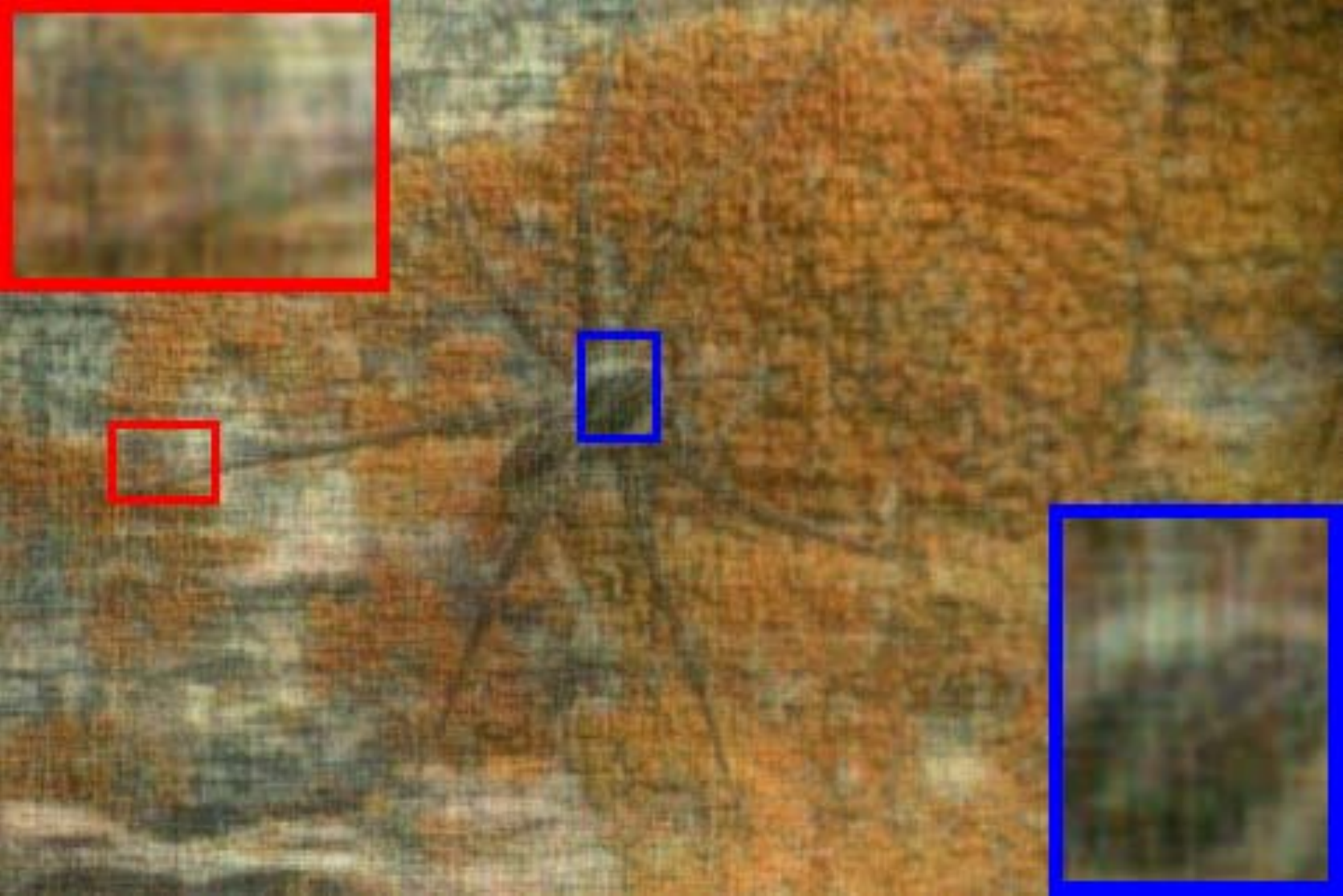}}
\subfigure[{\tiny BM3D+ (25.60,   0.6279)}]{\includegraphics[width=0.18\textwidth]{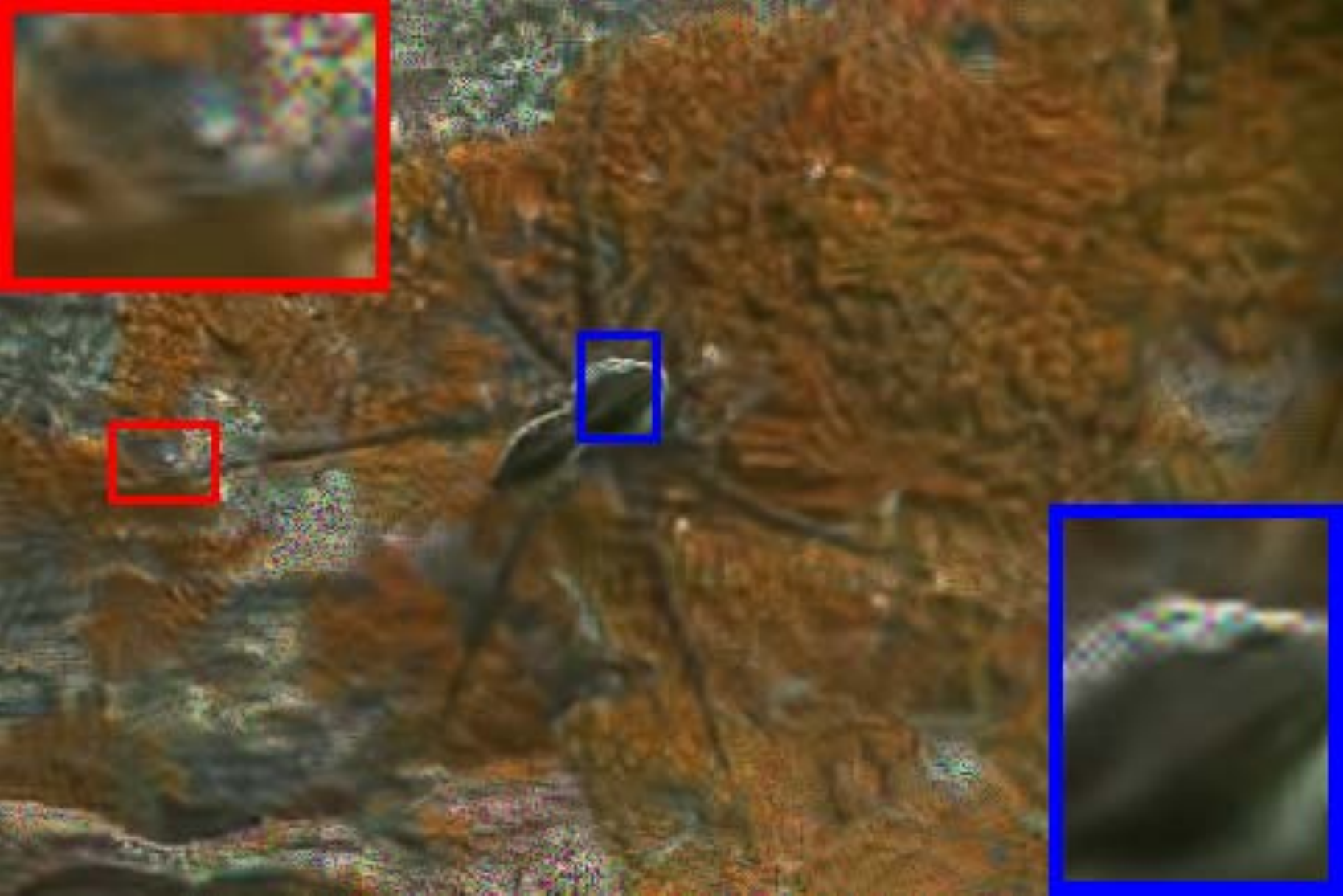}}
\subfigure[{\tiny BM3D++ (26.18,    0.6668)}]{\includegraphics[width=0.18\textwidth]{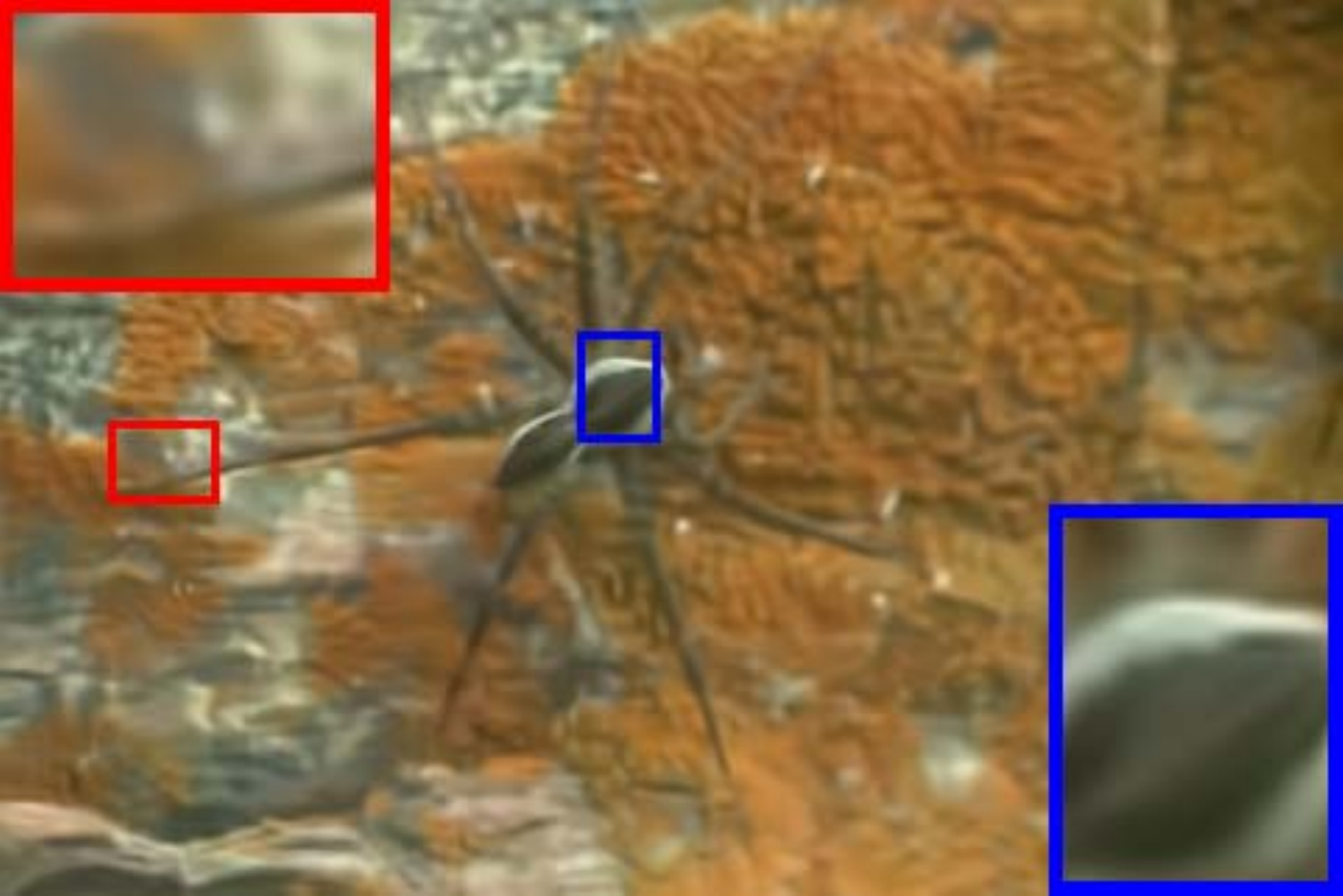}}
\subfigure[{\tiny SNN (25.47,    0.7399)}]{\includegraphics[width=0.18\textwidth]{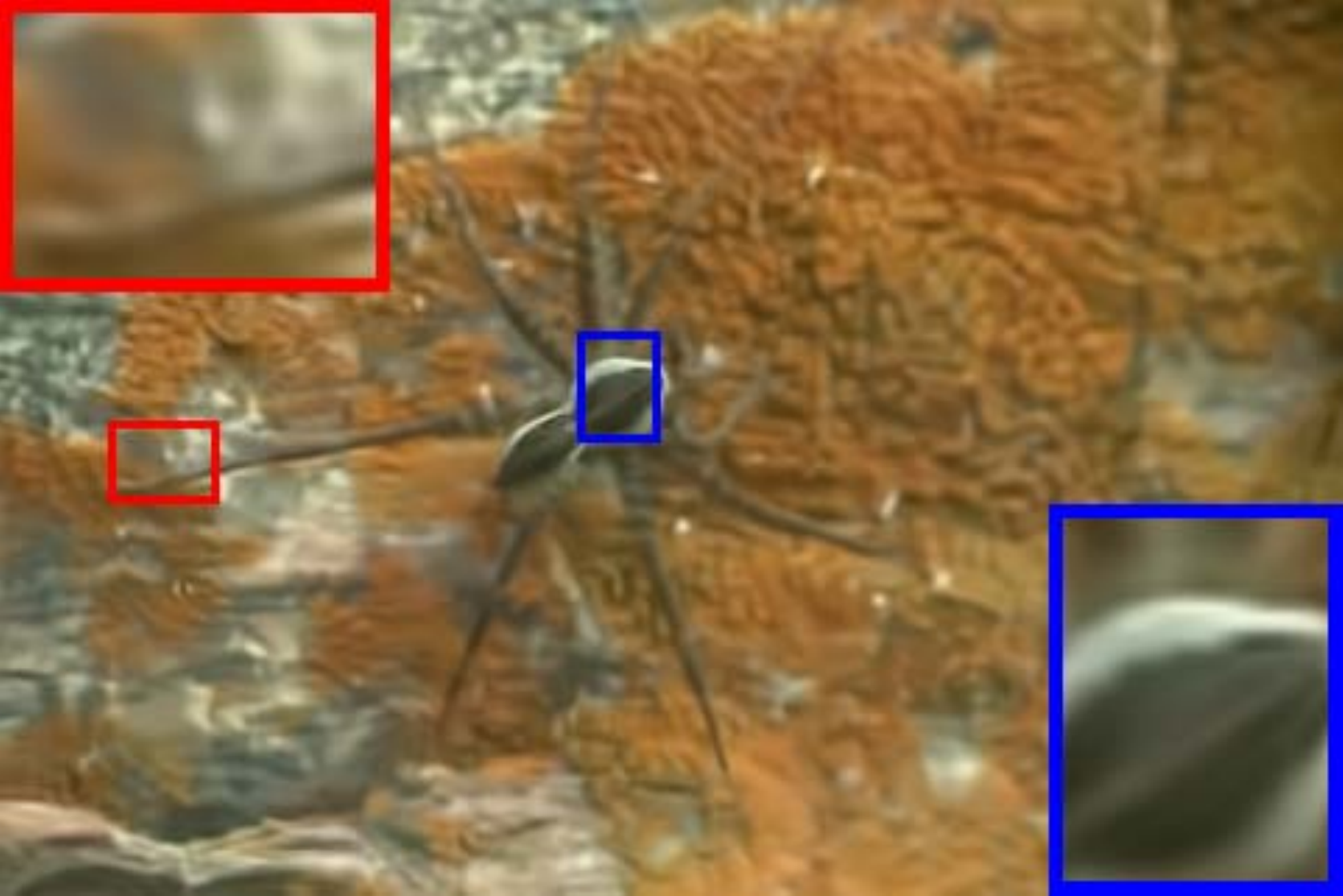}}
\subfigure[{\tiny RTC ({\bf 28.39},    {\bf 0.8593})}]{\includegraphics[width=0.18\textwidth]{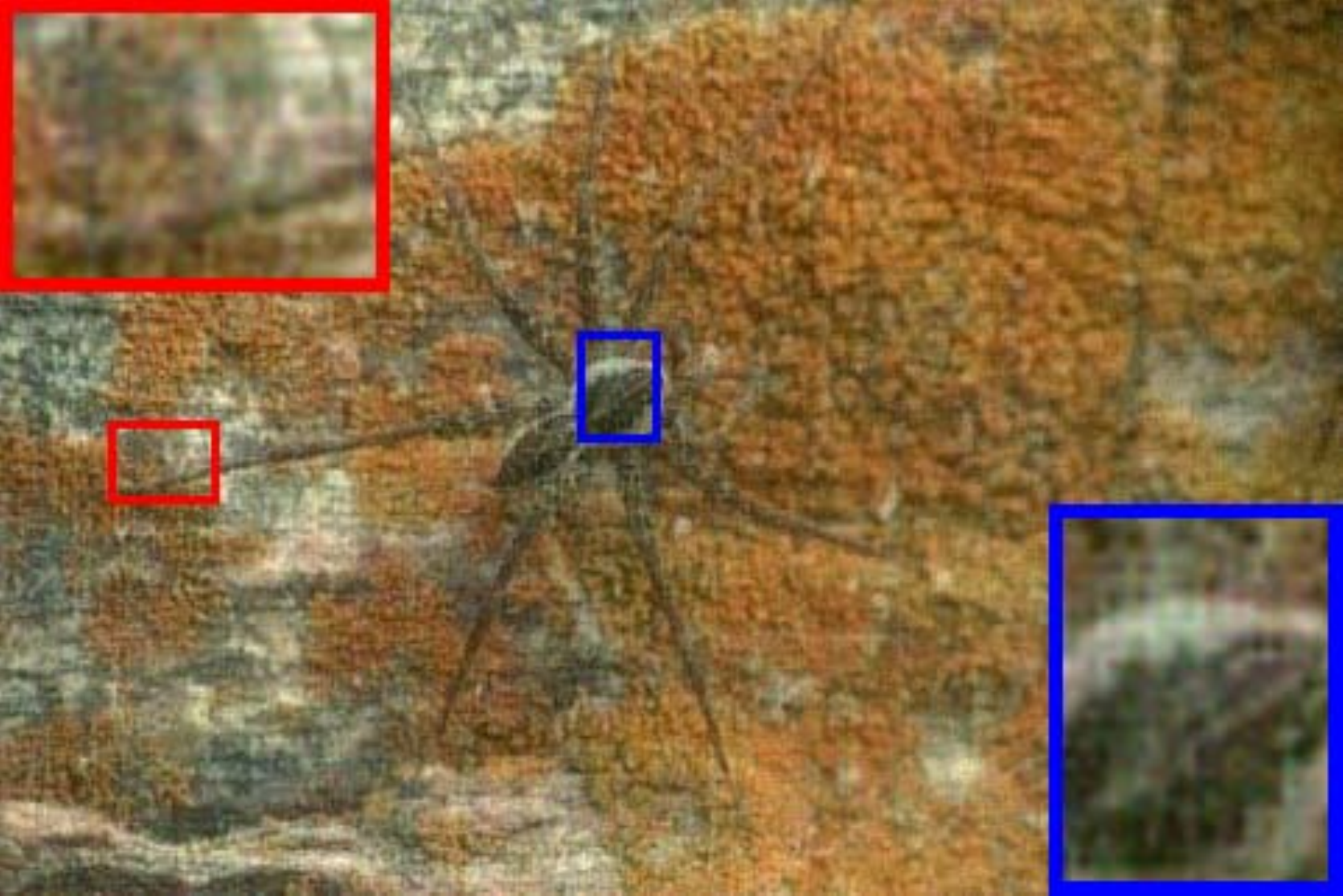}}
\caption{Visual comparison of Image Restoration when $\rho = 0.7$ and $\gamma = 0.3$. Our result contains slightly sharper edges and fewer artifacts.}
\label{fig6}
\end{figure*}

We consider two different situations in which $\rho$ is set to be 0.9 and 0.7 respectively, and change $\gamma$ from 0.01 to 0.03 for each case. Table~\ref{tab3} gives the results in terms of average Peak Signal-to-Noise Ratio (PSNR) and Structural Similarity index (SSIM) when $\sigma = 30$\footnote{Similar results are obtained when we try different values for $\sigma$.}. The best results is in bold text and the second one is underlined. Our algorithm yields the best quantitative results for all the cases, significantly better than the runner-up sometimes. The performance of TRPCA is comparable to our algorithm when $\rho = 0.9$, but it deteriorates dramatically as $\rho$ and $\gamma$ become larger. BM3D is not able to achieve acceptable results, especially when the fraction of missing pixels is relatively high, since it considers removing the noise from the images purely. With an additional completion step, BM3D+ and BM3D++ exhibit remarkably improved performance. Three tensor-based methods, TRPCA, SNN and RTC, perform much better than two matrix-based approaches, RPCA and RMC. The reason is that RPCA and RMC, which conduct the matrix recovery on each channel independently, are not capable of exploiting the information across channels, while the tensor-based methods can take the advantage of the multi-channel structure. We also see that our quantitative results are much better than those obtained by SNN, which verifies that t-SVD is more suitable for capturing the \lq\lq spatial-shifting\rq\rq~characteristics in natural images compared with Tucker decomposition.

In Figure~\ref{fig5}, we give the results obtained by various methods on all 50 images when $\rho = 0.7$ and $\gamma = 0.3$. Our algorithm outperforms the other methods quantitatively for most images. From the two examples in Figure~\ref{fig6}, we see that our recovered images contain slightly sharper edges and fewer artifacts, exhibited in the enlarged views of the corresponding areas in red and blue boxes.

\subsection{Video Background Modeling}
\label{sec7:sub3}

Another possible application of our algorithm is the background modeling problem, a crucial task in video surveillance, which is to estimate a good model for the background variations in a scene. Due to the correlation between frames, it is reasonably to believe that the background variations are approximately low-rank. Foreground objects generally occupy only a small fraction of the image pixels and hence can be naturally treated as sparse errors.

\begin{figure*}[!t]
\centering
\subfigure{\includegraphics[width = 0.18\textwidth]{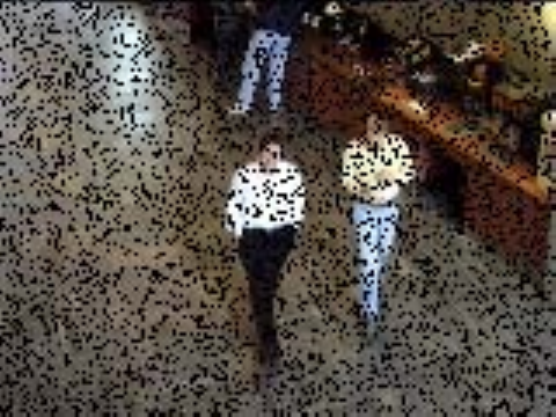}}  \subfigure{\includegraphics[width = 0.18\textwidth]{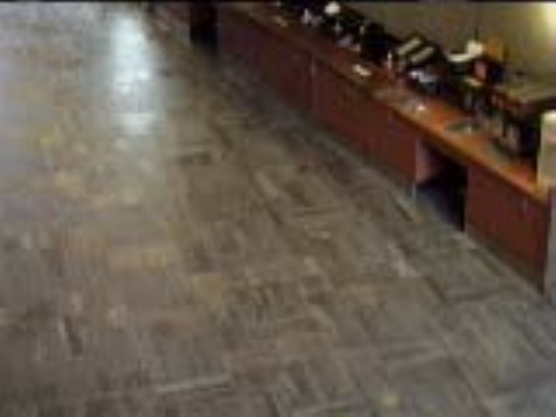}}  \subfigure{\includegraphics[width = 0.18\textwidth]{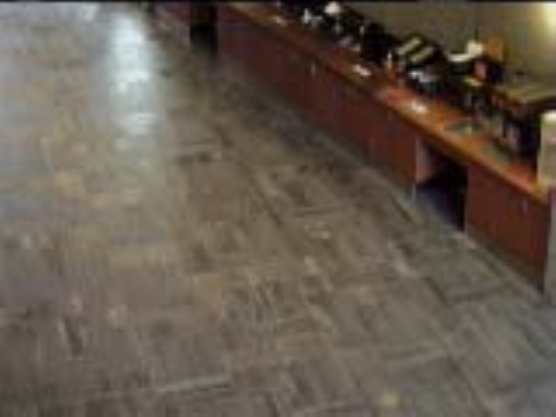}}
\subfigure{\includegraphics[width = 0.18\textwidth]{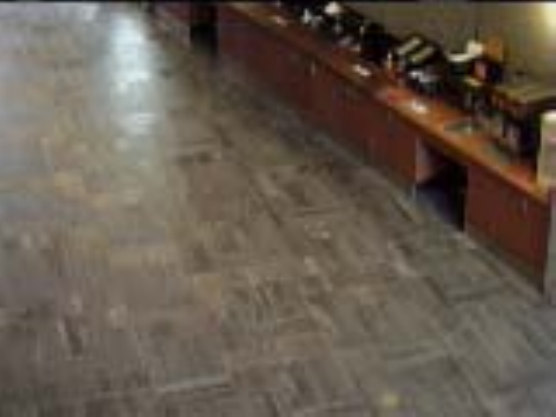}}
\subfigure{\includegraphics[width = 0.18\textwidth]{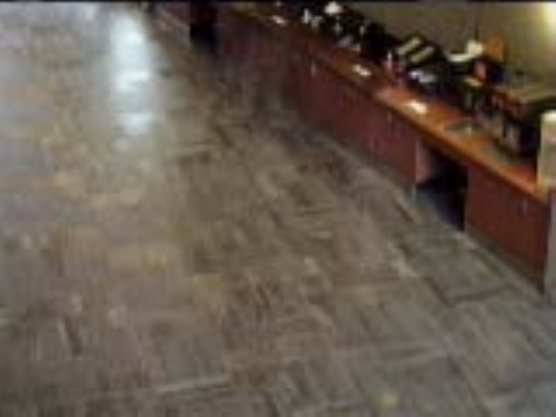}}
\\[-0.3em]
\subfigure{\includegraphics[width = 0.18\textwidth]{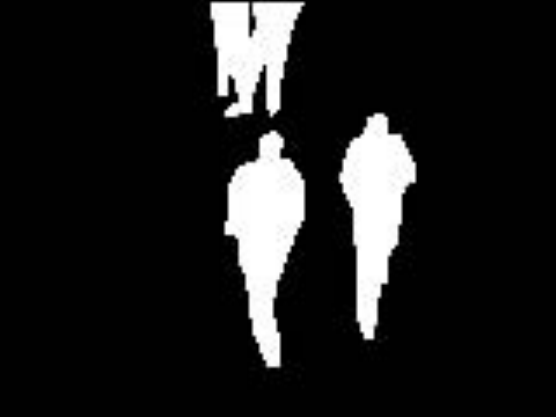}}
\subfigure{\includegraphics[width = 0.18\textwidth]{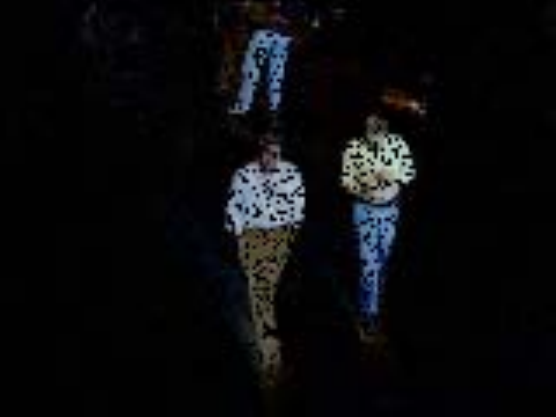}}  \subfigure{\includegraphics[width = 0.18\textwidth]{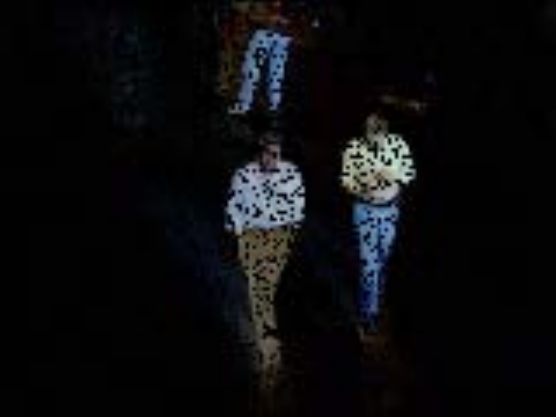}}
\subfigure{\includegraphics[width = 0.18\textwidth]{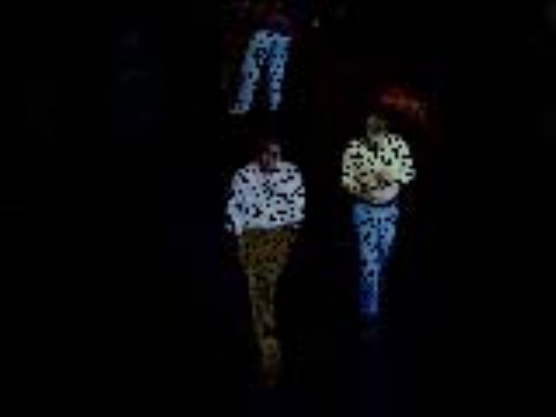}}
\subfigure{\includegraphics[width = 0.18\textwidth]{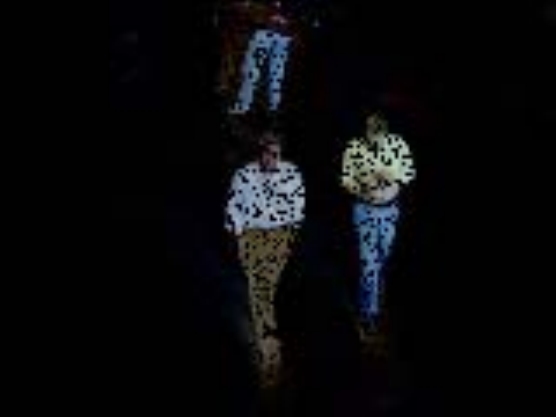}}
\\[-0.1em]
\subfigure{\includegraphics[width = 0.18\textwidth]{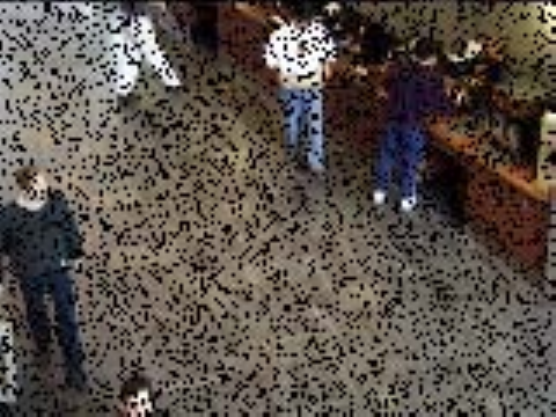}}  \subfigure{\includegraphics[width = 0.18\textwidth]{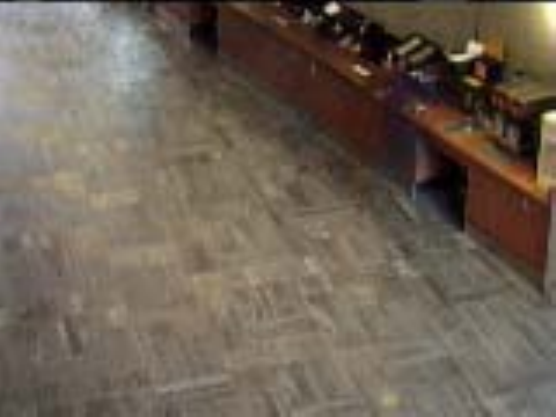}}  \subfigure{\includegraphics[width = 0.18\textwidth]{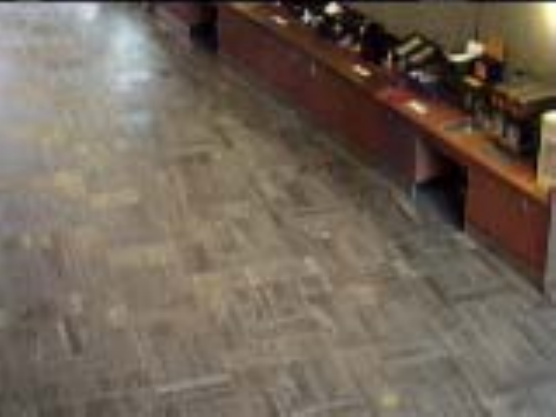}}
\subfigure{\includegraphics[width = 0.18\textwidth]{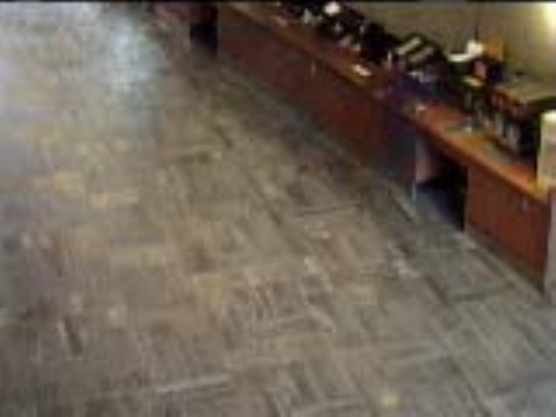}}
\subfigure{\includegraphics[width = 0.18\textwidth]{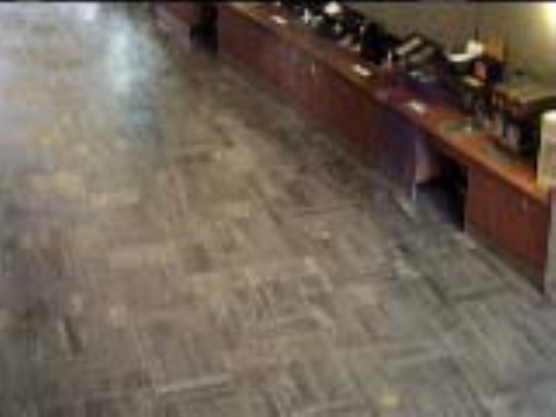}}
\\[-0.3em]
\subfigure{\includegraphics[width = 0.18\textwidth]{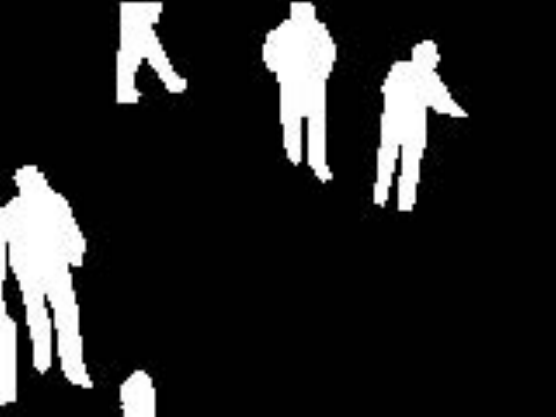}}
\subfigure{\includegraphics[width = 0.18\textwidth]{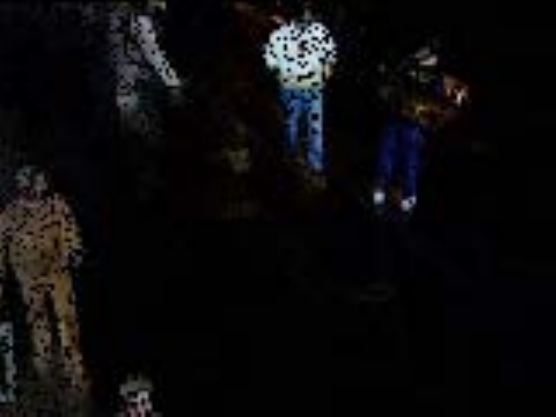}}  \subfigure{\includegraphics[width = 0.18\textwidth]{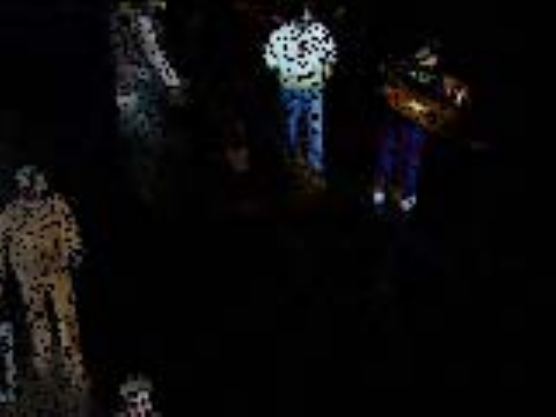}}
\subfigure{\includegraphics[width = 0.18\textwidth]{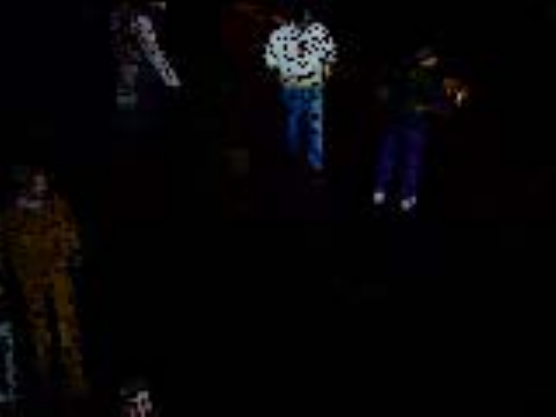}}
\subfigure{\includegraphics[width = 0.18\textwidth]{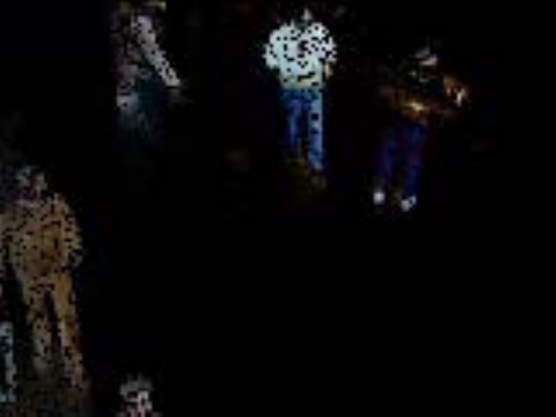}}
\caption{Background modeling results on two frames from Bootstrap video sequence when $\rho = 0.8$. From left column to right column, there are respectively the input frames and corresponding ground-truth of the foreground objects, and the low-rank and sparse components obtained by RPCA, RMC, TRPCA and RTC.}
\label{fig7}
\end{figure*}

In this experiment, we investigate the capability of our algorithm to remove the foreground objects and reconstruct the background with a proportion $\rho$ of pixels available, and compare it with RPCA, RMC and TRPCA. Given a video consisting of $n$ color frames of size $h \times w$, we stack every frame in each color channel as a column vector of size $q \times 1$ where $q = h \times w$ and then collect all column vectors into a matrix of size $3q \times n$ for RPCA and RMC, and into a tensor of size $q \times n \times 3$ for TRPCA and RTC respectively. The parameter $\lambda$ is set to be $\lambda = 1/\sqrt{3q}$ for RPCA and RMC, and $\lambda = 1/\sqrt{q}$ for TRPCA.

We randomly extract 200 frames from each of three popular color videos, Bootstrap, Hall and ShoppingMall\footnote{\url{http://perception.i2r.a-star.edu.sg/bk_model/bk_index.html}}. Then, the input data is generated by masking 20\% of the randomly selected pixels for each frame. As illustrated in Figure~\ref{fig7},  all the methods can separate the background and foreground effectively. We can see that the separation results obtained by our method are slightly better than other approaches visually. In particular, our method extracts the foreground objects with fewer ghosting effects.

\begin{figure}
\centering
\subfigure{\includegraphics[width = 0.18\textwidth]{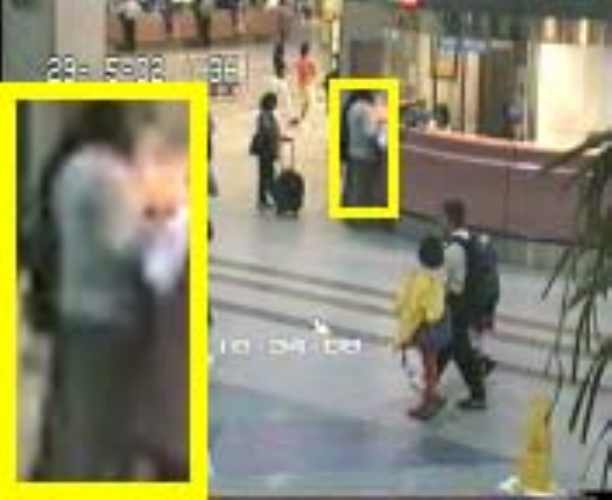}}  \subfigure{\includegraphics[width = 0.18\textwidth]{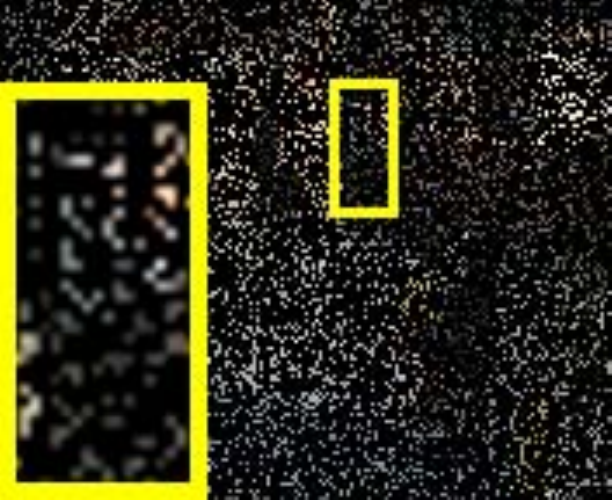}}  \subfigure{\includegraphics[width = 0.18\textwidth]{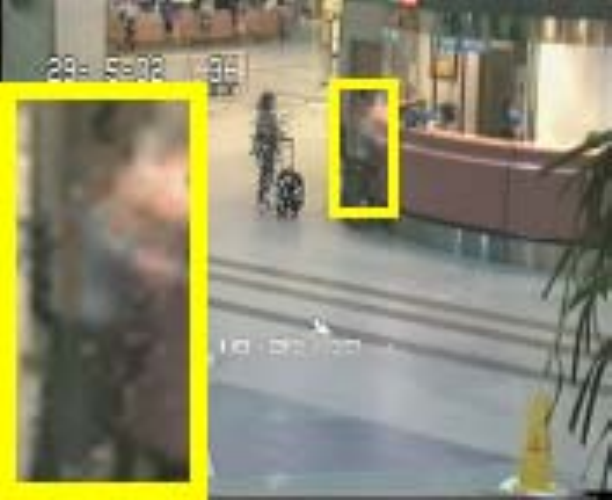}}
\subfigure{\includegraphics[width = 0.18\textwidth]{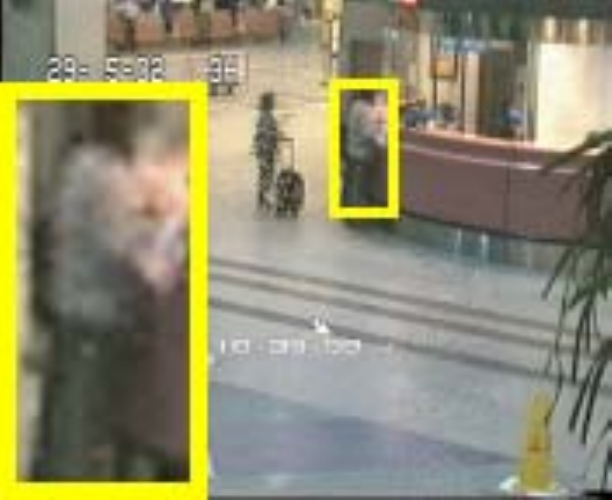}}
\\[-0.3em]
\subfigure{\includegraphics[width = 0.18\textwidth]{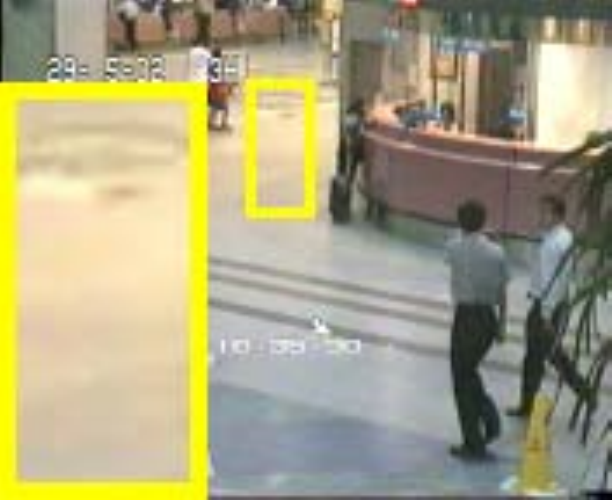}}  \subfigure{\includegraphics[width = 0.18\textwidth]{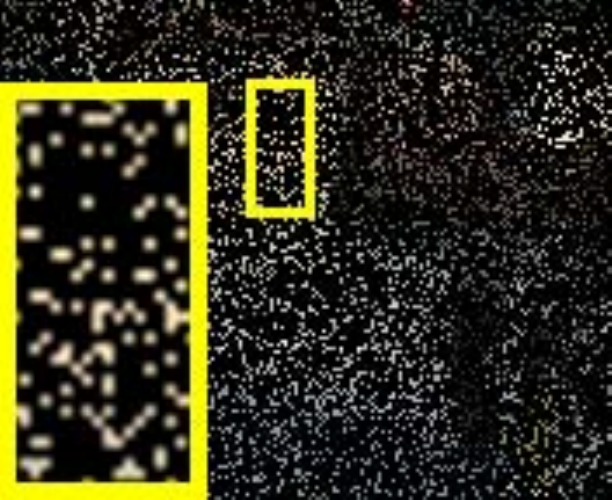}}  \subfigure{\includegraphics[width = 0.18\textwidth]{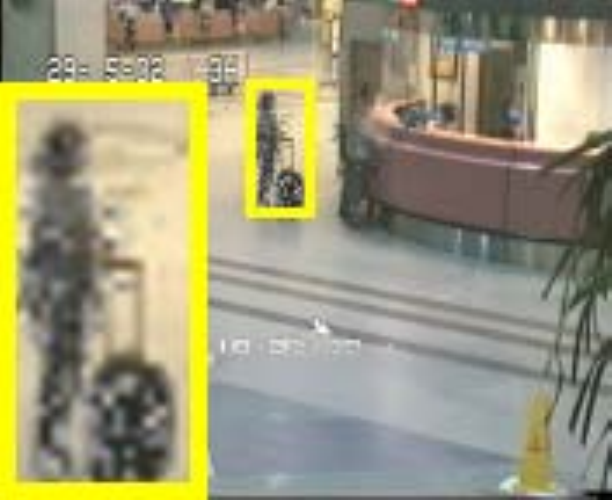}}
\subfigure{\includegraphics[width = 0.18\textwidth]{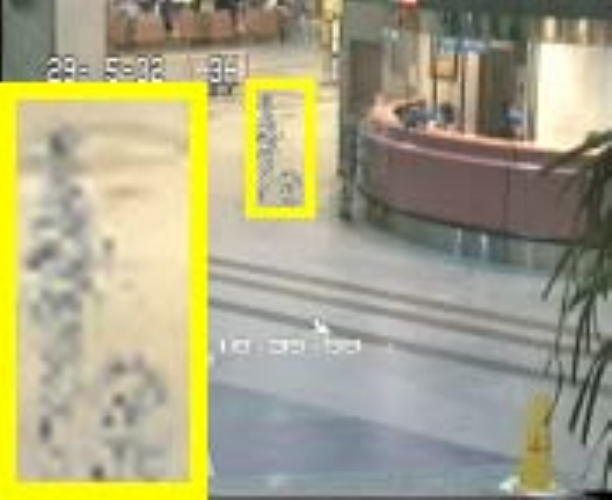}}
\\[-0.1em]
\subfigure{\includegraphics[width = 0.18\textwidth]{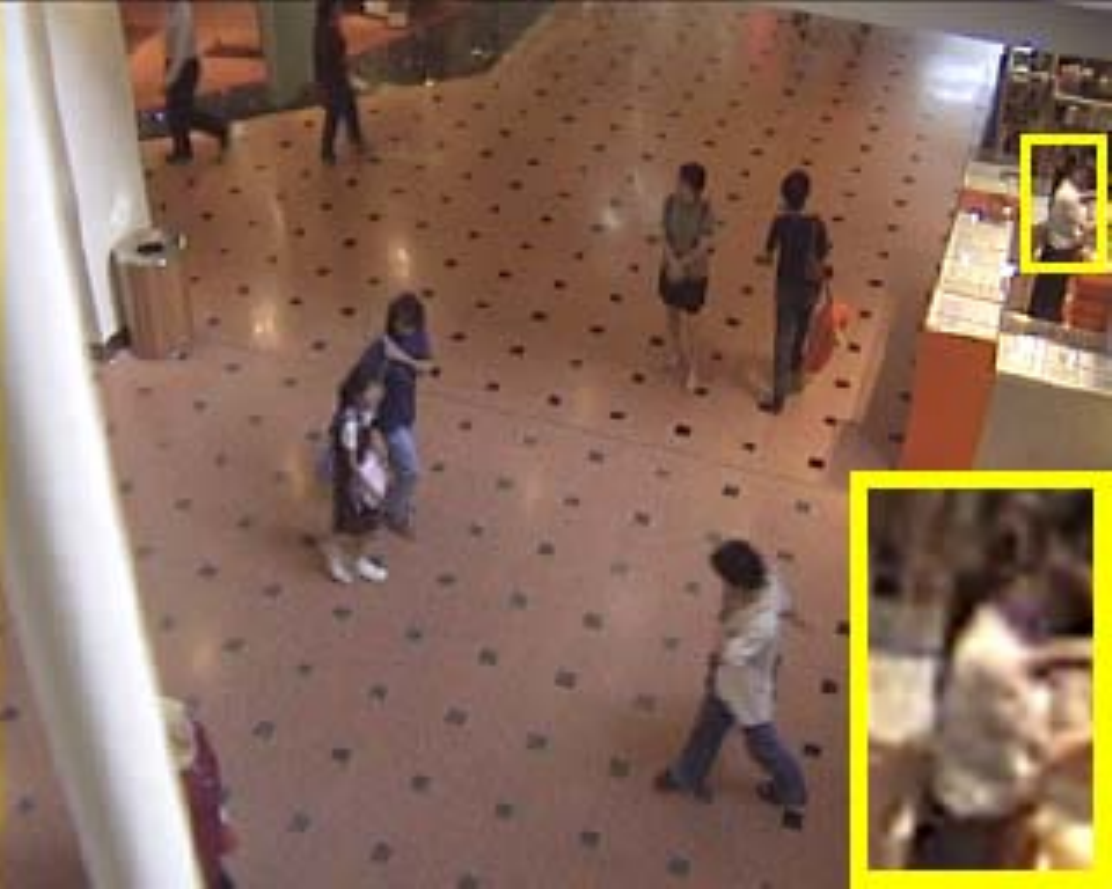}}  \subfigure{\includegraphics[width = 0.18\textwidth]{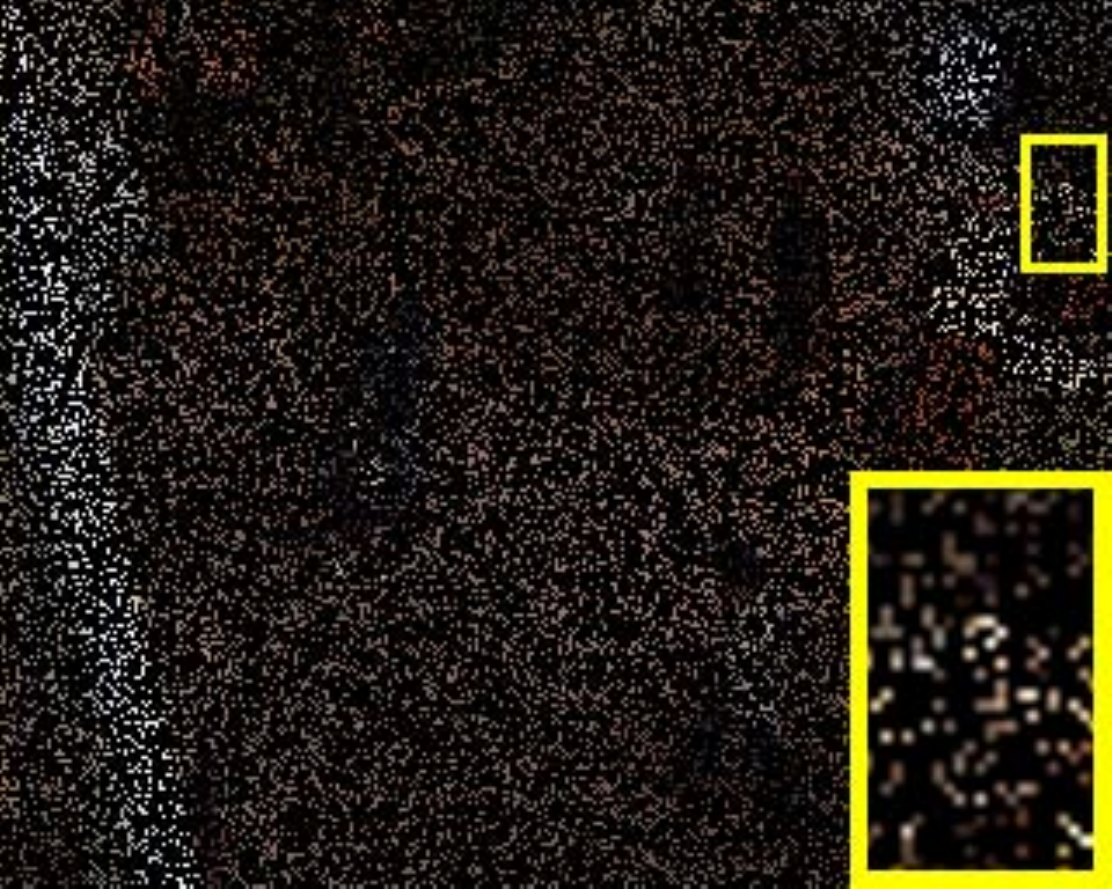}}  \subfigure{\includegraphics[width = 0.18\textwidth]{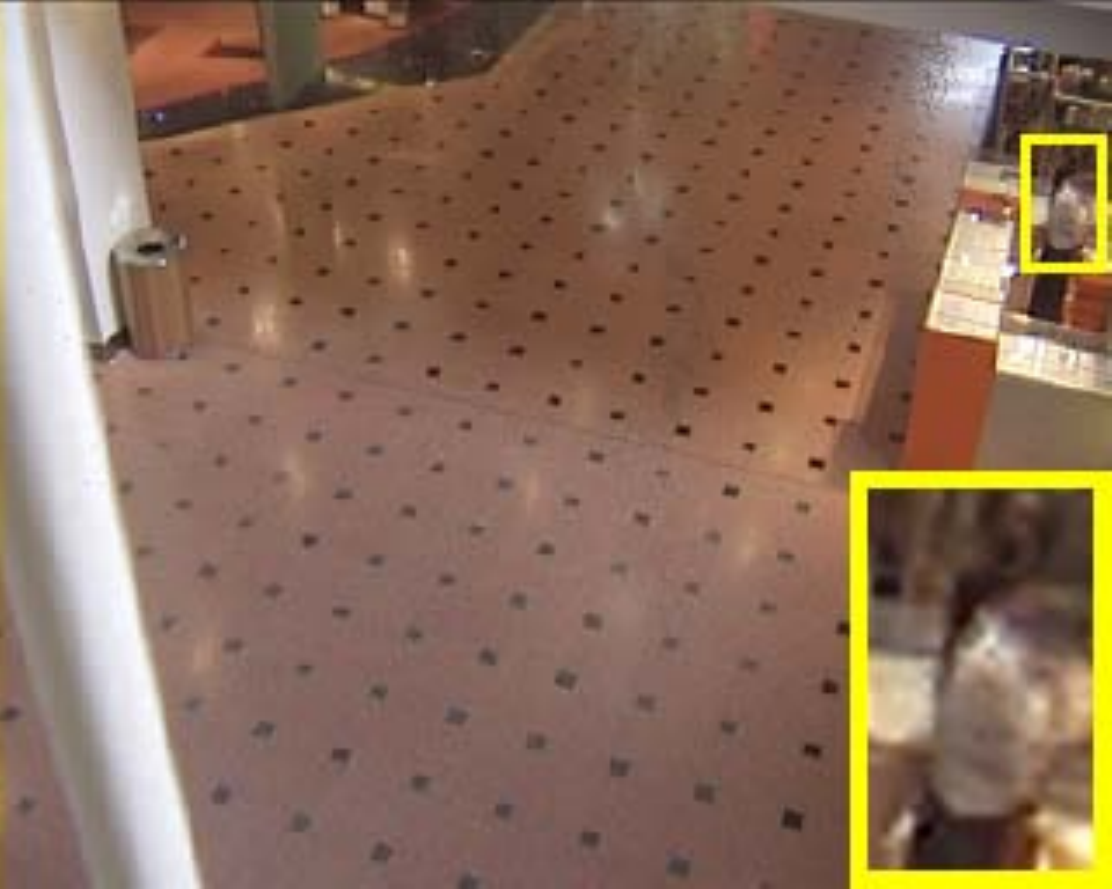}}
\subfigure{\includegraphics[width = 0.18\textwidth]{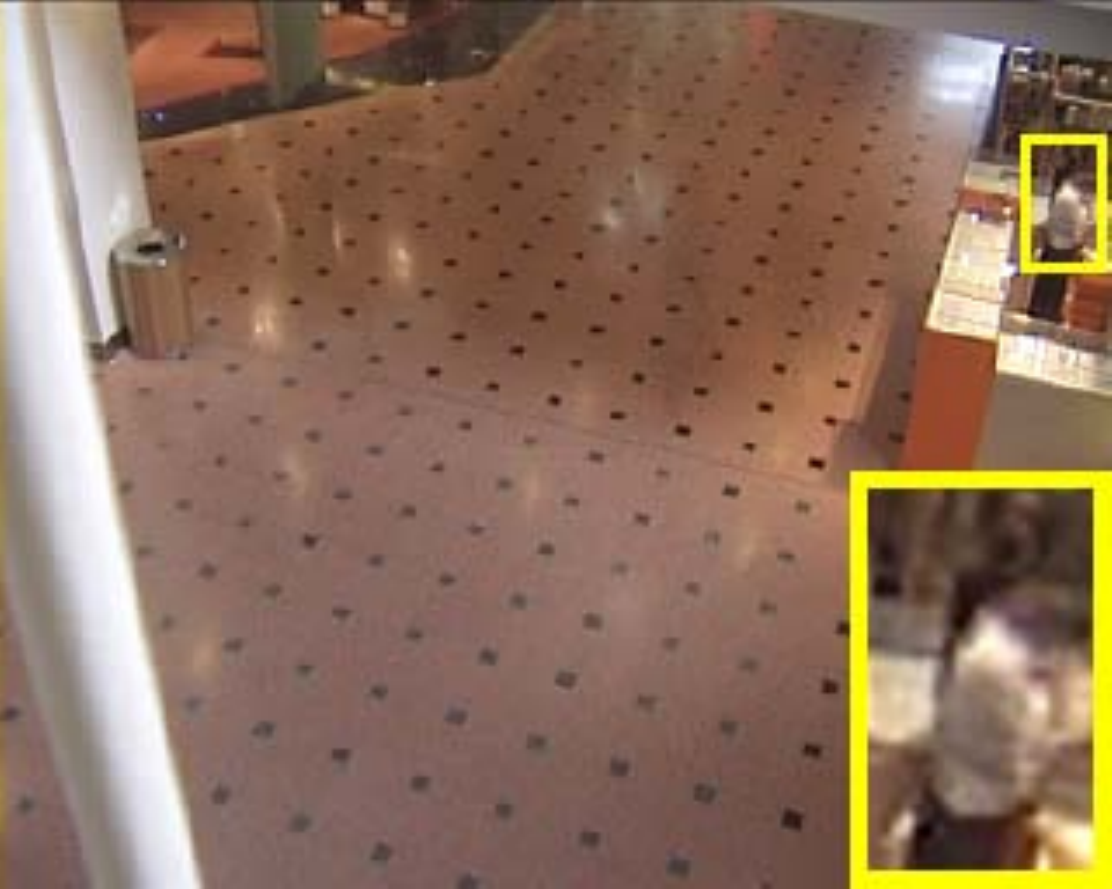}}
\\[-0.3em]
\subfigure{\includegraphics[width = 0.18\textwidth]{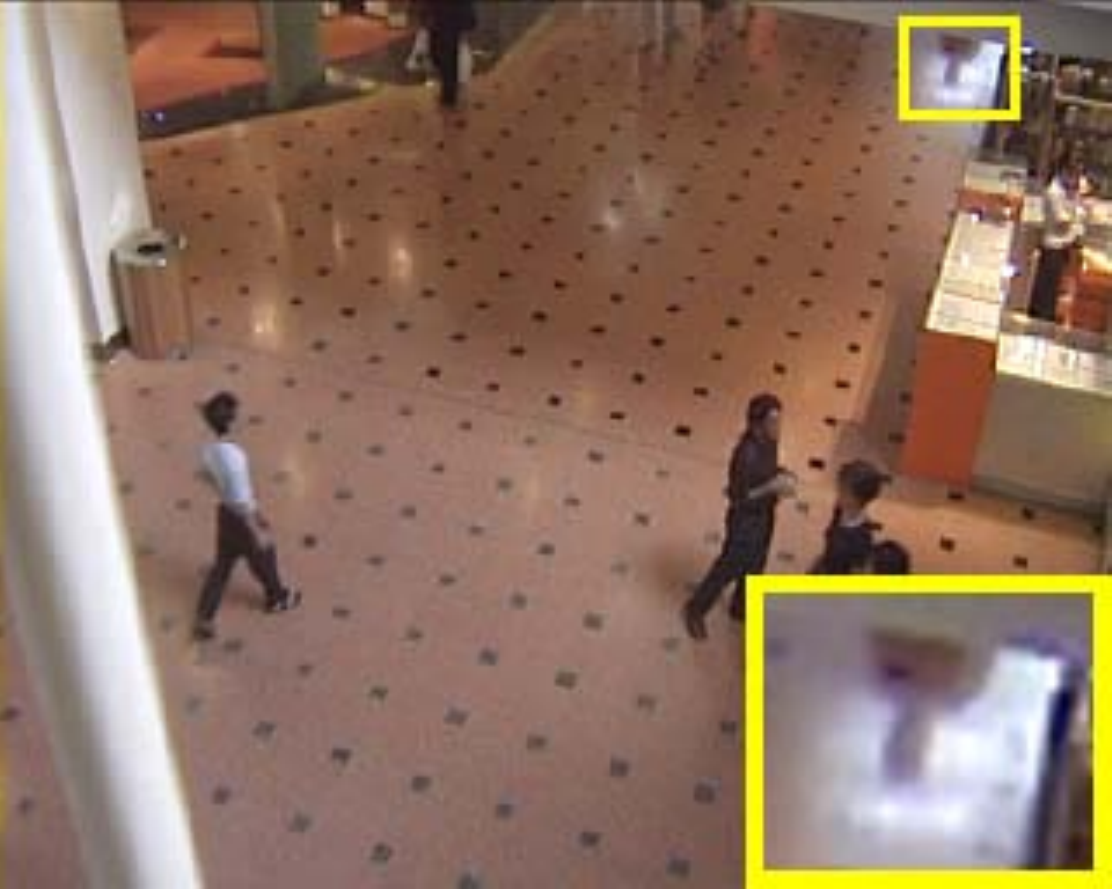}}  \subfigure{\includegraphics[width = 0.18\textwidth]{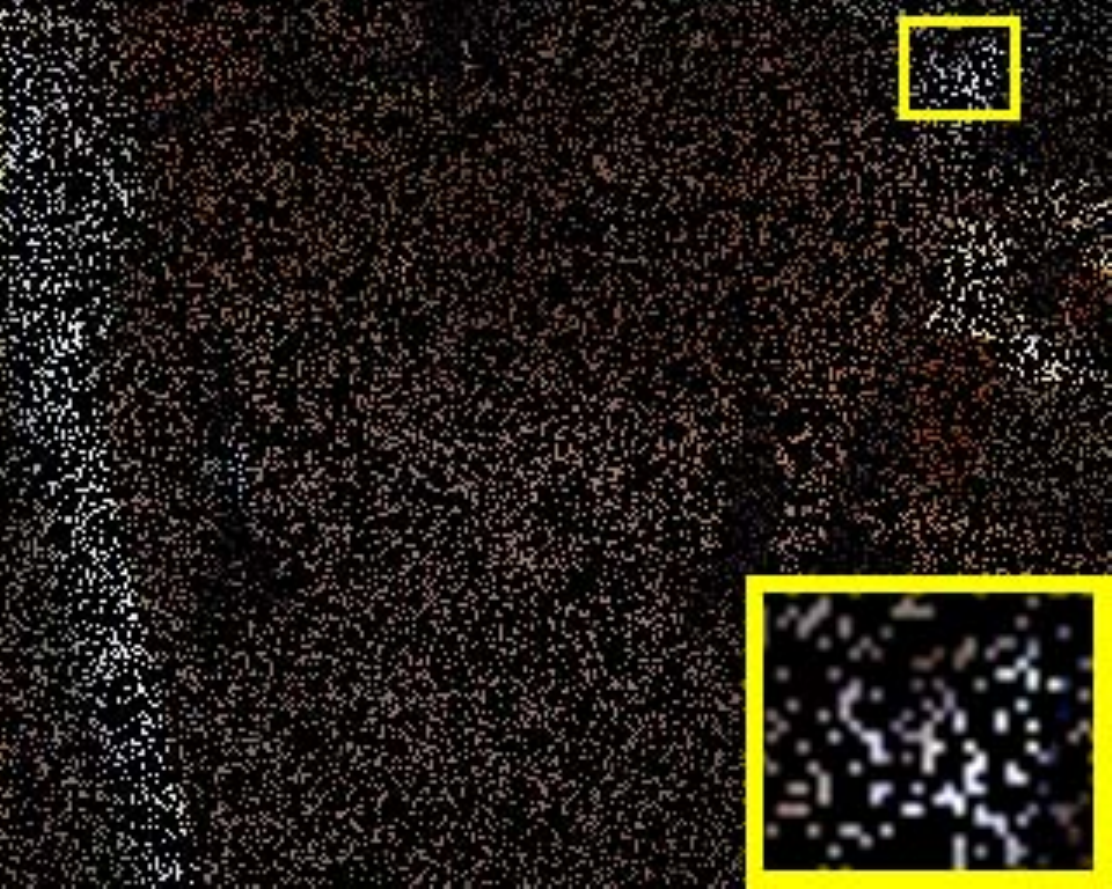}}  \subfigure{\includegraphics[width = 0.18\textwidth]{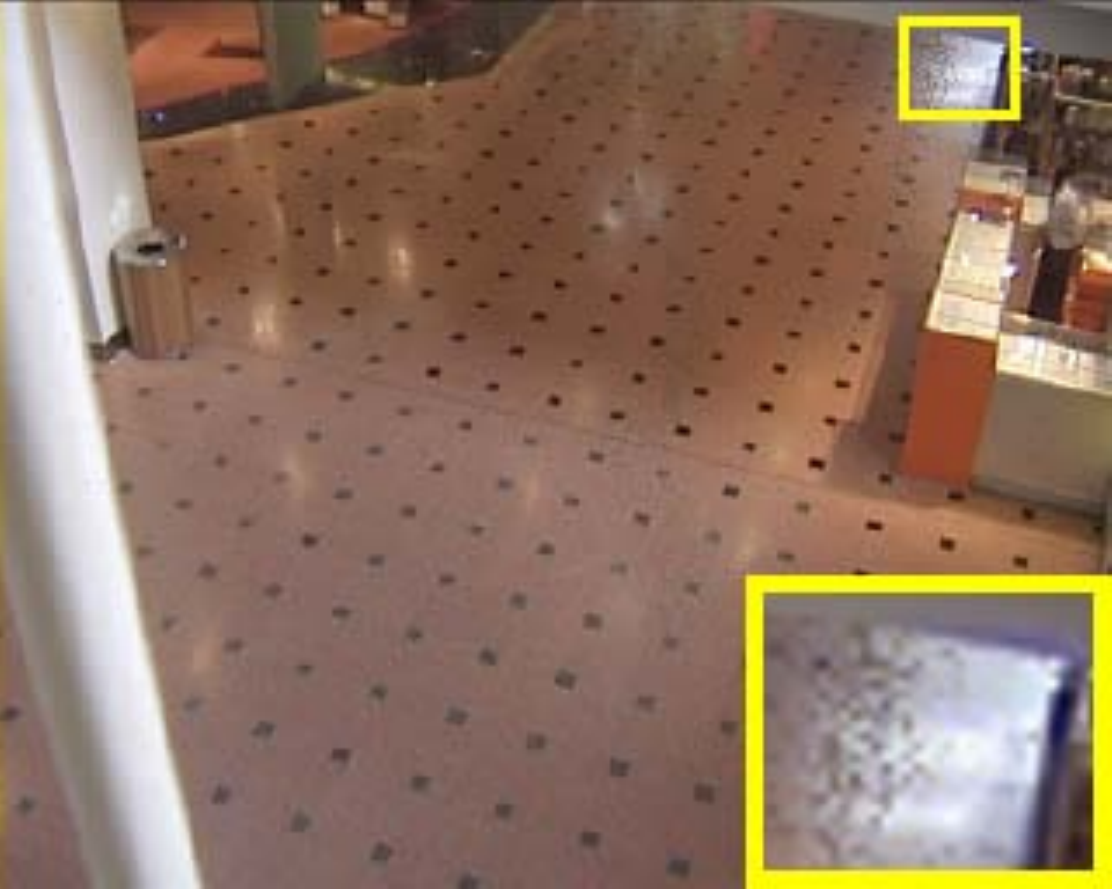}}
\subfigure{\includegraphics[width = 0.18\textwidth]{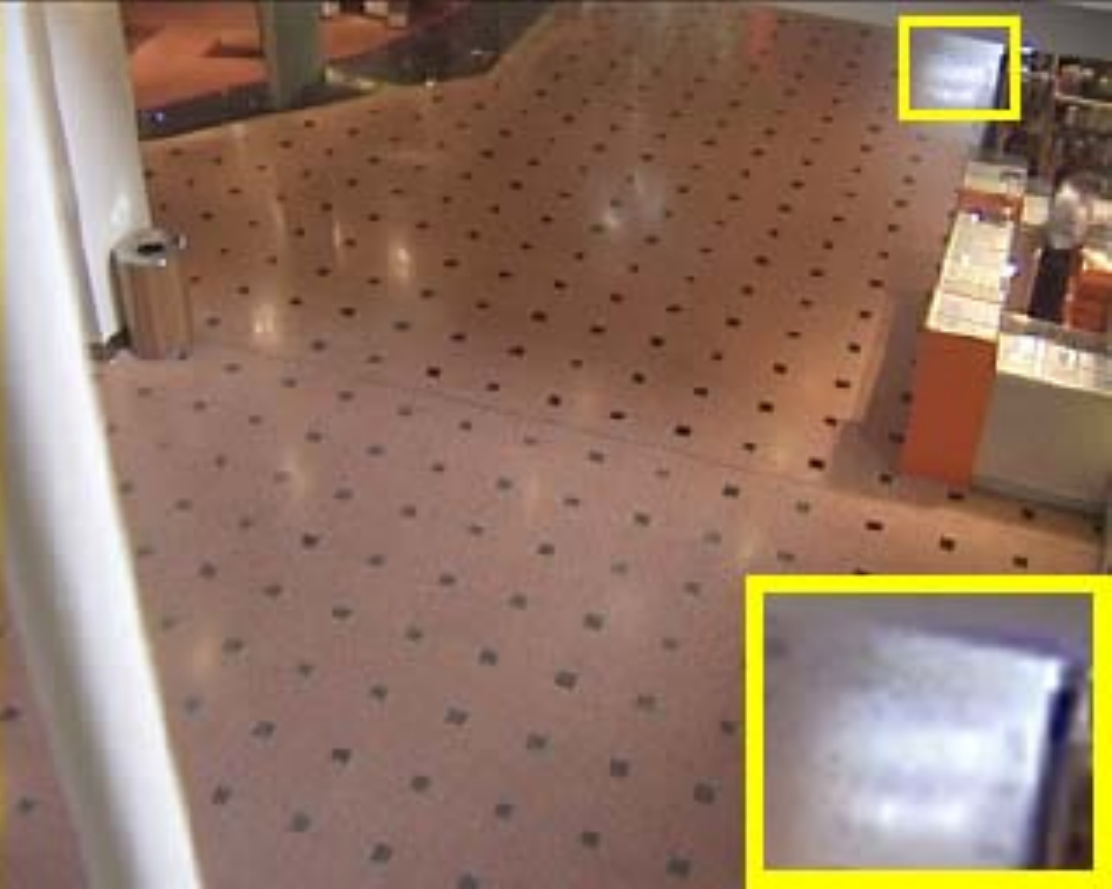}}
\caption{Background modeling results on two frames from Hall and ShoppingMall video sequences respectively when $\rho = 0.2$. From left column to right column, there are respectively the original frames, the input frames, and the backgrounds reconstructed by RMC and RTC.}
\label{fig8}
\end{figure}

To make a further comparison of our algorithm and other methods, we perform additional experiments on the three video sequences by randomly dropping 80\% pixels for each frame. As there are an extremely small proportion of pixels are available, it makes no sense to detect the foreground objects. Instead, we are interested in the background reconstruction in this situation. Both RPCA and TRPCA suffer a failure, while RMC and RTC are capable of recovering the background pretty well. From Figure~\ref{fig8}, it can be seen that our algorithm outperforms RMC once again. Please see the second example in Figure~\ref{fig8}. It is in fact rather difficult to separate the woman as foreground object, because she stands for a while, moves away, and then return very soon. RMC can not separate this person at all. Despite some artifacts, our method provides a much clearer background.

\subsection{Traffic Volume Estimation}
\label{sec7:sub4}

In intelligence transportation systems, traffic flow data, such as traffic volumes, occupancy rates and flow speeds, are usually contaminated by missing values and outliers due to the hardware or software malfunctions. In this experiment, we apply our method to estimation of traffic flow volume from incomplete and noisy measurements.

The data used here are collected by a detector (No.314521) located on SR160-N, Sacramento County, California, from March 1 to May 30, 2011 and can be downloaded from the  Caltrans Performance Measurement System (PeMS)\footnote{\url{http://pems.dot.ca.gov/}}. Since the data are recorded every 5 minutes,  it can be mapped to a third-order tensor $\ten{X}$ of size 7(day) $\times$ 288(time) $\times$ 8(week), which have a low-rank structure because of the periodicity~\cite{Tan2013a,Acar2010}. Therefore, traffic volume estimation can be modeled as a (low-rank) tensor completion problem.

\begin{figure}[!t]
\centering
\includegraphics[width=0.7\textwidth]{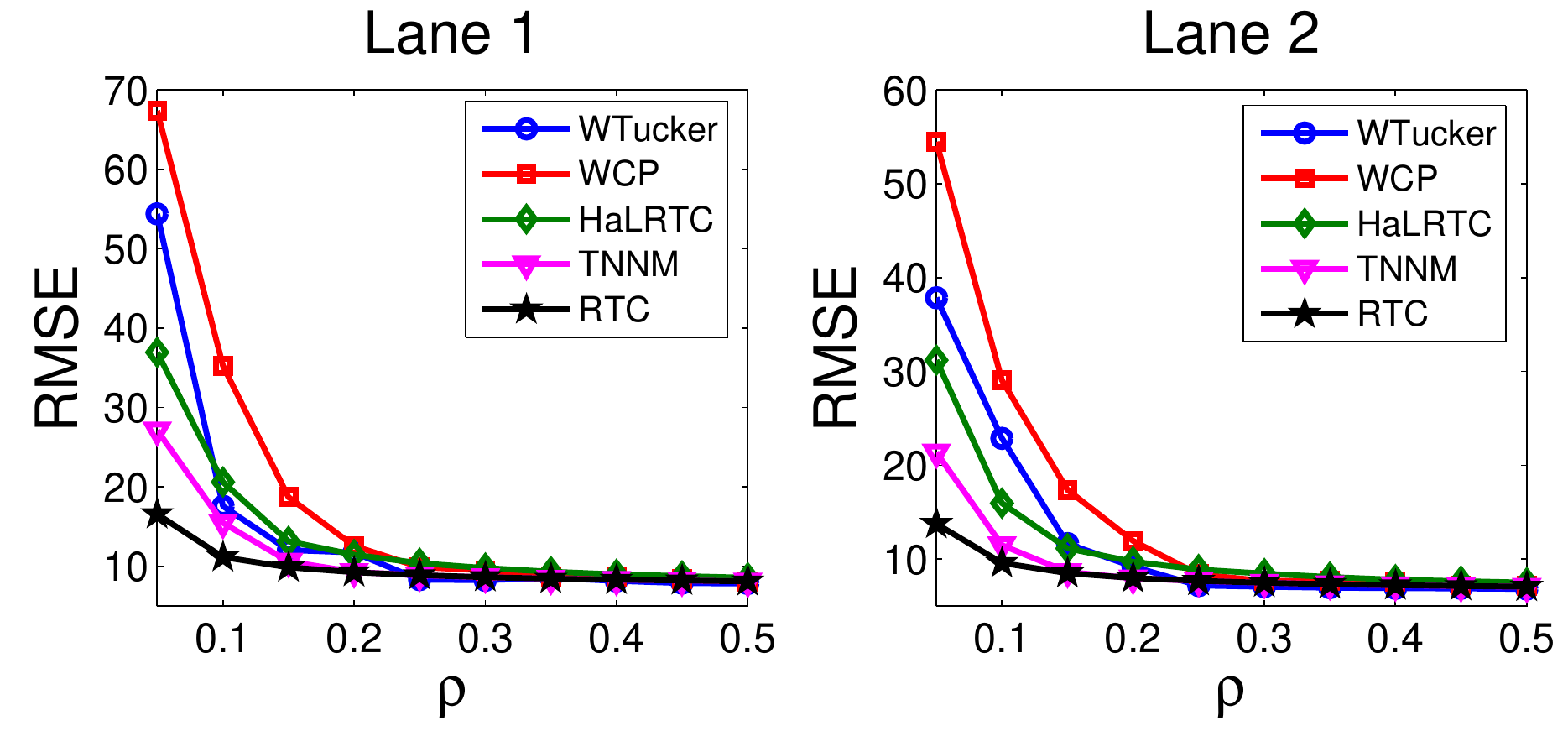}
\caption{Comparison of the estimation results obtained by all the methods in terms of RMSE on traffic data with different sampling rates.}
\label{fig9}
\end{figure}

We randomly sample $\rho$ percentage of the traffic data, and compare the recovery behavior of our approach with several typical tensor completion methods, WTucker~\cite{Filipovic2013}\footnote{\url{http://www.lair.irb.hr/ikopriva/marko-filipovi.html}}, WCP~\cite{Acar2010}\footnote{\url{http://www.sandia.gov/~tgkolda/TensorToolbox/index-2.6.html}}, HaLRTC~\cite{Liu2013} and TNNM~\cite{Zhang2014}. WTucker and WCP are the weighted variants of the classic Tucker and CP decomposition respectively, which have been originally proposed for tensor decomposition with missing values.
Following~\cite{Tan2013a}, we fix the $n$-rank $r_1 = r_2 = r_3 = 2$ for WTucker and set the CP-rank $r = 2$ for WCP. We empirically choose larger $\lambda = 20/\sqrt{n_{(1)}n_3}$\footnote{We find that the performance of our method is not good enough when $\lambda$ is set to the default value $\lambda = 1/\sqrt{n_{(1)}n_3}$ and the empirical setting $\lambda = 20/\sqrt{n_{(1)}n_3}$ allows our method to achieve appealing results.} for our method, since the outliers are extremely sparse in the traffic data. Figure~\ref{fig9} gives the results obtained by all the methods in terms of the Root Mean Square Error (RMSE) defined by
\begin{equation}
\textup{RMSE} = \sqrt{\frac{1}{|\Omega^\perp|} \sum_{(i,j,k) \in \Theta}\big(\ten{L}_{ijk} - \ten{X}_{ijk}\big)^2}, \nonumber
\end{equation}
where $|\Omega^\perp|$ is the total number of entries in the unobserved set $\Omega^\perp$. As expected, our method always outshines other approaches, especially in the case of low sampling rate, say $\rho \leq 0.2$.

\section{Conclusions}
\label{sec8}

In this work, we conduct a rigourous study for the RTC problem which aims to learn a low-tubal-rank tensor from partial observations that are arbitrarily corrupted. Our study rests heavily on recently proposed t-SVD and associated algebraic framework, in which we can define the tubal rank and tubal nuclear norm for tensors. We propose a new group of tensor incoherence conditions which are natural and elegant extensions of the corresponding matrix incoherence conditions respectively and are much weaker than those given by~\cite{Lu2016}. Under these conditions, we show that one can exactly recover a third-order tensor having low tubal-rank with high probability and establish a theoretical bound for exact recovery when using a convex optimization algorithm. Numerical experiments verify our theoretical analysis and the real-world applications demonstrate the superiority of our method over other existing approaches.

Our results confirm again that the t-SVD algebraic framework, which we outline in Section~\ref{sec3}, is more appropriate to capture the low-rank structure in multidimensional data. This suggests that it is very interesting to apply our model and algorithm to other possible applications, such as face recognition, web data mining and bioinformatics. Considering that real data routinely lie in thousands or even billions of dimensions, the computational cost of our method may become expensive. We are require to develop fast algorithms for low-tubal-rank tensor recovery and will explore this important direction in our future work.

\bibliographystyle{plain}
\bibliography{2017_RTC}

\appendix
\section{Proof of Lemma~\ref{lem1}}
\label{app1}

To prove the Lemma~\ref{lem1}-\ref{lem3}, we first introduce the following theorem
\begin{lemma}[Noncommutative Bernstein Inequality\cite{Recht2011}]
Let $\mtx{X}_1, \mtx{X}_2, \dots,\mtx{X}_n$ be independent zero-mean random matrices of dimension $d_1 \times d_2$, and $\rho_k^2 = \max\{\|\E[\mtx{X}_k \mtx{X}_k^T]\|, \|\E[\mtx{X}_k^T \mtx{X}_k]\|\}$. Suppose $\|\mtx{X}_k\| \leq M$ almost surely for all $k$. Then for any $\tau > 0$,
\begin{equation}
\PP\bigg[\bigg\|\sum_{k=1}^n \mtx{X}_k\bigg\| > \tau \bigg] \leq (d_1 + d_2) \exp\bigg(\frac{-\tau^2/2}{\sum_{k=1}^n \rho_k^2 + M\tau/3}\bigg). \label{eq49}
\end{equation}
\label{lem5}
\end{lemma}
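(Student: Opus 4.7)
The plan is to follow the Ahlswede--Winter/Tropp-style matrix Chernoff argument. The first reduction is from the rectangular case to the Hermitian case via the standard Hermitian dilation
\[
\mathcal{D}(\mtx{X}) = \begin{pmatrix} \mtx{0} & \mtx{X} \\ \mtx{X}^{H} & \mtx{0} \end{pmatrix},
\]
which is $(d_1+d_2)\times(d_1+d_2)$ Hermitian, linear in $\mtx{X}$, and satisfies $\lambda_{\max}(\mathcal{D}(\mtx{X}))=\|\mtx{X}\|$. A direct block computation gives $\mathcal{D}(\mtx{X}_k)^2 = \bdiag(\mtx{X}_k\mtx{X}_k^H,\mtx{X}_k^H\mtx{X}_k)$, so $\|\E\mathcal{D}(\mtx{X}_k)^2\| = \rho_k^2$, and $\|\mathcal{D}(\mtx{X}_k)\|\le M$ almost surely. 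Hence it suffices to establish the bound for a sum $\mtx{Y}=\sum_k \mtx{Y}_k$ of independent zero-mean Hermitian matrices of dimension $d=d_1+d_2$.

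Second, I would invoke the matrix Laplace transform inequality: for any $\theta>0$,
\[
\PP[\lambda_{\max}(\mtx{Y})\ge\tau] \;\le\; e^{-\theta\tau}\,\E\,\tr\exp(\theta\mtx{Y}).
\]
Third---the heart of the argument---I would control the matrix moment generating function using Lieb's concavity theorem, which asserts that $\mtx{A}\mapsto\tr\exp(\mtx{H}+\log\mtx{A})$ is concave on the positive-definite cone. Combined with Jensen's inequality and the tower property along the filtration generated by $\mtx{Y}_1,\ldots,\mtx{Y}_k$, this yields the noncommutative subadditivity of cumulants,
\[
\E\,\tr\exp\!\Bigl(\sum_k \theta\mtx{Y}_k\Bigr) \;\le\; \tr\exp\!\Bigl(\sum_k \log\E\exp(\theta\mtx{Y}_k)\Bigr).
\]

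Fourth, I would bound each individual log-MGF by a Taylor expansion. Using $\|\mtx{Y}_k\|\le M$, $\E\mtx{Y}_k=\mtx{0}$, and the operator inequality $e^x \le 1+x+\frac{x^2/2}{1-|x|/3}$ for $|x|<3$, one obtains for $0<\theta<3/M$
\[
\log\E\exp(\theta\mtx{Y}_k) \;\preceq\; \frac{\theta^2/2}{1-\theta M/3}\,\E\mtx{Y}_k^2,
\]
and therefore $\tr\exp(\sum_k \log\E\exp(\theta\mtx{Y}_k)) \le d\exp\!\bigl(\frac{\theta^2/2}{1-\theta M/3}\sum_k\rho_k^2\bigr)$. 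Assembling the pieces yields
\[
\PP[\lambda_{\max}(\mtx{Y})\ge\tau] \;\le\; d\,\exp\!\Bigl(-\theta\tau+\frac{\theta^2/2}{1-\theta M/3}\sum_k\rho_k^2\Bigr),
\]
and optimizing over $\theta$ by the choice $\theta=\tau/(\sum_k\rho_k^2+M\tau/3)$ produces the bound in~(\ref{eq49}) with the dimensional prefactor $d=d_1+d_2$. A symmetric argument applied to $-\mtx{Y}$ and a union bound cover the two-sided norm event; for the dilation, $\|\mathcal{D}(\mtx{X})\|=\lambda_{\max}(\mathcal{D}(\mtx{X}))$ already gives the spectral norm, so the one-sided bound suffices.

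The main obstacle is the noncommutativity step: unlike the scalar case, $e^{\mtx{A}+\mtx{B}}\ne e^{\mtx{A}}e^{\mtx{B}}$ in general, so independence does \emph{not} factor the MGF. Lieb's concavity theorem is the nontrivial ingredient that rescues subadditivity at the level of $\tr\exp$, and combining it with conditional expectations along the filtration requires some care. Once this reduction is achieved, the remaining calculations are direct analogues of the classical scalar Bernstein proof.
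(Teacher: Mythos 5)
The paper does not actually prove this lemma: it is imported verbatim from the cited reference (Recht's \emph{A Simpler Approach to Matrix Completion}), so there is no in-paper argument to compare against. Your proposal is a correct and essentially complete outline of a valid proof, and every step checks out: the Hermitian dilation does satisfy $\mathcal{D}(\mtx{X}_k)^2=\bdiag(\mtx{X}_k\mtx{X}_k^H,\mtx{X}_k^H\mtx{X}_k)$ so that $\|\E\,\mathcal{D}(\mtx{X}_k)^2\|=\rho_k^2$ and $\lambda_{\max}(\mathcal{D}(\cdot))$ already equals the spectral norm (hence, as you note, no two-sided union bound is needed and the prefactor comes out as exactly $d_1+d_2$); the exponential moment bound via $e^x\le 1+x+\frac{x^2/2}{1-|x|/3}$ gives the Bernstein-type variance/$M$ trade-off; and your choice $\theta=\tau/(\sum_k\rho_k^2+M\tau/3)$ indeed reproduces the exponent $\frac{-\tau^2/2}{\sum_k\rho_k^2+M\tau/3}$ and satisfies $\theta<3/M$. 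The one substantive difference from the cited source is the middle step: you route the subadditivity of the matrix cumulant generating function through Lieb's concavity theorem (Tropp's approach), whereas the reference follows the Ahlswede--Winter argument, peeling off one summand at a time with the Golden--Thompson inequality $\tr\, e^{\mtx{A}+\mtx{B}}\le\tr(e^{\mtx{A}}e^{\mtx{B}})$ and iterated conditioning. Both yield the stated bound with the same constants here; Lieb's theorem is the deeper tool but gives the cleaner one-shot inequality $\E\tr\exp(\sum_k\theta\mtx{Y}_k)\le\tr\exp(\sum_k\log\E\, e^{\theta\mtx{Y}_k})$, while Golden--Thompson is more elementary and suffices because the per-summand MGF norms still multiply into $\exp\bigl(\frac{\theta^2/2}{1-\theta M/3}\sum_k\rho_k^2\bigr)$. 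Either way your argument is sound.
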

\textbf{Proof of Lemma~\ref{lem1}:}
First, we note that
\begin{equation}
\E(\rho^{-1}\PT\PO\PT) = \rho^{-1}\PT\E(\PO)\PT = \PT, \nonumber
\end{equation}
which implies
\begin{equation}
\E(\rho^{-1}\PT\PO\PT - \PT) = 0,\,\,\,\textup{and}\,\,\, \E(\rho^{-1}\overline{\PT\PO\PT} - \overline{\PT}) = 0.\nonumber
\end{equation}
Our goal is to prove the operator $\rho^{-1}\overline{\PT\PO\PT}$ is not far away from its expected value $\overline{\PT}$ in the spectral norm using the Noncommutative Bernstein Inequality.

Give any tensor $\ten{Z} \in \R^{n_1 \times n_2 \times n_3}$,  we can decompose $\PT(\ten{Z})$ as
\begin{equation}
\PT(\ten{Z}) =  \sum_{i,j,k} \<\PT(\ten{Z}), \eijk\> \eijk =  \sum_{i,j,k} \<\ten{Z}, \PT(\eijk)\> \eijk, \nonumber
\end{equation}
which gives
\begin{equation}
\rho^{-1}\PT\PO\PT(\ten{Z})  = \sum_{i,j,k} \rho^{-1}\delta_{ijk}\<\ten{Z}, \PT(\eijk)\>\PT(\eijk), \nonumber
\end{equation}
and implies
\begin{equation}
\rho^{-1}\overline{\PT\PO\PT(\ten{Z})} = \sum_{i,j,k} \rho^{-1}\delta_{ijk}\<\ten{Z}, \PT(\eijk)\>\overline{\PT(\eijk)}. \nonumber
\end{equation}
Define the operator $\mathcal{T}_{ijk}$ which maps $\ten{Z}$ to $\rho^{-1}\delta_{ijk}\<\ten{Z}, \PT(\eijk)\>\PT(\eijk)$. Note that this operator is rank one and has operator norm $\|\mathcal{T}_{ijk}\|_{\textup{op}} = \|\overline{\mathcal{T}_{ijk}}\|_{\textup{op}} = \frac{1}{\rho}\|\PT(\eijk)\|^2_F$, and $\|\PT\|_{\textup{op}} = \|\overline{\PT}\|_{\textup{op}} \leq 1$. Furthermore, we have $\PT = \sum_{ijk} \mathcal{T}_{ijk}$ and $\E(\mathcal{T}_{ijk}) = \frac{1}{n_1n_2n_3}\PT$. Hence, we have
\begin{equation}
\Big\|\overline{\mathcal{T}_{ijk}} - \frac{1}{n_1n_2n_3}\overline{\PT}\Big\|_{\textup{op}} \leq \max\Big\{\frac{1}{\rho}\|\PT(\eijk)\|^2_F, \frac{1}{n_1n_2n_3}\Big\} \leq  \frac{2\mu r}{n_{(2)}\rho}, \nonumber
\end{equation}
where the first inequality uses the fact that if $\mtx{A}$ and $\mtx{B}$ are positive semidefinite matrices, then $\|\mtx{A} - \mtx{B}\| \leq \max\{\|\mtx{A}\|, \|\mtx{B}\|\}$.

On the other hand, we know
\begin{align}
\Big\|\E\Big[\Big(\overline{\mathcal{T}_{ijk}} - \frac{1}{n_1 n_2 n_3}\overline{\PT}\Big)^2\Big]\Big\| & \leq \Big\|\E\Big[\frac{1}{\rho}\|\PT(\eijk)\|^2_F\overline{\mathcal{T}_{ijk}}\Big] - \frac{2}{n_1 n_2 n_3}\overline{\PT}\E(\overline{\mathcal{T}_{ijk}}) + \frac{1}{n^2_1n^2_2n^2_3}\overline{\PT}]\Big\| \nonumber\\
& =  \Big\|\frac{1}{\rho}\|\PT(\eijk)\|^2_F\frac{1}{n_1n_2n_3}\overline{\PT} - \frac{1}{n^2_1 n^2_2 n^2_3}\overline{\PT}\Big\| \nonumber\\
& <  \Big(\frac{1}{\rho} \frac{2\mu r}{n_{(2)}} \frac{1}{n_1 n_2 n_3}\Big)\|\overline{\PT}\| \nonumber\\
& \leq  \frac{2\mu r}{n_{(1)}n^2_{(2)}n_3\rho}. \nonumber
\end{align}
Letting $\tau = \sqrt{\frac{C_0 \mu r \log(n_{(1)}n_3)}{n_{(2)}\rho}} \leq \epsilon$ and using Lemma~\ref{lem5}, we have
\begin{align}
\PP[\|\rho^{-1}\PT\PO\PT - \PT\|_{op} > \tau]
& =  \PP[\|\rho^{-1}\overline{\PT\PO\PT} - \overline{\PT}\|_{op} > \tau] \nonumber\\
& =  \PP\Big[\Big\|\sum_{i,j,k} \big(\overline{\mathcal{T}_{ijk}} - \frac{1}{n_1n_2n_3}\overline{\PT}\big)\Big\|_{op}> \tau \Big] \nonumber\\
& \leq  2n_{(1)}n_3\exp\Bigg(\frac{-\frac{C_0\mu r\log(n_{(1)}n_3)}{n_{(2)}\rho}}{\frac{2\mu r}{n_{(2)}\rho} + \frac{2\mu r}{3n_{(2)}\rho}}\Bigg) \nonumber\\
& =  2(n_{(1)} n_3)^{1-\frac{3}{16}C_0}, \nonumber
\end{align}
which means
\begin{align}
\PP[\|\rho^{-1}\PT\PO\PT - \PT\|_{op} \leq \epsilon]
& \geq  \PP[\|\rho^{-1}\PT\PO\PT - \PT\|_{op} \leq \tau] \nonumber\\
& \geq  1 - 2(n_{(1)}n_3)^{1-\frac{3}{16}C_0}.\nonumber
\end{align}

\section{Proof of Lemma~\ref{lem2}}
\label{app2}

\begin{proof}
Observe that
\begin{equation}
\rho^{-1}\PT\PO\PT(\ten{Z}) = \sum_{i,j,k} \rho^{-1} \delta_{ijk} \ten{Z}_{ijk} \PT(\eijk). \nonumber
\end{equation}
So, the $(a, b, c)$th entry of $\rho^{-1}\PT\PO\PT(\ten{Z}) - \PT(\ten{Z})$ is given by
\begin{equation}
\<\rho^{-1}\PT\PO\PT(\ten{Z}) - \PT(\ten{Z}), \eabc\> = \sum_{i,j,k} \Big(\frac{\delta_{ijk}}{\rho} - 1\Big) \ten{Z}_{ijk} \<\PT(\eijk), \eabc\>. \nonumber
\end{equation}
We define $\ten{H}_{ijk}:=\Big(\frac{\delta_{ijk}}{\rho} - 1\Big) \ten{Z}_{ijk} \<\PT(\eijk), \eabc\>$ and have that
\begin{equation}
|\ten{H}_{ijk}| \leq \frac{1}{\rho}\|\ten{Z}\|_{\infty}\|\PT(\eijk)\|_F\|\PT(\eabc)\|_F \leq \frac{2\mu r}{n_{(2)}\rho}\|\ten{Z}\|_{\infty}. \nonumber
\end{equation}
We also have
\begin{align}
\Big|\E\Big[\sum_{i,j,k} \ten{H}^2_{ijk}\Big]\Big| & \leq \frac{1-\rho}{\rho}\|\ten{Z}\|^2_{\infty} \sum_{i,j,k}|\<\PT(\eijk),\eabc\>|^2 \nonumber\\
& =  \frac{1-\rho}{\rho}\|\ten{Z}\|^2_{\infty} \|\PT(\eabc)\|^2_F \nonumber\\
& \leq  \frac{2\mu r}{n_{(2)}\rho} \|\ten{Z}\|^2_{\infty}. \nonumber
\end{align}
Letting $\tau = \sqrt{\frac{C_0 \mu r \log(n_{(1)}n_3)}{n_{(2)}\rho}}  \leq \epsilon$ and using Lemma~\ref{lem5}, we obtain
\begin{align}
&\PP[(\rho^{-1}\PT\PO\PT(\ten{Z}) - \PT(\ten{Z}))_{abc} > \tau \|\ten{Z}\|_{\infty}] \nonumber\\
& \leq  2\exp\Bigg(\frac{-\frac{C_0 \mu r \log(n_{(1)}n_3)}{n_{(2)}\rho}\|\ten{Z}\|^2_{\infty}}{\frac{2\mu r}{n_{(2)}\rho}\|\ten{Z}\|^2_{\infty} + \frac{2\mu r}{3n_{(2)}\rho}\|\ten{Z}\|^2_{\infty}}\Bigg) \nonumber\\
& \leq  2(n_{(1)}n_3)^{-\frac{3C_0}{16}}. \nonumber
\end{align}
Then using the union bound on every $(a, b, c)$th entry, we have $\|(\rho^{-1}\PT\PO\PT - \PT)\ten{Z}\|_{\infty} \leq \epsilon \|\ten{Z}\|_{\infty}$ holds with probability at least $1-2n_{(1)}^{-(\frac{3C_0}{16}-2)}n_3^{-(\frac{3C_0}{16}-1)}$.
\end{proof}

\section{Proof of Lemma~\ref{lem3}}
\label{app3}

\begin{proof}
Observe that
\begin{equation}
\rho^{-1}\PT\PO\PT(\ten{Z}) - \ten{Z} = \sum_{i,j,k} \Big(\frac{1}{\rho}\delta_{ijk} - 1\Big)\ten{Z}_{ijk}\eijk. \nonumber
\end{equation}
We define $\ten{C}_{ijk}:=\Big(\frac{1}{\rho}\delta_{ijk} - 1\Big)\ten{Z}_{ijk}\eijk$ and have
\begin{equation}
\overline{\ten{C}_{ijk}} = \Big(\frac{1}{\rho}\delta_{ijk} - 1\Big)\ten{Z}_{ijk}\overline{\eijk}. \nonumber
\end{equation}
Note that $\E(\overline{\ten{C}_{ijk}}) = 0$ and $\|\overline{\ten{C}_{ijk}}\| \leq \frac{1}{\rho}\|\ten{Z}\|_{\infty}$. Moreover,
\begin{align}
\Big\|\E\Big[\sum_{i,j,k}\overline{\ten{C}_{ijk}}^{H}\overline{\ten{C}_{ijk}}\Big]\Big\|
& =  \Big\|\E\Big[\sum_{i,j,k}\ten{C}_{ijk}^{H} \ten{C}_{ijk}\Big]\Big\| \nonumber\\
& =  \Big\|\sum_{i,j,k}\ten{Z}^2_{ijk} \tc{e}_j \ast \tc{e}_j^{H} \E\Big (\frac{1}{\rho}\delta_{ijk} - 1\Big)^2\Big\| \nonumber\\
& =  \Big\|\frac{1-\rho}{\rho}\sum_{ijk} \ten{Z}^2_{ijk} \tc{e}_j \ast \tc{e}_j^{H}\Big\|. \nonumber
\end{align}
Since $\tc{e}_j \ast \tc{e}_j^H$ returns a zero tensor except for $(j, j, 1)$th entry equaling 1, we have
\begin{equation}
\Big\|\E\Big[\sum_{i,j,k}\overline{\ten{C}_{ijk}}^H\overline{\ten{C}_{ijk}}\Big]\Big\| = \frac{1-\rho}{\rho}\max_{j}\Big|\sum_{i,k}\ten{Z}^2_{ijk}\Big| \leq \frac{1}{\rho}n_{(1)}n_3\|\ten{Z}\|^2_{\infty}, \nonumber
\end{equation}
and
$\Big\|\E\Big[\sum_{i,j,k}\overline{\ten{C}_{ijk}}\,\overline{\ten{C}_{ijk}}^{H}\Big]\Big\|$ is bounded similarly. Then considering that
\begin{equation}
C'_0\sqrt{\frac{n_{(1)}n_3\log(n_{(1)}n_3)}{\rho}}\|\ten{Z}\|_{\infty} \|\overline{\ten{C}_{ijk}}\| \leq \frac{C'_0}{\sqrt{C_0}\rho}\sqrt{n_{(1)} n_{(2)}n^2_3} \|\ten{Z}\|^2_{\infty}  \leq \frac{C'_0}{\sqrt{C_0}} n_{(1)} n_3 \frac{1}{\rho}\|\ten{Z}\|^2_{\infty}, \nonumber
\end{equation}
and using Lemma~\ref{lem5}, we have
\begin{align}
& \PP\Big[\|\rho^{-1}\PO(\ten{Z}) - \ten{Z}\|> C'_0\sqrt{\frac{n_{(1)}n_3\log(n_{(1)}n_3)}{\rho}}\|\ten{Z}\|_{\infty}\Big] \nonumber\\
& = \PP\Big[\|\rho^{-1}\overline{\PO(\ten{Z})} - \overline{\ten{Z}}\| > C'_0\sqrt{\frac{n_{(1)}n_3\log(n_{(1)}n_3)}{\rho}}\|\ten{Z}\|_{\infty}\Big] \nonumber\\
& = \PP\Big[\Big\|\sum_{ijk} \overline{\ten{C}_{ijk}}\Big\|> C'_0\sqrt{\frac{n_{(1)}n_3\log(n_{(1)}n_3)}{\rho}}\|\ten{Z}\|_{\infty}\Big] \nonumber\\
& \leq  2n_{(1)}n_3 \exp\Bigg(\frac{\frac{-(C'_0)^2n_{(1)}n_3\log(n_{(1)}n_3)}{2\rho}\|\ten{Z}\|^2_{\infty}}{\frac{n_{(1)}n_3\|\ten{Z}\|^2_{\infty}}{\rho}}\Bigg) \nonumber\\
& = 2(n_{(1)}n_3)^{1-\frac{(C'_0)^2}{2}}, \nonumber
\end{align}
which implies that $\|(\OpId - \rho^{-1}\PO)\ten{Z}\| \leq C'_0\sqrt{\frac{n_{(1)}n_3\log(n_{(1)}n_3)}{\rho}}\|\ten{Z}\|_{\infty}$ holds with probability at least $1 - 2(n_{(1)}n_3)^{1-\frac{(C'_0)^2}{2}}$.
\end{proof}

\end{document}